\DeclareSymbolFontAlphabet{\amsmathbb}{AMSb}%
\theoremstyle{plain}
\newtheorem{theorem}{Theorem}
\newtheorem{lemma}[theorem]{Lemma}
\newtheorem{proposition}[theorem]{Proposition}
\newtheorem*{assumption*}{\assumptionnumber}
\providecommand{\assumptionnumber}{}
\newtheorem*{theorem*}{Theorem}
\newtheorem*{lemma*}{Lemma}
\newtheorem*{corollary*}{Corollary}
\newtheorem*{proposition*}{Proposition}
\newtheorem*{claim*}{Claim}
\newtheorem*{fact*}{Fact}
\theoremstyle{definition}
\newtheorem*{definition*}{Definition}
\newtheorem*{remark*}{Remark}
\newtheorem*{example*}{Example}
\newcommand{\ignore}[1]{}
\DeclareMathOperator*{\argmin}{arg\,min}
\newcommand{\be}{\begin{align}}
\newcommand{\en}{\end{align}}
\newcommand{\ben}{\begin{align*}}
\newcommand{\enn}{\end{align*}}
\newcommand{\norm}[1]{\left\|#1\right\|}
\newcommand{\HH}{\mathcal{H}}
\newcommand{\reals}{\mathbb{R}}
\def\reals{{\mathcal R}}
\def\trace{\textup{Tr}}
\def\reals{{\mathbb R}}
\def\bold0{\mathbf{0}}
\def\be{\mathbf{e}}
\newcommand{\eps}{\varepsilon}
\newcommand{\sumin}{\sum_{i=1}^n}
\newcommand{\Expect}{\mathbb{E}}
\newenvironment{proofarg}[1]{%
 \proof}{\endproof}
\newcommand{\startfoo}{%
    \par\medskip
    \begin{mdframed}[linewidth=1pt]%
    \let\figure\figurehere
    \let\endfigure\endfigurehere
    \ignorespaces
}
\newcommand{\stopfoo}{%
    \unskip
    \end{mdframed}%
    \par\medskip
}
\DeclareMathOperator*{\argmax}{\arg\!\max}
\newcommand{\normp}[2]{\left\|#1\right\|_{#2}}
\newcommand{\STAB}[1]{\begin{tabular}{@{}c@{}}#1\end{tabular}}
\newcommand{\du}{\mathcal{D}_{\textup{u}}}
\newcommand{\wdu}{\widehat{\mathcal{D}}_{\textup{u}}}
\newcommand{\dtrain}{\mathcal{D}_{\textup{train}}}
\newcommand{\ec}{\emph{$\epsilon$}-coreset}
\newcommand{\ecs}{\emph{$\epsilon$}-coresets}
\begin{document}

\title{Data Summarization via Bilevel Optimization}

\author{\name Zal\'an Borsos \email zalan.borsos@inf.ethz.ch \\
       \addr Department of Computer Science\\
       ETH Zurich\\
       Universit\"atstrasse 6, 8092 Zurich, Switzerland
       \AND
      \name Mojm\'ir Mutn\'y \email mojmir.mutny@inf.ethz.ch \\
       \addr Department of Computer Science\\
       ETH Zurich\\
       Universit\"atstrasse 6, 8092 Zurich, Switzerland
       \AND
      \name Marco Tagliasacchi \email mtagliasacchi@google.com\\
       \addr Google Research\\
       Brandschenkestrasse 110, 8002 Zurich, Switzerland
       \AND
      \name Andreas Krause \email krausea@ethz.ch \\
       \addr Department of Computer Science\\
       ETH Zurich\\
       Universit\"atstrasse 6, 8092 Zurich, Switzerland}

\editor{-}

\maketitle

\begin{abstract}
The increasing availability of massive data sets poses a series of challenges for machine learning. Prominent among these is the need to learn models under hardware or human resource constraints. In such resource-constrained settings, a simple yet powerful approach is to operate on small subsets of the data. Coresets are weighted subsets of the data that provide approximation guarantees for the optimization objective. However, existing coreset constructions are highly model-specific and are limited to simple models such as linear regression, logistic regression, and $k$-means.
In this work, we propose a generic coreset construction framework that formulates the coreset selection as a cardinality-constrained bilevel optimization problem. In contrast to existing approaches, our framework does not require model-specific adaptations and applies to any twice differentiable model, including neural networks. We show the effectiveness of our framework for a wide range of models in various settings, including training non-convex models online and batch active learning.
\end{abstract}

\begin{keywords}
  data summarization, coresets, bilevel optimization, continual learning, batch active learning
\end{keywords}

\section{Introduction}

\looseness -1 Learning models on massive data sets faces several challenges. From a computational perspective, specific hardware resource constraints must be met: the data is first loaded into the system's main memory with limited capacity, then it is processed by algorithms with possibly superlinear space or time complexity. Moreover, commonly used specialized hardware for accelerating data processing, such as GPUs, introduces another layer of constraints due to their limited memory. A simple yet powerful approach for tackling these computational challenges is to operate on small subsets of the data sampled \emph{uniformly at random}---this idea is key to the success of stochastic optimization. However, real-world settings often involve rare but essential events with a significant impact on the optimization objective that are unlikely to be represented in a small uniform summary---a drawback that can be remedied by constructing a more \emph{representative} summary.

\looseness -1 Some settings face human resource constraints. A prominent example of such a setting is  batch active learning, where the human expert is responsible for providing labels for the selected batch of points in each round of active learning.  The cost and the limited availability of human attention impose explicit constraints on the number of points that can be labeled, accentuating the importance of compact and informative summaries.

\looseness -1 \emph{Coresets} are weighted subsets of the data that provide approximation guarantees for the optimization objective---the strongest form of these are \emph{uniform} \emph{multiplicative} guarantees. On the one hand, coresets with uniform approximation guarantees are a versatile tool for obtaining provably good solutions for optimization problems on massive data sets in batch, distributed, and streaming settings.
On the other hand, such strong approximation guarantees are hard or even impossible to obtain for practical coreset sizes for some relevant problems. Consequently, coresets with uniform approximation guarantees are \emph{limited to simple models} such as linear regression, logistic regression, $k$-means, and Gaussian mixture models.

\looseness -1 Moreover, existing coreset construction strategies are either \emph{model-specific} or require model-specific derivations for instantiating their construction. For example, the popular framework of \citet{feldman2011unified} for constructing coresets with uniform approximation guarantees relies on importance sampling based on upper bounds on the sensitivity of data points; the derivation of non-vacuous sensitivity upper bounds is a difficult model-specific task. 
 While several alternative coreset definitions have been proposed, no principled coreset construction has been shown to succeed for a wide range of models. Moreover, coresets remain underexplored in practically relevant settings that rely on data summarization, including batch active learning, compression, and the training of non-convex models online, in spite  of coresets being natural candidates for these settings.

 \looseness -1  In this work, we propose a \emph{generic coreset construction framework} for twice differentiable models. We show that our method is effective for a wide range of models in various resource-constrained settings. In particular: 
\begin{itemize}\setlength\itemsep{0.5em}
    \item We formulate the coreset construction as a cardinality-constrained \emph{bilevel optimization} problem that we solve by greedy forward selection and first-order methods. In contrast to existing coreset constructions, our framework applies to \emph{any} twice differentiable model and does not require model-specific modifications. 
    \item We point out connections to \emph{robust statistics} and \emph{experimental design}, present theoretical guarantees for our framework and offer several variants for improved scalability. We discuss various extensions, including generating joint coresets for multiple models.
    \item \looseness -1  We demonstrate the advantage of our framework over other data summarization techniques in extensive experimental studies, over a wide range of models and resource-constrained settings, such as \emph{continual learning, streaming} and \emph{batch active learning}.
\end{itemize}

    \looseness -1 This work is a significant extension of \citet{NEURIPS2020_aa2a7737} that demonstrated the effectiveness of the framework for building small coresets for neural networks with proxy models only. We  extend the framework to constructing large coresets directly for the target models by proposing several variants of the basic algorithm. We demonstrate the effectiveness of our approach for models with millions of parameters, including wide residual networks,  for which we show that we can compress CIFAR-10 by a factor of 2 and SVHN by a factor of 3 with less than $0.05\%$ loss of test accuracy. Furthermore, we offer several extensions to our framework, including constructing joint coresets for multiple models and dictionary selection for compressed sensing. The batch active learning application presented in this work is based on \citet{borsos2020semisupervised} with significant performance improvements.

\section{Background and Related Work}

\looseness -1 In this section, we present relevant works on coresets and other data summarization techniques. The list of presented approaches is by no means exhaustive---we refer to \citet{10.5555/2031416, phillips2016coresets, bachem2017practical, feldman2020introduction} for surveys about the topic.

Coresets are commonly defined as weighted subsets of the data. The types of theoretical guarantees provided by coresets, however, vary significantly. Consequently, a wide range of coresets construction algorithms have been proposed, the vast majority of which are applicable to a specific model only. The most common type of guarantees for coresets are uniform multiplicative approximation guarantees: for a given family of nonnegative real functions $\mathcal{F}$ on the space $\mathcal{X} \subseteq \mathbb{R}^d$, we want to find the coreset $C$ as the subset of the data $X=\{x_i\}_{i=1}^n$ and the associated weights $w$ such that for  $\delta, \, \epsilon \in (0,1)$,  $\left| \sum_{x\in C} f(x)  w(x)  -  \sum_{x\in X} f(x) \right| \leq \epsilon  \sum_{x\in X} f(x)$ holds with probability $1-\delta$  \emph{uniformly} for all $ f \in \mathcal{F}$---we refer to coresets with such guarantees as \ecs.

Earliest approaches for constructing {\ecs} appear in computational geometry \citep{agarwal2005geometric} and rely on exponential grids \citep{har2004coresets}.  \citet{feldman2011unified} provide a unified \ec-construction framework based importance sampling via the data points' sensitivity \citep{langberg2010universal}. While efficient and effective for several models such as $k$-median \citep{feldman2011unified}, logistic regression \citep{huggins2016coresets}, and Gaussian mixture models \citep{lucic2017training}, this framework relies bounding the sensitivity, which is a nontrivial, model-specific task. Moreover, the required coreset size  depends also on the pseudo-dimension of the function class $\mathcal{F}$, limiting the framework to low-complexity models (compared to the number of points $n$).

\looseness -1 Closely related to our work are coresets that require guarantees only with respect to the \emph{optimal solution} on coreset instead of uniform guarantees. The majority of constructions providing such guarantees are deterministic (in contrast to the sampling-based  sensitivity-framework): \citet{DBLP:conf/soda/BadoiuC03} provide a greedy forward selection of coresets for the minimum enclosing ball problem of size independent of both $n$ and $d$; based on the latter, \citet{JMLR:v6:tsang05a} propose the Core Vector Machine, which selects a superset of support vectors for SVM in linear time, in a forward greedy manner. \citet{DBLP:journals/talg/Clarkson10} points out that these approaches are instances of the Frank-Wolfe algorithm \citep{Frank1956}. When evaluating the optimal solution found on the coreset (in the unweighted case) on the full data, \citet{pmlr-v37-wei15} show that the resulting set function is submodular in the case of nearest neighbors and naive Bayes, hence greedy selection based on marginal gains guarantees a $1-1/e$ approximation factor to the cost of the solution on the best coreset of a given size.
Similarly to the case of \ecs, existing deterministic coreset construction algorithms are also highly model-specific, and no algorithm has been shown to succeed over a wide range of models.

\looseness -1 Other notable approaches to data summarization include Hilbert coresets \citep{campbell2019automated}, that formulates coreset selection as a sparse vector sum approximation in a Hilbert space with a chosen inner product. The key challenge is choosing an appropriate Hilbert space such that the resulting coreset is a good summary for the original problem. Instead of selecting subsets of the data, data set distillation \citep{wang2018data, lorraine2019optimizing, zhao2021dataset} synthesizes data points by optimizing the features to obtain the summary. Whereas data set distillation creates small summaries effectively, the optimization process is computationally intensive due to the large number of parameters (number of pixels). Consequently, widely used image classification data sets (e.g., CIFAR-10, SVHN, ImageNet) cannot be compressed using existing data distillation techniques without a significant loss in test accuracy compared to training on the full data.

Subset selection has also been formulated as a bilevel optimization problem. This is implicit in the work of \citet{pmlr-v37-wei15}, where the optimization problem can be collapsed into a single-level problem due to closed-form solutions.  Explicit bilevel formulations have been explored in the context of dictionary selection \citep{icml2010_073}. Furthermore, \citet{tapia2019} analyze sensor subset selection as a bilevel optimization problem.  While they use a similar strategy to the one developed here, we investigate considerably different settings of weighted data summarization for a wide range of models and applications.

Techniques related to coresets have also been explored in stochastic optimization. The goal in these works is to improve convergence by selecting better summaries for minibatches than uniform sampling. The challenge here is to design an effective but lightweight selection strategy with negligible computational cost compared to the optimization. With these considerations, \citet{mirzasoleiman2020coresets} propose greedy subset selection based on the submodular facility location problem. Concurrently to our work, \citet{glister} formulate the selection of points based on a bilevel optimization problem that is the unweighted equivalent of our proposed method (with the outer objective defined on the validation). We note that their solution based on Taylor expansion is equivalent to our proposed greedy forward selection with Hessians approximated by the identity matrix in the implicit gradient (Equation  \eqref{eq:implicit-gradient}).

\section{Coresets via Bilevel Optimization}

In this section, we propose a \emph{generic coreset construction framework} that does not rely on model-specific derivations while---in contrast to other coreset constructions---being effective also for advanced models such as deep neural networks. In the design of our generic framework, we focus  on approximation guarantees related to the \emph{solution} on the coreset: informally, we define a ``good'' coreset for a model to be a weighted subset of the data, such that when the model is trained on the coreset, it will achieve a low loss on the full data set. This informal statement translates naturally into a \emph{bilevel optimization} problem with cardinality constraints. We propose several approaches for tackling the resulting combinatorial optimization problem in this section, while we evaluate our proposed method empirically in Section \ref{sec:experiments} in a variety of settings.

\subsection{Problem Setup}

Let us consider a supervised setting with data set $\mathcal{D}=\{(x_i,y_i)\}_{i=1}^n$  and let us introduce the nonnegative weight vector $w=[w_1, \dots,w_n] \in  \reals_+^n$ for representing the coreset $C_w$. Here, $(x_i,y_i,w_i) \in  C_w$ iff $w_i > 0$, i.e., points with $0$ weights are not part of the coreset. Let us denote the model by $h$, its parameters by  $\theta$ and let $\ell$ be the loss function. Furthermore, for brevity, let $\ell_i(\theta) :=\ell(h_\theta(x_i), y_i)$.

\looseness -1 We can formalize our coreset requirement stated in the introduction: we want to find a coreset $C_{\hat{w}}$ of size $m$ such that if we solve the weighted empirical risk minimization (ERM)
problem on the coreset, $\theta_{\hat{w}}^* \in \argmin_\theta \sumin \hat{w}_i\ell_i(\theta)$, then $\theta_{\hat{w}}^*$ is a good solution for the ERM problem on the full data set  $\min_\theta  \sumin \ell_i(\theta) $. We can write this optimization problem as
\begin{equation}\label{eq:bilevel-coreset}
            \begin{aligned}
             \hat{w} \in \argmin_{w\in\reals_+^n,\,||w||_0\leq m}  & \sumin \ell_i(\theta^*(w)) \\
             \textrm{s.t.} \quad & \theta^*(w) \in \argmin_\theta \sumin w_i\ell_i(\theta) ,
            \end{aligned}
        \end{equation}
\looseness -1 where the $m$-sparse vector $\hat{w}$ indicates the selected points' indices and weights at nonzero positions. We note that, although we formulated the problem in the supervised setting, the construction can be easily adapted to semi-supervised and unsupervised settings, as we will demonstrate in Section \ref{sec:experiments}.
Problem~\eqref{eq:bilevel-coreset} is an instance of {\em bilevel optimization}, where we minimize an {\em outer} objective, here $\sumin \ell_i(\theta^*(w))$, which in turn depends on the solution $\theta^*(w)$ to an {\em inner} optimization problem $\argmin_\theta \sumin w_i\ell_i(\theta)$. Before presenting our proposed algorithm for solving \eqref{eq:bilevel-coreset}, we discuss some background on bilevel optimization and its importance in machine learning.

\subsection{Background on Bilevel Optimization} \label{subsec:bilevel-coresets-background}

Modeling hierarchical decision-making processes \citep{von1952theory, vicente1994bilevel}, bilevel optimization has witnessed an increasing popularity in machine learning recently. Bilevel optimization has found applications ranging from meta-learning \citep{finn2017model, li2017meta}, to hyperparameter optimization \citep{pedregosa2016hyperparameter, pmlr-v80-franceschi18a} and neural architecture search \citep{liu2018darts}. 

Suppose $g:\Theta \times \Omega  \rightarrow \reals$ and $f:\Theta \times \Omega  \rightarrow \reals$ are continuous functions, then we call

\begin{equation}\label{eq:bilevel-general}
            \begin{aligned}
            \min_{w \in \Omega} \quad  G(w):=&g(\theta^*(w), w)\\
            \textrm{s.t.} \quad & \theta^*(w) \in \argmin_{\theta \in \Theta} f(\theta, 
            w)
            \end{aligned}
\end{equation}
a bilevel optimization problem with the outer (upper level) objective $\min_{w} \! g(\theta^*\!(w), w)$ and the inner (lower level) objective $\theta^*\!(w) \!\in\! \argmin_{\theta} \! f(\theta,  w)$.  

\looseness -1 Bilevel programming, even in the linear case, is generally NP-hard \citep{vicente1994descent}.  Despite the challenge of non-convexity, first-order methods for solving bilevel optimization problems are successful in many applications \citep{finn2017model, pedregosa2016hyperparameter, liu2018darts}.  A common simplifying assumption for achieving asymptotic convergence guarantees is that the inner problem's solution set in Equation \eqref{eq:bilevel-general} is a \emph{singleton} \citep{pmlr-v80-franceschi18a}, fulfilled if $f$ is strongly convex in $\theta$. 

\looseness -1 First-order bilevel optimization solvers can be further categorized based on how they evaluate or approximate the implicit gradient $\nabla_w G(w)$, for which it is necessary to consider the change of the best response $\theta^*$ as a function of $w$. The first class of approaches defines a recurrence relation $\theta_t = \varphi(\theta_{t-1}, w)$: the recurrence is unrolled, truncated to $T$ steps, and $\nabla_w G(w)$ is approximated by differentiation through the unrolled iterations either by forward- or reverse-mode automatic differentiation \citep{franceschi2017forward}. Whereas choosing a small number of unrolling steps $T$ can potentially introduce bias in the gradient estimation \citep{wu2018understanding}, a large $T$ can incur a high computational cost (either in time or space complexity) when $\Theta$ and $\Omega$ are high-dimensional.

The second class of first-order bilevel optimization solvers obtain the gradient implicitly: under the assumption that $f$ is twice differentiable, the constraint $\theta^*(w) \in \argmin_{\theta \in \Theta} f(\theta, w)$ can be relaxed to $\frac{\partial f(\theta, w)}{\partial \theta}\big|_{\theta =\theta^*} = 0$, which is tight when $f$ is strictly convex. 
     A crucial result for obtaining  $\nabla_w G(w)$ is the \emph{implicit function theorem} applied to $\frac{\partial f(\theta, w)}{\partial \theta}\big|_{\theta =\theta^*} = 0$. Combined with the total derivative and the chain rule, we get
    \begin{equation}
            \begin{aligned}
            \frac{\partial  G(w)}{\partial w} = \frac{\partial g}{\partial w} - \frac{\partial g}{\partial \theta}   \left(\frac{\partial^2 f}{\partial \theta \partial \theta^\top}\right)^{-1}  \frac{\partial^2 f}{ \partial \theta \partial w^\top}
            \end{aligned}\label{eq:implicit-gradient},
        \end{equation}
        where the partial derivatives with respect to  $\theta$ are evaluated at $\theta^*(w)$. However, the Hessian inversion in Equation \eqref{eq:implicit-gradient} is computationally intractable for models with a large number of parameters. Efficient inverse Hessian-vector product approximations can be obtained by approximately solving the linear system $\frac{\partial^2 f}{\partial \theta \partial \theta^\top} x = \left(\frac{\partial g}{\partial \theta}\right)^\top$ with conjugate gradients \citep{pedregosa2016hyperparameter} or by approximating the inverse Hessian with the Neumann series  $\left(\frac{\partial^2 f}{\partial \theta \partial \theta^\top}\right)^{-1} = \lim_{T \to \infty} \sum_{i=0}^T \left(I - \frac{\partial^2 f}{\partial \theta \partial \theta^\top}\right)^i\quad$ \citep{lorraine2019optimizing}. These approximations enable the scalability of first-order optimization methods based on implicit gradients to high-dimensional $\Theta$ and $\Omega$.

\looseness -1 In this work, we use the framework of bilevel optimization to generate coresets. We assume that $f$ is twice differentiable and that the inner solution set is a singleton. We use first-order methods based on implicit gradients for solving our proposed bilevel optimization problems due to their flexibility and scalability.

\subsection{Constructing Coresets via Incremental Subset Selection (BiCo)} \label{subsec:bilevel-incremental}

\looseness -1 In the previous section, we presented different approaches for solving bilevel optimization problems. However, an additional challenge in our coreset formulation \eqref{eq:bilevel-coreset} is  the cardinality constraint $||w||_0\leq m$. One approach for this combinatorial problem would be to treat $G$ as a set function  and  increment the set of selected points greedily by inspecting marginal gains.
Unfortunately, for general losses, this approach comes at a high cost: at each step, we must solve the bilevel optimization problem of finding the optimal weights for each candidate point we consider to add to the coreset. This makes greedy selection based on marginal gains impractical for large models.

\looseness -1 We propose an efficient solution summarized in Algorithm \ref{alg:bilevel_coreset} based on {\em cone constrained generalized matching pursuit} \citep{Locatello2017a}.
    Consider the atom set $\mathcal{A}$ corresponding  to the standard basis of $\reals^n$. Similarly to the Frank-Wolfe algorithm, generalized matching pursuit proceeds by incrementally increasing the active set of atoms that represents the solution by selecting the atom that minimizes the linearization of the objective at the current iteration. Growing  the atom set incrementally can be stopped when the desired size $m$ is reached, and thus the $\norm{w}_0 \leq m$ constraint is active.  We note that, by imposing a bound on the weights and optimizing in the convex hull of atoms, this approach would be equivalent to the Frank-Wolfe algorithm, which has already been applied in coreset constructions in settings where $G(w)$ is convex \citep{DBLP:journals/talg/Clarkson10}. The novelty in our work is the bilevel formulation that allows to extend this simple approach to general twice-differentiable models. 
    
    \begin{figure}[t!]
         \centering
      \begin{algorithm}[H]
        \caption{Bilevel Coreset (BiCo)}
       \label{alg:bilevel_coreset}
    \begin{algorithmic}[1]
       \State {\bfseries Input:} Data $\mathcal{D} =\{(x_i,y_i)\}_{i=1}^n$, coreset size $m$
       \State {\bfseries Output:} weights $w$ encoding the coreset
       \State $w = [0, \dots ,0]$
       \State Choose $i\in [n]$ randomly, set $w_i=1$, $S_1 = \{i\}$
       \For{$t \in [2, ..., m]$}
       \State Find $w^*_{S_{t-1}}\in \reals_+^n$ local min of $G(w)$ s.t. $\textup{supp}(w^*_{S_{t-1}}) = S_{t-1}$
        \State $k^* = \argmin_{k\in [n]} \nabla_{w_k} G(w^*_{S_{t-1}})$
        \State $S_t=S_{t-1} \cup \{ k^* \}$,  $w_{k^*} = 1$
       \EndFor
       \State Find $w^*_{S_m}$ local min of $G(w)$ s.t. $\textup{supp}(w^*_{S_m}) = S_m$
    \end{algorithmic}
    \end{algorithm}
\end{figure}%

       Suppose a set of atoms $S_{t-1} \subset \mathcal{A}$ of size $t-1$ has already been selected. Our method proceeds in two steps. First, the bilevel optimization problem \eqref{eq:bilevel-coreset} is restricted to  weights $w$ having support $S_{t-1}$, i.e., $w$ can only have nonzero entries at indices in $S_{t-1}$. Then we optimize to find the weights $w_{S_{t-1}}^*$ with support restricted to $S_{t-1}$ that represent a local minimum of $G(w)$ defined in Eq.~\eqref{eq:bilevel-general} with $g(\theta^*(w),w)=\sumin \ell_i(\theta^*(w))$ and $f(\theta,w)=\sumin w_i\ell_i(\theta)$---i.e., we use the algorithm's corrective variant, where, once a new atom is added, the weights are reoptimized by gradient descent using the implicit gradient with projection to nonnegative weights (line 6 in Algorithm \ref{alg:bilevel_coreset}). Once these weights are found, the algorithm increments $S_{t-1}$ with the atom that minimizes the first-order Taylor expansion of the outer objective around  $ w_{S_{t-1}}^*$,
    \begin{equation}
            k^* = \argmin_{k\in [n]} e_k ^\top \nabla_w G(w_{S_{t-1}}^*),
        \label{eq:selection-heuristic}
    \end{equation}
    where $e_k$ denotes the $k$-th standard basis vector of $\reals^n$.
    In other words, the chosen point is the one with the largest negative implicit gradient.

    \looseness -1 We can gain insight into the selection rule in Equation  \eqref{eq:selection-heuristic} by expanding $\nabla_w G$ using Equation  \eqref{eq:implicit-gradient}. For this,  we use the inner objective $f(\theta, w) = \sumin w_i  \ell_i(\theta)$ without regularization for simplicity. Noting that $\frac{\partial^2 \sumin w_i  \ell_i(\theta)}{ \partial w_k \partial \theta^\top} = \nabla_\theta \ell_k(\theta)$, we can expand Equation  \eqref{eq:selection-heuristic} to get
    \begin{equation}
    k^*= \argmax_{k \in [n]} \nabla_\theta \ell_k(\theta^*)^\top \Bigg(\frac{\partial^2 \sumin w^*_{S_{t-1},i}  \ell_i(\theta^*) }{\partial \theta \partial \theta^\top} \Bigg)^{-1}  \nabla_\theta \sumin \ell_i(\theta^*).
        \label{eq:explicit-selection-rule}
    \end{equation}
    Thus, with the choice $g(\theta)=\sumin \ell_i(\theta)$, the selected point's gradient has the largest bilinear similarity with $\nabla_\theta \sumin \ell_i(\theta) $, where the similarity is parameterized by the inverse Hessian of the inner problem. 
    
\subsection{Connections and Guarantees} \label{sec:connections-guarantees}
This section presents the connections of our approach to influence functions, experimental design and discusses its theoretical guarantees.
\subsubsection{Connection to Influence Functions} 
Our approach is closely related to incremental subset selection via influence functions. The \emph{empirical influence function}, known from robust statistics \citep{cook1980characterizations}, denotes the effect of a single sample on the estimator. Influence functions have been used by \citet{koh2017understanding}  to understand the dependence of  neural network predictions on a single training point and to generate adversarial training examples. To uncover the relationship between our method and influence functions, consider the influence of the $k$-th point on the outer objective. Suppose we have already selected the subset $S$ and found the corresponding weights $w_S^*$. Then, the influence of point $k$ on the outer objective is
\begin{equation*}
    \begin{aligned}
    \quad & \mathcal{I}(k)   := - \frac {\partial \sumin  \ell_i(\theta^*) }{\partial \eps} \bigg|_{\eps =0}, \quad
    \textrm{s.t.}  & \theta^* = \argmin_\theta \sumin  w^*_{S,i}\,  \ell_i(\theta) +  \eps\ell_k(\theta). \\
    \end{aligned}\label{eq:influence-func}
\end{equation*}

\begin{proposition}\label{prop:infl-fns} Under twice differentiability and strict convexity of the inner loss, the choice $\argmax_k \mathcal{I}(k) $ corresponds to the selection rule in Equation \eqref{eq:explicit-selection-rule}.
\end{proposition}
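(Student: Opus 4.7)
The plan is to apply the implicit function theorem to the inner optimality condition for the perturbed problem defining $\mathcal{I}(k)$, then propagate the derivative of $\theta^*$ through the outer objective via the chain rule, and observe that the resulting expression matches \eqref{eq:explicit-selection-rule} up to a (constant) sign and the symmetry of the inverse Hessian.

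Concretely, let $\theta^*(\epsilon)$ denote the argmin of the perturbed inner objective $F(\theta,\epsilon) := \sum_{i=1}^n w^*_{S,i}\,\ell_i(\theta) + \epsilon\,\ell_k(\theta)$. By twice differentiability and strict convexity of the inner loss, $F$ is strictly convex in $\theta$, so its minimizer is unique and characterized by the first-order condition
\begin{equation*}
\sum_{i=1}^n w^*_{S,i}\,\nabla_\theta \ell_i(\theta^*(\epsilon)) + \epsilon\,\nabla_\theta \ell_k(\theta^*(\epsilon)) \;=\; 0.
\end{equation*}
Differentiating this identity with respect to $\epsilon$ at $\epsilon=0$ (using that the Hessian $H := \sum_{i=1}^n w^*_{S,i}\,\nabla_\theta^2 \ell_i(\theta^*)$ is positive definite and hence invertible, justifying the implicit function theorem) gives
\begin{equation*}
H\,\frac{d\theta^*}{d\epsilon}\bigg|_{\epsilon=0} + \nabla_\theta \ell_k(\theta^*) \;=\; 0
\quad\Longrightarrow\quad
\frac{d\theta^*}{d\epsilon}\bigg|_{\epsilon=0} \;=\; -H^{-1}\,\nabla_\theta \ell_k(\theta^*).
\end{equation*}

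Next I would apply the chain rule to the outer objective $\sum_{i=1}^n \ell_i(\theta^*(\epsilon))$:
\begin{equation*}
\frac{d}{d\epsilon}\sum_{i=1}^n \ell_i(\theta^*(\epsilon))\bigg|_{\epsilon=0}
\;=\; \Big(\nabla_\theta \sum_{i=1}^n \ell_i(\theta^*)\Big)^{\!\top}\,\frac{d\theta^*}{d\epsilon}\bigg|_{\epsilon=0}
\;=\; -\Big(\nabla_\theta \sum_{i=1}^n \ell_i(\theta^*)\Big)^{\!\top} H^{-1}\, \nabla_\theta \ell_k(\theta^*).
\end{equation*}
Negating (per the definition of $\mathcal{I}(k)$) yields
\begin{equation*}
\mathcal{I}(k) \;=\; \nabla_\theta \ell_k(\theta^*)^\top\, H^{-1}\, \nabla_\theta \sum_{i=1}^n \ell_i(\theta^*),
\end{equation*}
where I used that $H^{-1}$ is symmetric (since $H$ is). This is precisely the quantity being maximized in \eqref{eq:explicit-selection-rule}, so $\argmax_k \mathcal{I}(k)$ agrees with the selection rule, as claimed.

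The only real delicacy is invoking the implicit function theorem rigorously: strict convexity of the inner loss makes $H$ invertible, so $\theta^*(\epsilon)$ is well-defined and differentiable in a neighborhood of $\epsilon=0$, which is exactly what is assumed. The remainder is bookkeeping (chain rule and sign tracking), so I do not anticipate a significant obstacle.
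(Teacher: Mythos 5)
Your proof is correct and follows essentially the same route as the paper's: the paper simply cites the Cook/Koh--Liang formula for $\partial\theta^*/\partial\epsilon$ and then applies the chain rule, whereas you derive that same formula explicitly from the first-order optimality condition via the implicit function theorem before chaining through the outer objective. The extra derivation is sound and the sign and symmetry bookkeeping match the paper exactly.
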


According to Proposition \ref{prop:infl-fns}, Algorithm \ref{alg:bilevel_coreset} can be interpreted as incremental identification of influential points. However, this algorithm is computationally prohibitive for practically relevant models---we address this issue in  
Section \ref{subsec:variants}.

\subsubsection{Connection to Experimental Design} 
Let us instantiate our approach for the problem of weighted and regularized least squares regression. In this case, the inner optimization problem  
    $\hat{\theta}(w)=\argmin_\theta \sumin w_i ( x_i^\top \theta - y_i)^2+\lambda\sigma^2||\theta||_2^2$, where weights are assumed to be binary, and non-zero for a finite subset $S$,  admits a closed-form solution
    \begin{equation}\label{eq:closed-form-main}
    \hat{\theta}_S = (X_S^\top X_S + \lambda \sigma^2 I)^{-1}X_S^\top y_S.
\end{equation}

    For this special case, we can identify connections to the literature on
{\em optimal experimental design} \citep{Chaloner1995}.
In particular, the data summarization with the outer objective defined as $g(\hat{\theta}) = \frac{1}{2n}\Expect_{\epsilon,\theta}\left[\sum_{i=1}^{n} (x_i^\top \hat{\theta} - y_i)^2\right]$  is closely related to {\em Bayesian $V$-optimal design}, as the following proposition shows. 

\begin{proposition}\label{prop:cvg-main} Under the Bayesian linear regression assumptions $y = X \theta + \epsilon$, $\epsilon \sim \mathcal{N}(0,\sigma^2I)$ and $\theta \sim \mathcal{N}(0,\lambda^{-1})$, let 
    $g_V(\hat{\theta}) = \frac{1}{2n} \Expect_{\epsilon,\theta}\left[\norm{X\theta - X\hat{\theta}}^2_2\right] $ be 
    the Bayesian V-experimental design outer objective. 
For all $\hat{\theta}_S$ in Eq. \eqref{eq:closed-form-main}, we have  \[ \lim_{n \rightarrow \infty}  g(\hat{\theta}_S)- g_V(\hat{\theta}_S)=\frac{\sigma^2}{2}.\]
\end{proposition}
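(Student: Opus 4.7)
The plan is to expand $g(\hat{\theta}_S)$ using the generative assumption $y_i = x_i^\top \theta + \epsilon_i$ and match it against $g_V(\hat{\theta}_S)$ term by term. Writing $x_i^\top \hat{\theta}_S - y_i = -(x_i^\top \theta - x_i^\top \hat{\theta}_S) - \epsilon_i$ and squaring gives
\begin{equation*}
\sum_{i=1}^n (x_i^\top \hat{\theta}_S - y_i)^2 \;=\; \|X\theta - X\hat{\theta}_S\|_2^2 \;+\; 2\sum_{i=1}^n \epsilon_i (x_i^\top \theta - x_i^\top \hat{\theta}_S) \;+\; \sum_{i=1}^n \epsilon_i^2 .
\end{equation*}
Taking $\Expect_{\epsilon,\theta}$ and dividing by $2n$, the first term is exactly $g_V(\hat{\theta}_S)$ and the third term contributes $\frac{1}{2n}\cdot n\sigma^2=\frac{\sigma^2}{2}$. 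The proposition therefore reduces to showing that the cross term
\begin{equation*}
R_n \;:=\; \frac{1}{n}\sum_{i=1}^n \Expect\!\left[\epsilon_i(x_i^\top\theta - x_i^\top\hat{\theta}_S)\right]
\end{equation*}
vanishes as $n\to\infty$.

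The next step is to evaluate $R_n$ in closed form using Eq.~\eqref{eq:closed-form-main}. Setting $A := (X_S^\top X_S + \lambda\sigma^2 I)^{-1}$ and $y_S = X_S\theta + \epsilon_S$, one has $\hat{\theta}_S = A X_S^\top X_S\theta + A X_S^\top \epsilon_S$. For $i\notin S$ the noise $\epsilon_i$ is independent of $\theta$ and of $\hat{\theta}_S$, so each such summand is zero. For $i\in S$, only the noise-dependent part of $\hat{\theta}_S$ survives and $\Expect[\epsilon_i\epsilon_j]=\sigma^2\mathbbm{1}\{i=j\}$ leaves
\begin{equation*}
\Expect\!\left[\epsilon_i(x_i^\top\theta - x_i^\top \hat{\theta}_S)\right] \;=\; -\sigma^2\, x_i^\top A x_i .
\end{equation*}
Summing over $i\in S$ collapses the contribution to
\begin{equation*}
R_n \;=\; -\frac{\sigma^2}{n}\,\trace\!\left((X_S^\top X_S + \lambda\sigma^2 I)^{-1} X_S^\top X_S\right) .
\end{equation*}

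The remaining task is a trace bound: since $(M+\lambda\sigma^2 I)^{-1}M \preceq I$ for any PSD $M$, the trace above is at most $d$, so $|R_n|\le \sigma^2 d/n \to 0$ as $n\to\infty$ (with $d$ fixed and $|S|$ at most the coreset cardinality). Combining the three contributions yields
\begin{equation*}
\lim_{n\to\infty}\bigl(g(\hat{\theta}_S) - g_V(\hat{\theta}_S)\bigr) \;=\; 0 \;+\; \frac{\sigma^2}{2} \;=\; \frac{\sigma^2}{2},
\end{equation*}
as claimed. The only mildly nontrivial step is handling the cross term: the key observation, which makes everything clean, is that the dependence of $\hat{\theta}_S$ on $\epsilon$ is confined to $i\in S$, so the cross term does not scale with $n$ and is killed by the $1/n$ normalization. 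The noise term $\sigma^2/2$ is the irreducible Bayes error of predicting $y$ from $X\theta$, which explains the constant.
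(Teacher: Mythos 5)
Your proof is correct and follows essentially the same route as the paper's (Proposition \ref{prop:cvg-bayes} via Proposition \ref{prop:cvg}): expand the square to obtain $g_V$ plus a cross term plus $\sigma^2/2$, split the cross term over $i\in S$ and $i\notin S$, kill the latter by independence, and kill the former via the $1/n$ normalization. The only difference is that you evaluate the $i\in S$ contribution in closed form as $-\tfrac{\sigma^2}{n}\trace\left((X_S^\top X_S+\lambda\sigma^2 I)^{-1}X_S^\top X_S\right)$ and bound it by $\sigma^2 d/n$, which makes explicit (and slightly strengthens, since it does not require $|S|$ to stay bounded) the paper's remark that this term vanishes because $S$ is finite.
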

Consequently, it can be argued that, in the large data limit, the optimal coreset with binary weights corresponds to the solution of Bayesian V-experimental design. Further discussion can be found in Appendix \ref{sec:app-exp-design}. By using $g_V$ as our outer objective, solving the inner objective in closed form, we identify the Bayesian V-experimental design objective, 
\begin{equation*}
G(w) = \frac{1}{2n} \trace\left(  X\left(\frac{1}{\sigma^2}X^\top D(w) X + \lambda I\right)^{-1}X^\top \right),    
\end{equation*}
where $D(w):=\textup{diag}(w)$. In Lemma~\ref{lemma:convex} in Appendix \ref{sec:app-exp-design} we show that $G(w)$ is smooth and convex in $w$ when the integrality is relaxed. In such cases, our framework provides additive approximation guarantees, as we show next.

\subsubsection{Theoretical Guarantees}
    If $G(w)$ is smooth and convex, one can show that Algorithm \ref{alg:bilevel_coreset}, being an instance of cone-constrained generalized matching pursuit \citep{Locatello2017a}, provably converges at a rate of $\mathcal{O}(1/t)$.
    
    \begin{theorem}[cf.~Theorem 2 of \citet{Locatello2017a}] Let $G$ be $L$-smooth and convex. After $t$ iterations in Algorithm \ref{alg:bilevel_coreset} we have, 
        $$ G(w_{S_{t}}^*) - G^* \leq  \frac{8L+4\epsilon_1}{t+3}$$
    where $t \leq m$ (number of atoms) and $\epsilon_1 = G(w^*_{S_1}) - G^*$ is the suboptimality gap at $t=1$.
    \end{theorem}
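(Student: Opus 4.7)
The plan is to reduce the statement directly to Theorem 2 of \citet{Locatello2017a} by verifying that Algorithm~\ref{alg:bilevel_coreset} is an instance of fully-corrective cone-constrained generalized matching pursuit on the atom set $\mathcal{A}=\{e_1,\ldots,e_n\}$ (the standard basis of $\reals^n$). The conic hull of $\mathcal{A}$ is precisely $\reals_+^n$, which matches the feasible set of $w$ in the coreset formulation~\eqref{eq:bilevel-coreset}, so the algorithm is optimizing $G$ over $\mathrm{cone}(\mathcal{A})$ under the smoothness and convexity hypothesis.

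First I would check the two ingredients of a matching-pursuit step. (i) The atom-selection rule (line~7) chooses $k^*\in\argmin_{k\in[n]} \nabla_{w_k} G(w^*_{S_{t-1}}) = \argmin_{a\in\mathcal{A}} \langle a,\nabla G(w^*_{S_{t-1}})\rangle$, which is exactly the linear minimization oracle used by cone-constrained generalized matching pursuit: it picks the atom minimizing the first-order Taylor approximation of $G$ at the current iterate. (ii) The weight-update step (line~6) is the fully-corrective variant: after appending $e_{k^*}$ to the support, one computes a (local) minimizer $w^*_{S_t}$ of $G$ subject to $\mathrm{supp}(w)\subseteq S_t$, which, under the convexity hypothesis on $G$, is a global minimum of $G$ restricted to the face $\mathrm{cone}(S_t)$. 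Thus every iteration satisfies the sufficient progress condition of the fully-corrective MP, since any feasible convex combination of atoms in $S_t$ (in particular, the MP step size rule) is dominated by the re-optimization.

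Once the algorithmic correspondence is established, I would invoke \citet[Theorem~2]{Locatello2017a} verbatim: under $L$-smoothness and convexity of $G$ over $\mathrm{cone}(\mathcal{A})$, the fully-corrective cone-constrained generalized matching pursuit satisfies
\[
G(w_{S_t}^*) - G^* \;\le\; \frac{8L + 4\varepsilon_1}{t+3},
\]
where $\varepsilon_1 = G(w^*_{S_1}) - G^*$ is the initial suboptimality after the first atom is placed. Substituting the identification above yields exactly the claimed bound for $t \le m$.

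The main obstacle I would anticipate is the subtle point that line~6 of Algorithm~\ref{alg:bilevel_coreset} is stated as returning a \emph{local} minimum of $G$ on $\mathrm{supp}(w)=S_{t-1}$, whereas the MP convergence proof requires the iterate to improve on the MP step-size rule. Under the stated convexity of $G$ this gap disappears because every local minimum on the face $\mathrm{cone}(S_t)$ is a global minimum on that face, so the fully-corrective iterate dominates the MP line-search iterate and the recursion underlying \citet[Thm.~2]{Locatello2017a} carries through unchanged. I would also briefly note that $G$ need only be smooth and convex on the conic hull of the atoms actually visited (equivalently, bounded sublevel sets are enough to keep the MP geometric constants finite), which is the regime arising from Lemma~\ref{lemma:convex} for the experimental-design instantiation discussed above.
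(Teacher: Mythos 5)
Your proposal is correct and matches the paper's treatment: the paper offers no standalone proof, but justifies the theorem exactly as you do, by observing that Algorithm \ref{alg:bilevel_coreset} is the fully-corrective cone-constrained generalized matching pursuit over the standard-basis atom set (whose conic hull is $\reals_+^n$) and then citing Theorem 2 of \citet{Locatello2017a} directly. Your additional remark that convexity of $G$ makes the ``local minimum'' in line 6 a global minimum on the face spanned by $S_t$, so the fully-corrective iterate dominates the MP step-size rule, is a correct and worthwhile clarification of a point the paper leaves implicit.
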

    This result also implies the required coreset size of $m \in \mathcal{O}((L + \epsilon_1)\epsilon^{-1})$, where $\epsilon$ is the desired approximation error.
    We note that our algorithm would be equivalent to the Frank-Wolfe algorithm  by imposing a bound on the weights as commonly done in experimental design \citep{Fedorov1972}.
    Even though, in general, the function $G$ is not convex for more complex models, we  nevertheless demonstrate  the effectiveness of our method empirically in such settings in Section \ref{sec:experiments}.
    
\subsection{Variants} \label{sec:bilevel-coreset-variants}

 \looseness -1 In the previous sections, we presented and analyzed Algorithm \ref{alg:bilevel_coreset}, our basic algorithm for coreset selection. There are three potential bottlenecks when applying this algorithm in practice. Firstly, inverting the Hessian directly in Equation~\eqref{eq:explicit-selection-rule} is infeasible for models with many parameters, thus we must resort to approximating inverse Hessian-vector products with a series of Hessian vector products using the conjugate gradient algorithm or Neumann series (Section \ref{subsec:bilevel-coresets-background}). Secondly, adding points one by one might be too costly to generate larger coresets. Lastly,  running multiple weight update steps might be prohibitive since the algorithm's complexity depends linearly on the number of outer iterations. In this section, we address these issues by presenting several variants of our proposed method.
 
\looseness -1 \subsubsection{Selection via Proxy} 
A natural idea to improve the scalability of the algorithm is to perform the coreset selection on a \emph{proxy}  instead of the original model. This approach is practical for generating small coresets, as there is a trade-off between the complexity of the proxy model and solving the bilevel optimization problem efficiently---the discrepancy between a simple proxy and the original model can result in suboptimal large coreset selection via the proxy. 

We now study the setting when the proxy hypothesis class is a reproducing kernel Hilbert spaces (RKHS). This proxy class is relevant for a wide range of models, including neural networks due to the connection between the Neural Tangent Kernel \citep{jacot2018neural} and the training of infinitely wide neural networks with batch gradient descent.

Let $\kappa(\cdot, \cdot)$ be a positive definite kernel function and let $\HH$ denote the endowed reproducing kernel Hilbert space (RKHS), such that $\HH$ is the completion
of $\textup{span}\{\kappa(x,\cdot), x\in \mathcal{X} \}$
and $\langle h, \kappa(x, \cdot) \rangle_\HH = h(x)$ 
for all $h\in \HH$ and $x \in \mathcal{X}$. Furthermore, let $K$ denote the Gram matrix associated with the data. For solving the bilevel optimization we approximate $\HH$ with a smaller function space $\HH_q$ using the Nyström method \citep{nystrom}.

 \looseness -1 The Nyström method provides a low-rank approximation $\hat{K}$ to the Gram matrix $K$ by selecting a data-dependent basis as a subset of the training data $Q\subseteq [n]$, $|Q|=q$ such that $\hat{K}=K_{[n], Q} K_{Q, Q}^{+} K_{Q, [n]}$. Given the eigendecomposition of $K_{Q, Q}=U D U^\top$, the $q$-dimensional Nyström feature map is given by
    $z(\cdot) = D^{-1/2}U^\top [\kappa(\cdot, x_i), \, i\in Q]$
, such that $\hat{K}_{i,j} =z(x_i)^\top z(x_j) $. The problem of selecting the subset $Q$ has attracted significant interest, where the prominent tool is nonuniform sampling based on leverage scores and its variants \citep{mahoney2009cur}. We use the simplest and computationally most efficient method of uniform sampling for selecting $Q$. With the Nyström approximation, Equation~\eqref{eq:bilevel-coreset} can be rewritten as
\begin{equation}\label{eq:bilevel-coreset-nystrom}
            \begin{aligned}
            \min_{w\in \mathbb{R}_+^n,\,\normp{w}{0}\leq m}  \quad & \sumin   \ell(\theta^{*\top}z(x_i), y_i) \\ 
            \textrm{s.t.} \quad & \theta^* = \argmin_{\theta} \sumin w_i  \ell(\theta^{\top}z(x_i), y_i).
            \end{aligned}
        \end{equation}
For common loss functions $\ell$ (such as cross-entropy or $L_2$) the inner optimization problem is convex, allowing for fast solvers. Thus, in practice, the computational cost is dominated by the complexity of calculating $\hat{K}$ instead of solving the bilevel optimization problem.

\subsubsection{Forward Selection in Batches, Exchange, Elimination}

For building large coresets, however, the discrepancy between the proxy and original model makes the previous approach impractical. Hence, we consider working with the original model and propose additional approximations resulting in several algorithmic variants.

Firstly, since the algorithm's complexity depends linearly on the number of outer iterations, running multiple outer iterations might not be desirable. Secondly, adding points one by one might be too costly for generating larger coresets. 
We propose a simple solution to mitigate the first issue: we restrict the coreset weights to binary, and we do not optimize them. This way, the number of implicit gradient calculations will be equal to the number of coreset points chosen incrementally. We will show experimentally that, despite this simplification, unweighted coresets are very effective in a wide range of applications.
As for the second issue,  we propose the following approaches as alternatives to the basic one-by-one forward selection in Algorithm \ref{alg:bilevel_coreset} and investigate them experimentally in Section \ref{sec:experiments}:

\begin{itemize}
    \item \emph{forward selection in batches}: start with a small random subset, increase the chosen subset by  a batch of $b$ points  having the smallest implicit gradient in each step.
     \item \emph{exchange in batches}: start with a random subset of the desired coreset size, remove $b$ of the chosen points having the largest implicit gradient, and add $b$ new points having the smallest implicit gradient in each step.
    \item \emph{elimination in batches}: start with the full data set, remove $b$ of the chosen points having the largest implicit gradient in each step.
\end{itemize}
Similar ideas are prevalent in experimental design, e.g., the ``excursion'' version of Fedorov's exchange algorithm \citep{Fedorov1972, 10.2307/1267940}. The significant difference to our approach is that, for the  experimental design objectives available in closed-form, the selection algorithm can easily evaluate the exact effect of adding or removing points---in our case, this is prohibitively expensive, thus we must resort to our proposed heuristic based on first-order Taylor expansions. 
Furthermore, we note that the we perform the selection of batches by choosing greedily the $b$ points that have the smallest (largest, in case of elimination) implicit gradient. Exploring approaches that enforce diversity of the points in the selected batch is a promising future direction.

\looseness -1 The correspondence between our selection rule and  influence functions (Section \ref{sec:connections-guarantees}) brings into relevance the work of \citet{wang2018data}, who propose to curate the data set by removing unfavorable training points to increase the generalization performance of the model. Their method is essentially a two-stage backward elimination algorithm based on influence functions. Hence, our elimination method is a multi-stage extension of \citet{wang2018data}.

\subsubsection{Bilevel Coresets via Regularization}

Another approach to solving the cardinality constrained bilevel optimization for coreset selection (Equation \eqref{eq:bilevel-coreset}) is to transform the $\normp{w}{0}$ constraint into a sparsity-inducing regularizer, e.g., into an $L_1$-penalty, in the spirit of Lasso \citep{tibshirani1996regression}. However,  this approach fails for Equation \eqref{eq:bilevel-coreset}, since the solution of the inner optimization problem is a minimizer also when the weights are rescaled by a common factor. 
One attempt for mitigating this issue is to require the inner optimization problem to be regularized, and add an $L_1$-penalty to the outer objective:
\begin{equation} \label{eq:bilevel-l1-reg}
    \begin{aligned}
    \min_{w\in \mathbb{R}_+^n} \quad &  \sumin   \ell_i(\theta^*) + \beta \normp{w}{1} \\ \textrm{s.t.} \quad &  \theta^* = \argmin_\theta \sumin w_i  \ell_i(\theta) +\lambda \normp{\theta}{2}^2.
    \end{aligned}
\end{equation}
\looseness -1 Now, for fixed $\lambda$, rescaling the weights by an arbitrary common factor in Equation~\eqref{eq:bilevel-l1-reg} does affect the inner optimization problem's solution. However, there exists a wide range of regularizers $\lambda$ in practice (for some problems orders of magnitude away, e.g., $\lambda \in [10^{-6}, 10^{-4}])$ that, for the same fixed weights, have an almost identical outer loss. This introduces an issue: for a fixed $\lambda$, coreset weights of different magnitudes will produce good solutions to the inner problem, whereas the outer loss is highly susceptible to the scale of the weights. Thus, tuning $\lambda$ and $\beta$ jointly to achieve a desired coreset size in this setting is cumbersome in practice.

\begin{figure*}[t!]
\centering
  \includegraphics[width=0.6\linewidth]{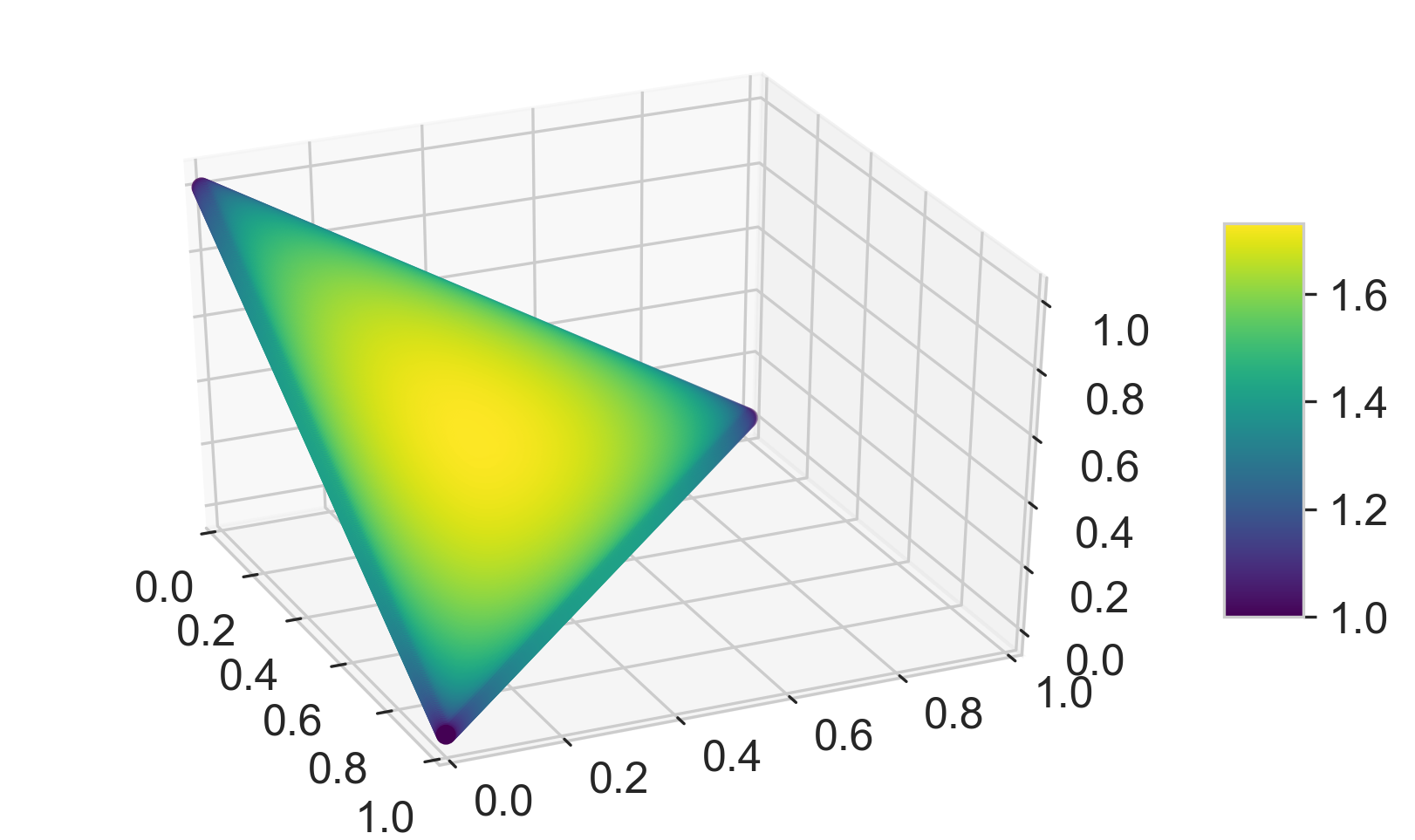}
  \caption{$L_{1/2}$ penalty in three dimensions restricted to the simplex. The value of the penalty decreases towards the edges of the simplex, inducing sparsity.\label{fig:lq-penalty}}
  \vspace{-4mm}
\end{figure*}

Therefore, we propose another regularized version of the problem that is easier to tune, it does not require a regularized inner problem, and it produces solutions with better empirical performance. First, we restrict the weights to the $n$-dimensional simplex $\Delta_n$, such that $\sumin w_i = 1$. Now, since $\normp{w}{1}=1$, we should use another sparsity-inducing penalty in the outer loss: any $L_q = \sumin w_i^{q}$ with $q \in (0, 1)$, where we choose $q=1/2$ in this work---Figure \ref{fig:lq-penalty} shows $L_{1/2}$ in three dimensions restricted to the simplex. Thus, our proposed regularized bilevel coreset selection problem (inner regularization is also supported) is 
\begin{equation} \label{eq:bilevel-lq-reg}
    \begin{aligned}
    \min_{w\in \Delta_n} \quad &  \sumin   \ell_i(\theta^*) + \beta \sumin \sqrt{w_i} \\ \textrm{s.t.} \quad &  \theta^* = \argmin_\theta \sumin w_i  \ell_i(\theta).
    \end{aligned}
\end{equation}

For optimizing the bilevel problem in Equation~\eqref{eq:bilevel-lq-reg}, we apply first-order methods using the implicit gradient, which, based on the total derivative, the chain rule, and the implicit function theorem is
\begin{equation} \label{eq:bilevel-reg-implicit-grad}
   \beta \frac{\partial\sumin \sqrt{w_i}}{\partial w}  - \frac{\partial \sumin  \ell_i(\theta^*)}{\partial \theta}  \left(\frac{\partial^2 \sumin w_i  \ell_i(\theta^*)}{\partial \theta \partial \theta^\top} \right)^{-1} \frac{\partial^2 \sumin w_i  \ell_i(\theta^*)}{\partial \theta \partial w^\top}.
\end{equation}
Additionally, since the weights are constrained to $\Delta_n$, we project the weights after each gradient descent step to $\Delta_n$ using the efficient Euclidean projection step proposed by \citet{duchi2008efficient}. Furthermore, to ensure finite derivatives in Equation \eqref{eq:bilevel-reg-implicit-grad} due to the $L_{1/2}$-penalty, we mix the projected weight vector with the identity vector to avoid exactly $0$ weights. Our proposed method summarized in Algorithm \ref{alg:bilevel-coreset-regularized}.
    \begin{figure}[t!]
         \centering
      \begin{algorithm}[H]
        \caption{Bilevel Coreset via Regularization}
       \label{alg:bilevel-coreset-regularized}
    \begin{algorithmic}[1]
       \State {\bfseries Input:} Data $\mathcal{D} =\{(x_i,y_i)\}_{i=1}^n$, $T$, regularizers $\lambda$, $\beta$
       \State {\bfseries Output:} weights $w$ encoding the coreset
       \State $w = [1/n, \dots, 1/n]$, $\epsilon = 10^{-8}$
       \For{$it \in [1, \dots, T]$}
       \State Find $\theta^* = \argmin_\theta \sumin w_i  \ell_i(\theta) +\lambda \normp{\theta}{2}^2$
        \State Update $w$ by gradient descent using Equation \eqref{eq:bilevel-reg-implicit-grad}
        \State $\tilde{w} = \argmin_{w' \in \Delta_n} \normp{w' - w}{2}$ \Comment{\citet{duchi2008efficient}}
        \State $w = (1-\epsilon)\tilde{w} + \epsilon \mathbb{1}_n$
       \EndFor
        \State $w[w < 10^{-4}] = 0$

    \end{algorithmic}
    \end{algorithm}
    \vspace{-4mm}
\end{figure}

\looseness -1 In practice, to reach a desired coreset size $m$, we tune our hyperparameters $\lambda$ and $\beta$ as follows. We first tune $\lambda$ based on the validation performance by solving the inner optimization problem with $w = [1/n, \dots, 1/n]$; after the tuning, $\lambda$ will be fixed. We set $\beta$ to small value, e.g., $\beta=10^{-7}$, start the loop in Algorithm \ref{alg:bilevel-coreset-regularized}, and we monitor the number of selected coreset points: if the number of the selected coreset points was plateauing in recent iterations, then we increase the sparsity penalty by doubling $\beta$---we use the doubling until the desired coreset size $m$ is reached.

\looseness -1 This approach is most practical for generating large weighted coresets. For generating small coresets, however, it has the disadvantage of increased computational cost compared to greedy forward selection, as the first optimization step already involves fitting the model on the full data.

\section{Extensions and Applications of Bilevel Coresets} \label{sec:extensions}

\looseness -1 Our framework has the advantage of flexibility in handling extensions for the outer and inner objectives (Equation \eqref{eq:bilevel-coreset}), which makes it   applicable in a wide range of settings. In this section, we present the framework's extension to constructing joint coresets for multiple models and its applications in continual learning, streaming, batch active learning, and dictionary selection in compressed sensing.

\subsection{Joint Coresets} \label{subsec:joint-coresets}
 One application of our framework is speeding up model selection and hyperparameter tuning by performing these on the coreset instead of the full data. For this, we expect the coreset to be transferable to multiple models, whereas our formulation (Equation \eqref{eq:bilevel-coreset}) is tied to a model and a loss function. A simple idea to construct a coreset with better transferability is to ensure that it is a suitable coreset for multiple models. This is straightforward to achieve within our framework---for brevity, we present the idea for two models: consider models $f$ and $g$, and denote their parameters by $\theta_f$ and $\theta_g$. The problem of generating the joint coreset can be formulated as

\begin{equation}\label{eq:bilevel-multiple}
            \begin{aligned}
              \min_{w\in\reals_+^n,\,||w||_0\leq m}  & \sumin \left( \ell(f_{\theta_f^*}(x_i), y_i) + \lambda \ell(g_{\theta_g^*}(x_i), y_i) \right) \\
             \textrm{s.t.} \quad & (\theta_f^*, \theta_g^*) \in \argmin_{(\theta_f, \theta_g)}\! \sumin\! w_i \left( \ell(f_{\theta_f}\!(x_i), y_i)   + \lambda\ell(g_{\theta_g}\!(x_i), y_i)  \right).
            \end{aligned}
        \end{equation}
\looseness -1 For solving this bilevel problem, we can rely on the previously presented techniques. In practice, if the loss magnitudes are of the same order, we can set $\lambda=1$; an additional heuristic for solving the problem with (batch) forward selection is to perform the selection step alternatingly for each model. We verify the validity of this approach in the next section, where we demonstrate the improvement in the transferability of the coreset to deep convolutional networks.

\subsection{Continual Learning} In contrast to the standard supervised setting, where the learning algorithm has access to an i.i.d. data set $\mathcal{D}=\{(x_i,y_i)\}_{i=1}^n$, continual learning assumes that $\mathcal{D}$ is the union of $T$ disjoint subsets $\mathcal{D}_1, \dots, \mathcal{D}_T$ such that each $\mathcal{D}_i$ contains data drawn from a different i.i.d. distribution. The goal is to learn a model based on the data that arrives sequentially from different tasks, such that the model achieves good performance on all tasks. An additional constraint in the setting is that the model cannot revisit all data from the previous tasks $1,\dots,t-1$ when learning on task $t$. The challenge is to avoid  \emph{catastrophic forgetting} \citep{mccloskey1989catastrophic, french1999catastrophic}, which occurs when the optimization on $\mathcal{D}_t$ degrades  the model's performance significantly on some of $\mathcal{D}_1,\dots,\mathcal{D}_{t-1}$.

\looseness -1 Continual learning with neural networks has received increasing interest recently. The approaches for alleviating catastrophic forgetting fall into three main categories: weight regularization to restrict deviations from parameters learned on old tasks \citep{kirkpatrick2017overcoming, v.2018variational}; architectural adaptations for different tasks \citep{rusu2016progressive}; and replay-based approaches, where samples from old tasks are either reproduced via replay memory \citep{lopez2017gradient} or  generative models \citep{shin2017continual}. 

In this work, we focus on the replay-based approach, which provides strong empirical performance  despite its simplicity \citep{chaudhry2019continual}. In this setting, coresets are natural candidates for the summaries of the tasks to be stored in the replay memory, and we can readily use our coreset construction for the selection. 

For continual learning with replay memory, we employ the following protocol. 
The learning algorithm receives data $\mathcal{D}_1,\dots,\mathcal{D}_T$ arriving in order from $T$ different tasks. At time $t$, the learner receives $\mathcal{D}_t$ but can only access past data through a small number of samples from the replay memory of size $m$. We assume that equal memory is allocated for each task in the buffer, and that the summaries $C_1,\dots,C_T$ are created per task. Thus, the optimization objective at time $t$ is
\[\min_\theta \frac{1}{|\mathcal{D}_t|}\sum_{(x,y) \in \mathcal{D}_t} \ell(h_\theta(x),y) + \beta  
   \sum_{\tau=1}^{t-1} \frac{1}{|\mathcal{C}_\tau|}\sum_{(x,y) \in \mathcal{C}_\tau} \ell(h_\theta(x),y),\]
where $ \sum_{\tau=1}^{t-1} |C_\tau|\!=\!m$ and $\beta$ is a hyperparameter controlling the regularization strength of the loss on the samples from the replay memory. After performing the optimization, $\mathcal{D}_t$ is summarized into  $\mathcal{C}_t$ and added to the buffer, while previous summaries $\mathcal{C}_1,\dots, \mathcal{C}_{t-1}$ are shrunk such that $|\mathcal{C}_\tau| = \lfloor m/t \rfloor$. The shrinkage is performed by running the summarization algorithms on each $\mathcal{C}_1,\dots, \mathcal{C}_{t-1}$ again, which for greedy strategies is equivalent to retaining the first $\lfloor m/t \rfloor$ samples from each summary.

\subsection{Streaming} \label{subsec:streaming}
 We can also apply our coreset construction in the more challenging setting of streaming. In contrast to continual learning, the streaming setting does not define tasks and does not assume i.i.d. data in any portion of the stream. Concretely, in this work, we assume that the learner observes small data batches $\mathcal{D}_1,...,\mathcal{D}_T$ arriving in order, where no i.i.d.~and task boundary assumptions are made. 

As in the case of continual learning, one approach for alleviating catastrophic forgetting in the streaming setting is the retraining on data from the memory replay buffer. 
Denoting by $\mathcal{M}_{t}$ the replay memory at time $t$, the optimization objective at time $t$ for  learning under streaming with replay memory is
\[    \min_\theta  \frac{1}{|\mathcal{D}_t|} \sum_{(x,y) \in \mathcal{D}_t} \ell(h_\theta(x),y) + \frac{\beta}{|\mathcal{M}_{t-1}|}  \sum_{(x,y) \in \mathcal{M}_{t-1}}   \ell(h_\theta(x),y).\]

 Maintaining a coreset of constant size over datastreams is a cornerstone of training nonconvex models in a streaming setting.
 We offer a principled way to achieve this, naturally supported by our framework, using the following idea: two coresets  can be summarized into a single one by applying our bilevel  construction with the outer objective as the loss on the union of the two coresets.

Based on this idea, we use a variant of the merge-reduce framework of \citet{chazelle1996linear}. 
For this, we assume that we can store at most $m$ coreset points in a buffer; we split the buffer into $s$ slots of equal size $m_s:= m/s$. 
We associate values $\beta_i$ with each of the slots, which will be \emph{proportional to the number of points} they represent.  A new batch  is compressed into a new slot with associated default $\beta$, and it is appended to the buffer, which now might contain an extra slot. The reduction to size $m$ happens as follows: select the two consecutive slots $i$ and $i+1$ with smallest $i$ for which  $\beta_i=\beta_{i+1}$ or, if this does not exist, choose the last two slots; then join the  content of the slots ({\em merge}) and create the coreset of the merged data ({\em reduce}). 
The new coreset replaces the two original slots with $\beta_i +\beta_{i+1} $ associated with it. The pseudocode of the construction is shown in Algorithm  \ref{alg:streaming-coreset} in Appendix \ref{sec:app-cl-streaming} together with the illustration of the merge-reduce coreset construction for a buffer with 3 slots and 7 steps in Figure \ref{fig:merge-reduce}. The coreset produced by our construction for a two-layer fully-connected neural network on the imbalanced video stream created from the iCub World 1.0 data set \citep{fanello2013icub} can be seen in Figure \ref{fig:demo-stream}.

\begin{figure*}[t!]
\centering
          \includegraphics[width=\textwidth]{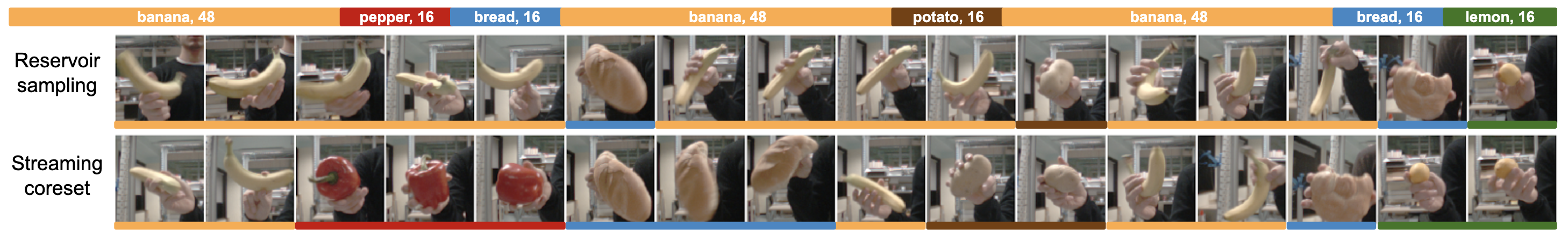}
          \caption{Data summarization on an imbalanced stream of images created from the iCub World 1.0 data set \citep{fanello2013icub}.  Row 1: the stream's composition containing 5 object classes. Row 2: selection by reservoir (uniform) sampling. Row 3: selection by our method. Reservoir sampling misses classes (pepper) due to imbalance and does not choose diverse samples, in contrast to our method.           \label{fig:demo-stream}} 
\end{figure*}

\subsection{Batch Active Learning}

The prominent use cases of our proposed method are scenarios with explicit budget constraints for subset selection. These constraints can be due to  computational resource constraints, as in the case of continual learning and streaming with replay memory, or can relate to the cost of involving human interaction. \emph{Active learning} falls into the latter category and aims to improve the data efficiency of learning algorithms by interleaving training rounds with selective query of the labels for informative unlabeled points from human experts.

\looseness -1 The active learning setting assumes that, while unlabeled data is available abundantly, acquiring labels involves the cost of relying on human expertise. In the \emph{pool-based} setup, each active learning round consists of training the model using the already labeled  data and choosing points from the unlabeled pool for label acquisition. The challenge in this setting is to select the most informative samples, i.e., the samples with the highest potential of reducing the model's generalization error.
When the cost of performing a new training round after every single acquired label is considered, active learning becomes computationally unattractive. \emph{Batch active learning} tackles this issue by acquiring labels for a batch of points in a single round but faces the additional challenge of ensuring diversity between the chosen points. 

Active learning and its batch variant have received significant attention  \citep{mackay1992information, lewis1994sequential, balcan2007margin,  hoi2006batch, guo2008discriminative, NIPS2019_8925}. Although vastly available unlabeled data is assumed in this setting, most active learning approaches ignore the unlabeled pool while training the model in a supervised manner on the labeled pool only. On the other hand, recent advances in semi-supervised learning (SSL) have shown significant performance improvements  of models trained with only a small number of labeled samples. Prominent SSL methods in the image domain include Mean Teacher \citep{NIPS2017_68053af2}, MixMatch \citep{berthelot2019mixmatch} and its improvement, FixMatch \citep{NEURIPS2020_06964dce}. These methods achieve the following CIFAR-10 test accuracies: Mean teacher - $78.5\%$ with $1000$ labeled samples;   MixMatch - $88.2\%$ with $250$ labeled samples; FixMatch - $95\%$ with $250$ labeled samples.

The success of semi-supervised methods suggests that using the unlabeled data pool in active learning for acquisition only is suboptimal. Based on this observation, early approaches propose to combine active learning with SSL for Gaussian fields~\citep{DBLP:conf/icml/ZhuGL03} and SVMs~\citep{1467457, leng2013combining}. In the context of semi-supervised active learning with neural networks, \citet{sener2018active} investigate SSL training and selecting points for label acquisition that represent the $k$-centers of the embeddings in the last layer. \citet{song2019combining} show that training in a semi-supervised manner with MixMatch and selecting the candidates for label query with standard acquisition functions improves the active learner's generalization performance compared to uniform sampling.
\citet{gao2019consistency} propose to train in each acquisition round with MixMatch and query the points that produce the most inconsistent predictions  when undergoing random data augmentations, as measured by the sum of per-class variances in the predicted class probabilities. We compare our proposed acquisition strategy with these methods empirically.

We propose a simple yet highly effective label acquisition strategy based on bilevel coreset construction that works in the semi-supervised batch active learning setup. The basic idea is the following: in each round of active learning, we train the semi-supervised learner and use its predictions to provide labels for the samples in the unlabeled pool; then, using these ``pseudo-labels'', we construct the coreset of the unlabeled pool and query the true labels for the selected points. This strategy naturally accommodates the selection of batches and prohibits redundancy in the selected batch by the design of the objective.

\looseness -1  Let us formalize our approach in a single round of batch active learning. Denote the labeled pool  by $\dtrain=\{ (x_i, y_i)\}_{i=1}^{n_{\textup{labeled}}}$ and the unlabeled pool by $\du = \{ x'_i \}_{i=1}^{n_{\textup{unlabeled}}}$.
Let $h$ denote the model and $\theta^*_{\textup{SSL}}$ denote the parameters that minimize the semi-supervised loss---our strategy is oblivious to the choice of the SSL algorithm, it only assumes that the semi-supervised training outperforms supervised training of the model in terms of the generalization error. Lastly, let $\wdu = \{ (x, h_{\theta^*_{\textup{SSL}}}(x)), \; x \in \du \}$  denote the data set of points from $\du$ together with their soft pseudo-labels provided by the semi-supervised learner. 

The goal of batch active learning is to select and query the labels of the most informative subset of the unlabeled data pool $\mathcal M \subseteq \wdu$ of size $m=|\mathcal M|$ that would result in a maximal reduction of the model's generalization error. We propose to select $\mathcal M$ as follows: $\dtrain \cup \mathcal M$ should be the coreset of $\dtrain \cup \wdu$ for training $h$ in a \emph{supervised} manner.  Formally, 
\begin{equation}\label{eq:active-learning-bilevel}
            \begin{aligned}
            \mathcal M := &\argmin_{\mathcal M \subseteq \wdu,\, |\mathcal M|=m} \quad \sum_{(x,y)\in\mathcal{D}_{\textup{train}} }\!\! \ell(h_{\theta^*}(x), y) +  \sum_{(x, \widehat{y})\in\wdu} \!\! \ell(h_{\theta^*}(x), \widehat{y})\\
            &\textrm{s.t.}  \; \theta^* = \argmin_{\theta} \!\! \sum_{(x,y)\in\mathcal{D}_{\textup{train}} } \!\! \ell(h_{\theta}(x), y) + \sum_{(x,\widehat{y})\in \mathcal M} \!\! \ell(h_{\theta}(x), \widehat{y}),
            \end{aligned}
        \end{equation}
where $\widehat{y}$ denote the pseudo-labels. The motivation for the formulation in Equation~\eqref{eq:active-learning-bilevel} is twofold. Firstly, as the coreset of $\wdu$, $\mathcal M$ will contain the most essential points of the unlabeled data pool for supervised training. In case some of these points have been wrongly pseudo-labeled, we expect that querying the correct labels induces a large model change. In the other case, acquiring hard labels benefits the semi-supervised learner in label propagation. Secondly, the coreset selection in Equation~\eqref{eq:active-learning-bilevel} naturally supports batch selection while avoiding redundancy among the selected points. We provide empirical support for this hypothesis in the experiments.

\subsection{Dictionary Selection for Compressed Sensing} \label{subsec:dict-sel}

\looseness -1  In signal compression, a collection of signals (data points) needs to be summarized by a small set of measurements ensuring high-fidelity reconstruction. In this case, instead of selecting data points, we select low-dimensional projections of the data. Despite this difference, due to the generality of our bilevel framework, we can showcase our framework for selecting measurements from a set of dictionary elements in order to improve the compression performance. This problem closely resembles dictionary learning and can be seen as a special case of it, without the individual sparsity constraints \citep{icml2010_073}. The classical greedy method, which can  obey cardinality constraints on the measurement set and thus control the compression ratio, is computationally very expensive: for each element of the dictionary and at each enlargement, the whole dataset needs to be reconstructed. This increases the computational burden by the size of the dictionary, which can be prohibitively large. 

\looseness -1 Classically, the compression is addressed by transforming the data (signal) to a basis with a known redundancy such as a Fourier transform, and subsequently applying a set of \emph{linear} measurements on the data. These are then recovered by imposing a regularization strategy such as the smallest squared squared norm ($L_2$). Alternatively, \citet{chen_2005} proposed to use absolute norm regularization ($L_1$) referred to as \emph{basis pursuit} or  \emph{compressed sensing}. In fact, compressed sensing can provably recover $s$-sparse signals with much smaller set of linear measurements than  $L_2$ regularization, scaling as $\mathcal{O}(s\log(d))$ \citep{donoho2006compressed}. The  measurement vectors, however, must satisfy specific conditions such as restricted isometry property (RIP) \citep{candes2006stable} for this to be guaranteed. Certifying that a measurement matrix has the RIP is known to be NP-hard \citep{bandeira2013certifying}. Constructing these matrices randomly is easy, however, the procedure generates them only with a certain probability \citep{candes2006stable}. 

Given a representative curated data set sufficiently covering all reasonable signals, a natural question is whether one can design a tailored measurement set that improves the compression ratio beyond the randomly generated RIP matrices or other classical measurements. In fact, since the recovery procedure is formulated as an optimization problem, this design question can be captured in the familiar bilevel form:

\begin{equation}\label{eq:bilevel-dictionary}
\begin{aligned}
             \min_{\norm{w}_0\leq k} & \sup_{i \in [n]} \norm{x_i - \hat{x}_i(w)}_2^2  \\
            \quad \textrm{s.t.} \quad& \hat{x}_i = \arg\min_{y_i}  \sum_{j=1}^{m} w_j(a_{j}(x_i - y_i))^2 + \lambda R(y) \quad \forall i \in \mathcal{D},
\end{aligned}
\end{equation}
\looseness -1 where $n$ is the size of representative data set, $a_j \in \mathbb{R}^{d}$ is one of the $m$ elements of the dictionary we select from, and $R(y)$ is the regularization term corresponding to $R(y) = \norm{y}_2^2$ or $R(y) = \norm{y}_1$. The values of $w_i$ are restricted to be binary in this application. While the optimization problems might not always be differentiable, the objective can be reformulated using differentiable relaxation techniques such as the \emph{log-sum-exp} trick to approximate the supremum, and $L_1$ admits a variational reformulation via the so called $\eta$-trick to become differentiable \citep{bach2012etatrick}. In practice, however, using an element of the sub-differential proves to be a  viable strategy. We demonstrate the versatility of our framework by applying it to solve Equation \eqref{eq:bilevel-dictionary} in Section \ref{subsec:dict-sel-exp}.

\looseness -1 Recently, \cite{bora2017compressed} and \cite{jalal2021instance} have demonstrated that the sparsity-inducing regularizers can be substituted by the constraint that the data belongs to the support of a generative model $G(z)$, where $z \in \mathbb{R}^p$ is referred to as the latent space. In this case, the regularization term becomes an indicator function $R(y) = \mathbf{1}_{\exists z ~ | ~ y = G(z)}$, which can be reformulated to get a simplified inner problem $\hat{x_i} = G(z_i)$ and $z_i = \arg\min_z \sum_{j=1}^m w_j(a_j(x_i - G(z)))^2$. The measurements are assumed to be linear as in classical compressed sensing, and the recovery guarantees satisfy similar conditions on measurement vectors as with sparse signals in previous works \citep{jalal2021instance}. Naturally, a more informed measurement selection of $A$ as above can even further reduce the compression ratio.

\section{Experiments} \label{sec:experiments}

\looseness -1 In this section, we  demonstrate the flexibility and effectiveness of our framework for a wide range of models and various settings. We start by evaluating the variants of Algorithm \ref{alg:bilevel_coreset} in Section \ref{subsec:variants},  and we compare our method to model-specific coreset constructions and other data summarization strategies in Section \ref{subsec:comparison}.  We then study our approach in the memory-constrained settings of continual learning and streaming in Section \ref{subsec:cl-streaming}, of dictionary selection in Section \ref{subsec:dict-sel-exp},  and the human-resource constrained setting of batch active learning in Section \ref{subsec:bal}.

\subsection{Variants }\label{subsec:variants}

The basic algorithm (Algorithm \ref{alg:bilevel_coreset}) for bilevel coresets is impractical except for simple models due to its computational cost---we discuss the runtime complexity in Section \ref{subsec:comp-cost}. Hence, we focus on the variants proposed in Section \ref{sec:bilevel-coreset-variants}.  Our target model is multiclass logistic regression, where the feature space is the $q=2048$-dimensional Nyström feature space  of the Convoluational Neural Tangent Kernel (CNTK) proposed by \citet{arora2019} with six layers and global average pooling on CIFAR-10.
In this case, $\theta \in \reals^{q \times c}$,  and $\ell(\theta^\top z(x),y) = -\sum_{j=1}^c y_j \log \frac{\exp(\theta_{\cdot,j}^\top z(x))}{\sum_{j'=1}^c \exp(\theta_{\cdot,j'}^\top z(x))}$, where $y$ is the one-hot encoded label vector, $\ell$ is the cross-entropy loss with softmax, and $z(\cdot)$ is the Nyström feature mapping. In each implicit gradient step, we solve the inner optimization problem iteratively up to a tolerance, and approximate the implicit gradient \citep{pedregosa2016hyperparameter} with $100$ conjugate gradient steps. We split CIFAR-10 into a train and validation set, where the validation set is a randomly chosen $10\%$ of the original training set. We instantiate the outer loss as the sum of training and validation losses, whereas the inner optimization problem is defined on the training set. Further details about the experimental setup can be found in Appendix \ref{sec:app-exp-setup}.

\begin{figure*}[t!]
\centering
  \includegraphics[width=0.9\linewidth]{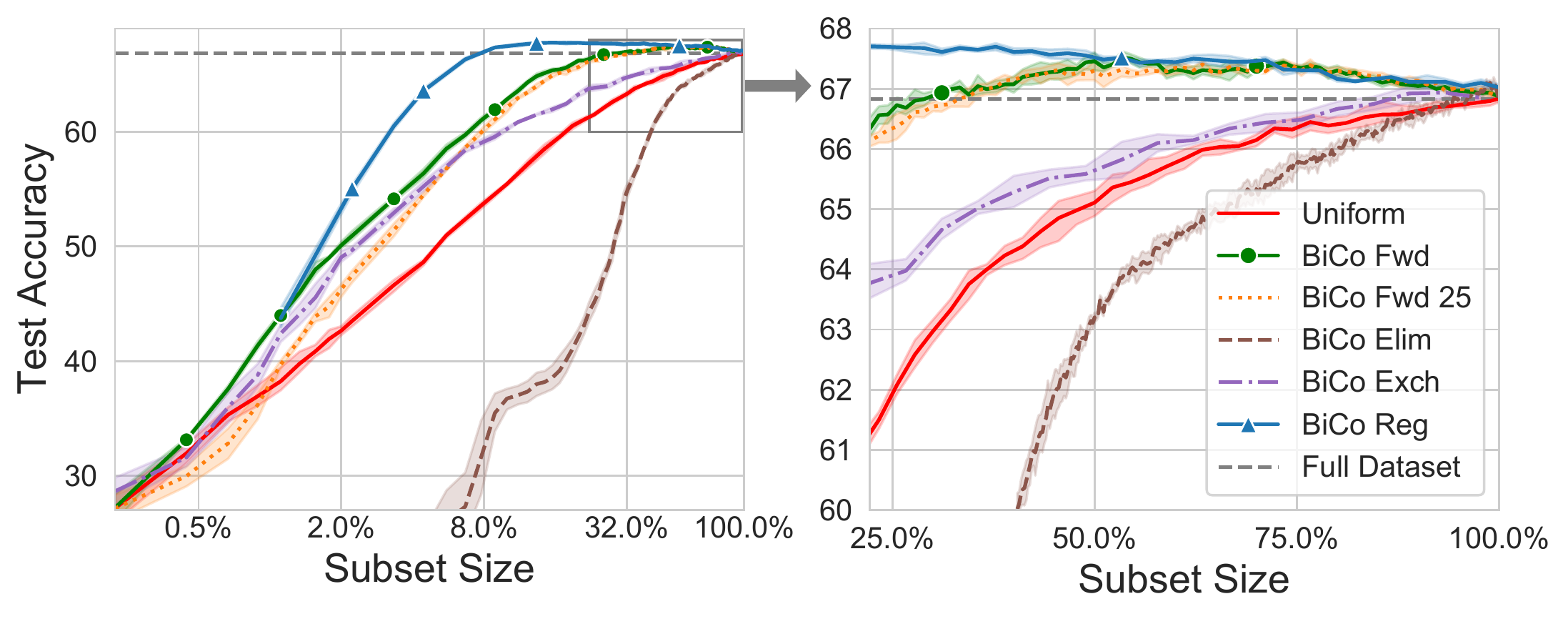}
  \caption{Bilevel coresets for logistic regression on the Nyström feature space of CIFAR-10 CNTK. Building unweighted coresets by forward selection in batches achieves the same performance as one-by-one forward selection after $25\%$  of the points have been selected.  Training on weighted coreset of size $8\%$ of the full data set produced by our method achieves the same performance as training on the full data set. \label{fig:variants}}
  \vspace{-3mm}
\end{figure*}

We study \emph{unweighted} coresets built using one-by-one forward selection, forward selection in batches of 25, exchange, elimination in batches of 200, and exchange with $200$ steps (each step exchanges $1\%$ of the selected points; we found that more steps did not increase the performance). For constructing \emph{weighted} coresets, we solve the regularized version of the bilevel optimization proposed in Section \ref{subsec:variants}. The results are shown in Figure \ref{fig:variants}. We can observe that forward selection in batches  incurs initially a performance penalty  but performs similarly to one-by-one forward selection after $25\%$ of the points have been selected. Both forward selection methods produce coresets of sizes between $33\%$ and $90\%$ on which logistic regression achieves lower test error compared to when trained on the full data set. Similarly to \citet{wang2018data}, we observe that elimination increases the test accuracy in initial iterations; however, it significantly underperforms compared to uniform sampling for generating coresets smaller than $90\%$. Bilevel coresets via regularization (weighted) of size $8\%$ achieve the same performance as training on the full data set. We note that the higher test performance for the weighted coreset with size $20\%$ compared to $90\%$ is due to the higher number of total outer gradient steps performed.

\begin{figure*}[t!]
\centering
  \includegraphics[width=\linewidth]{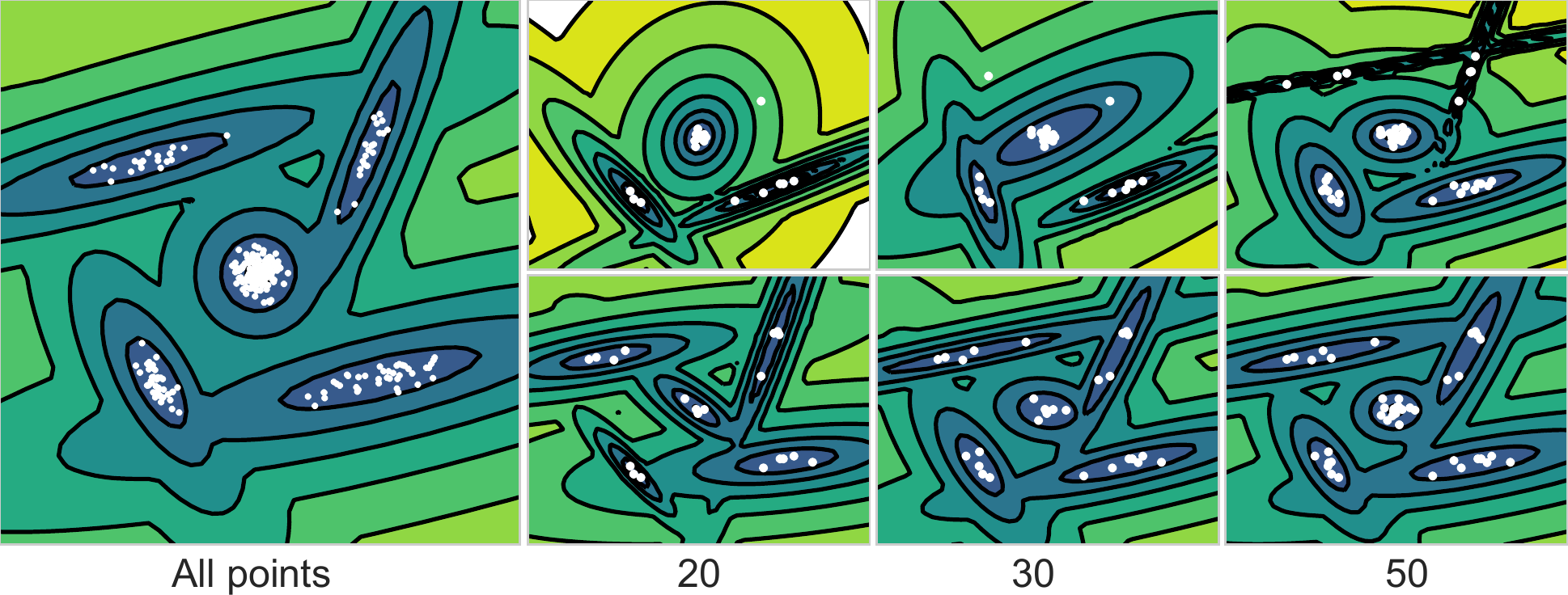}
  \caption{Contours of the log-marginal probability $\log p(x)$ of the Gaussian mixture models (GMM) with $k=5$ components fitted to different subsets of the data. Left: GMM fitted to the full data set; right: GMM fitted to uniform sample (upper row) and to the bilevel coreset (lower row) of sizes indicated by the subscripts. The bilevel coreset provides good approximate density already with $30$ samples.  \label{fig:gmm-contour}}
  \vspace{-3mm}
\end{figure*}

\subsection{Comparison to other Summaries}\label{subsec:comparison}
We compare bilevel coresets to coresets designed for specific models, as well as to other data summarization methods. In all experiments, we observe that other methods do not consistently outperform uniform sampling over all subset sizes in contrast to our method.

\subsubsection{Gaussian Mixture Models}
The first experiment serves as a toy example and proves the versatility of our approach in its broad applicability. In this experiment, we illustrate coreset construction in the \emph{unsupervised} setting of mixture models. We build coresets for Gaussian mixture models with the log marginal probability
\begin{equation*}
    \log p(x) = \log \left( \sum_{i=1}^k \pi_i \mathcal{N}(x|\mu_i, \Sigma_i)\right),
\end{equation*}
where $\{\pi_i\}_{i=1}^k$, $\sum_{i=1}^k \pi_i=1$ are the mixture weights and $\{\mu_i\}_{i=1}^k$ and $\{\Sigma_i\}_{i=1}^k$ are the component means and covariances. The loss function is thus the negative marginal log-likelihood (NLL) $ -\sumin w_i \log p(x_i)$ minimized over the model parameters $\theta:=\{\pi_i,\mu_i, \Sigma_i\}_{i=1}^k$ for the data set $\mathcal{D}=\{x_i\}_{i=1}^n$,
where $w\in \reals_{+}^n$ are the data weights. We generate a synthetic two-dimensional data set so that we can visualize and inspect the choices of the coreset selection. We fit a $k=5$-component Gaussian mixture model to the data by minimizing the NLL using the EM algorithm. To generate the coreset, we use the one-by-one forward selection variant of our algorithm without weight optimization, starting from a random sample of size $10$ and approximate the inverse Hessian-vector product via conjugate gradients.

In Figure \ref{fig:gmm-contour}, we plot the contours of the log-marginal probabilities of the mixtures obtained from fitting the GMM to uniform subsamples and coreset summaries. A coreset of size $30$ already provides accurate mean and covariance estimates, with density contours closely resembling the contours of the model fitted to the full data set. We can observe the following progression of the coreset selection: first, points are picked to represent the modes, after which the component covariance and weight estimates are improved.

\looseness -1 To quantify the improvement obtained by coresets, we measure the relative errors of the negative log-likelihood (NLL) obtained for subsets of different sizes compared to the negative log-likelihood obtained by fitting on the full data set. Furthermore, we also compare to coresets for GMM generated via the sensitivity framework \citep{lucic2017training}. The results in Figure \ref{fig:gmm-re} show an improvement of an order of magnitude by our method.

\begin{figure*}[t!]
\centering
  \includegraphics[width=0.45\linewidth]{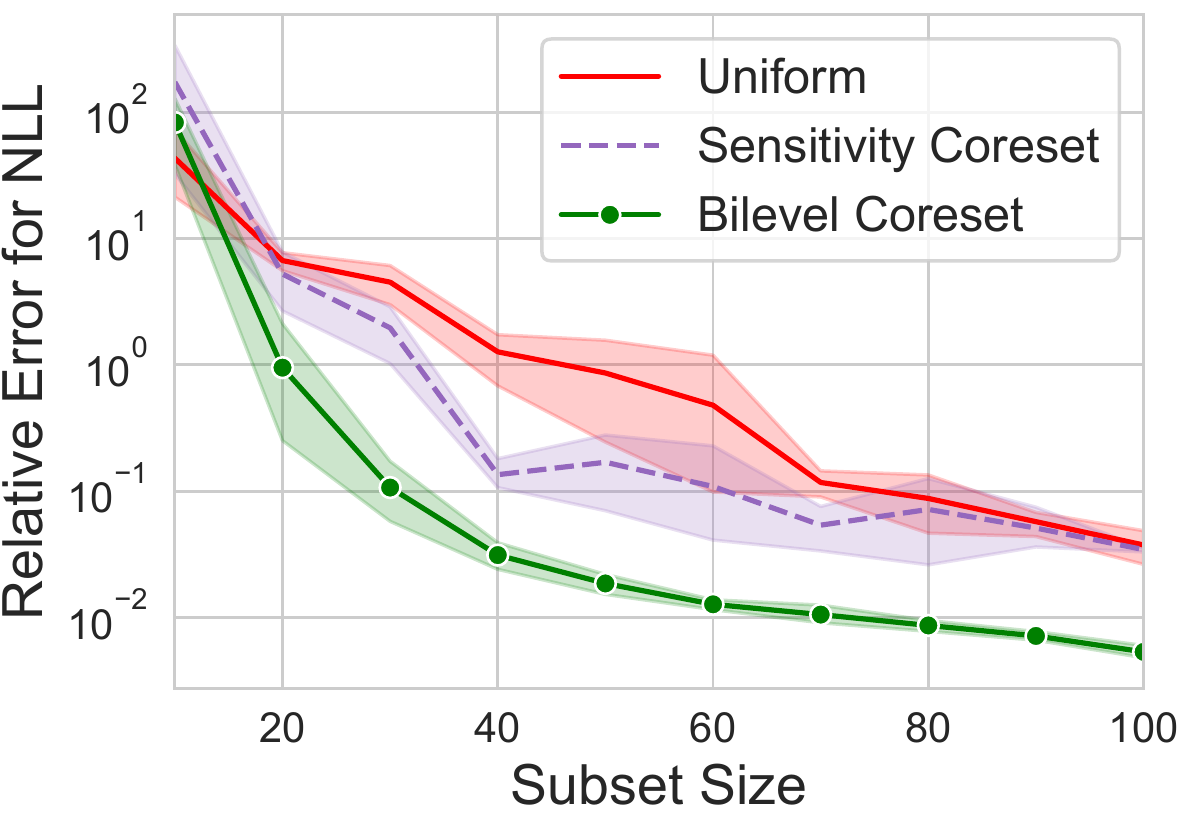}
  \caption{Relative error for the negative log-likelihood of a $k=5$ component GMM obtained by different methods. Our coreset construction outperforms other methods by an order of a magnitude, even those designed specifically for GMMs. \label{fig:gmm-re}}
  \vspace{-3mm}
\end{figure*}

\subsubsection{Logistic Regression} 
\looseness -1 The target model in this  experiment is logistic regression. For binary classification with logistic regression, several coreset constructions have been proposed that serve as our baselines. We choose four standard binary classification data sets \citep{Dua:2019, uzilov2006detection} from the LIBSVM  library\footnote{\href{https://www.csie.ntu.edu.tw/~cjlin/libsvmtools/datasets/binary.html}{https://www.csie.ntu.edu.tw/\texttildelow cjlin/libsvmtools/datasets/binary.html}} for this experiment, of size between $9000$ and $600000$ samples and feature dimensions between $8$ and $123$. We standardize the features and solve the logistic regression on the subsets selected by different methods to compare their test performance with the one achieved by training on the full data set.

\looseness -1 Since the model has low capacity, our framework needs only a small coreset for perfect approximation. Hence, we evaluate the one-by-one forward selection version of our algorithm with weights (Algorithm \ref{alg:bilevel_coreset}, with $150$ outer iterations) and its unweighted version (``BiCo'' and ``BiCo w/ Weights'' in the figures). As for the baselines, we compare to $k$-means++ \citep{arthur2007k} and coresets via sensitivity  \citep{huggins2016coresets}. We also experimented with Hilbert coresets \citep{campbell2019automated}, however, we were unable to tune this method to outperform uniform sampling on these data sets, hence we do not show its performance. We provide detailed description of the baselines in Appendix~ \ref{sec:app-exp-setup}.

\begin{figure*}[t!]
\centering
  \includegraphics[width=0.9\linewidth]{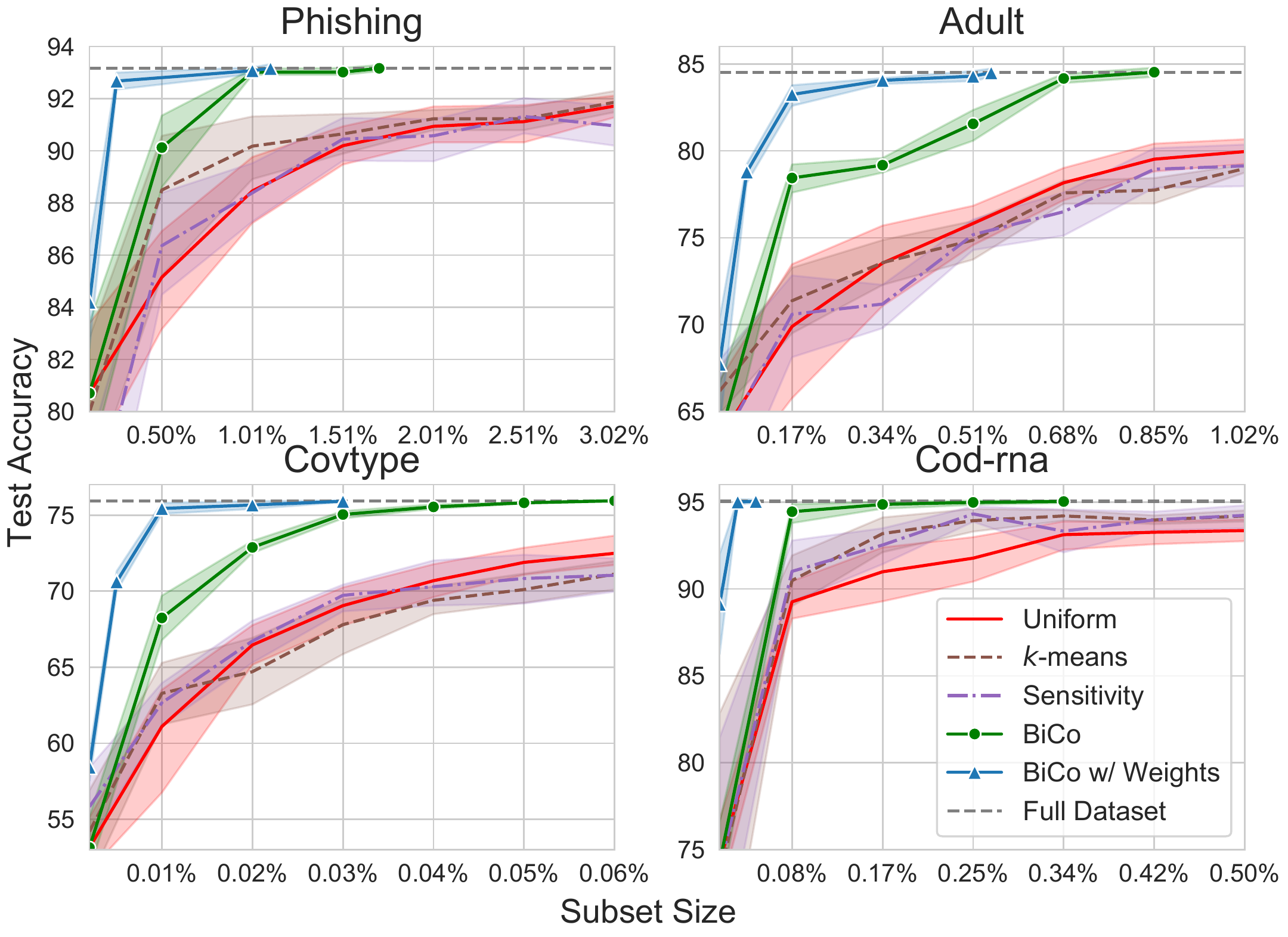}
    \vspace{-2mm}
  \caption{Coresets for binary logistic regression. Our coreset constructions consistently outperform other data summarization approaches, achieving the same performance with $2\%$ of the data as training on the full data set. \label{fig:binary-logistic}}
  \vspace{-3mm}
\end{figure*}

Figure \ref{fig:binary-logistic} shows that our weighted coreset construction needs less than $2\%$ of the data to obtain the same test accuracy as when the model is trained on the full data set. The unweighted variant needs twice as large coreset on average to achieve the same performance, however, it is significantly faster to construct---concretely, it takes $12.2$ seconds on average per data set to construct a coreset of size $100$, which is a factor of $150$ faster than  weighted coreset generation (the speedup factor equals the number of outer iterations). Further details about the experimental setup can be found in Appendix \ref{sec:app-exp-setup}.

\looseness -1 In the following experiment, we investigate coresets for multiclass logistic regression for MNIST and CIFAR-10. For MNIST, we use $500$-dimensional Nyström features to approximate the feature map of the RBF kernel $k(x,y) = \exp(-\gamma \lVert x-y\rVert^2)$ with $\gamma=10^{-3}$. For CIFAR-10, we use the same setup as in Section \ref{subsec:variants}. Figure \ref{fig:multiclass-logistic} shows the comparison of one-by-one unweighted forward selection and weighted coreset generation via regularization (the algorithm is stopped when its test performance drops below the performance of the forward selection variant), to uniform sampling and $k$-means in the feature space (we also compared to $k$-center selection, which underperforms compared to $k$-means), and to the two-stage selection of samples that are most frequently ``forgotten'' during training  (misclassified at some point in training after being classified correctly before; referred to as ``forgetting'' in the figures) \citep{toneva2018an}---we have also experimented with this method in the binary logistic regression experiment, but it significantly underperformed compared to uniform sampling under subset sizes of $300$. Our proposed methods can achieve  compression ratios of over $10$ (data set size divided by smallest data set size required for obtaining the same test accuracy) on both data sets with weighted coresets generated via regularization. 
 
\begin{figure*}[t!]
\centering
\begin{subfigure}[c]{0.49\textwidth}
\centering
  \includegraphics[width=\linewidth]{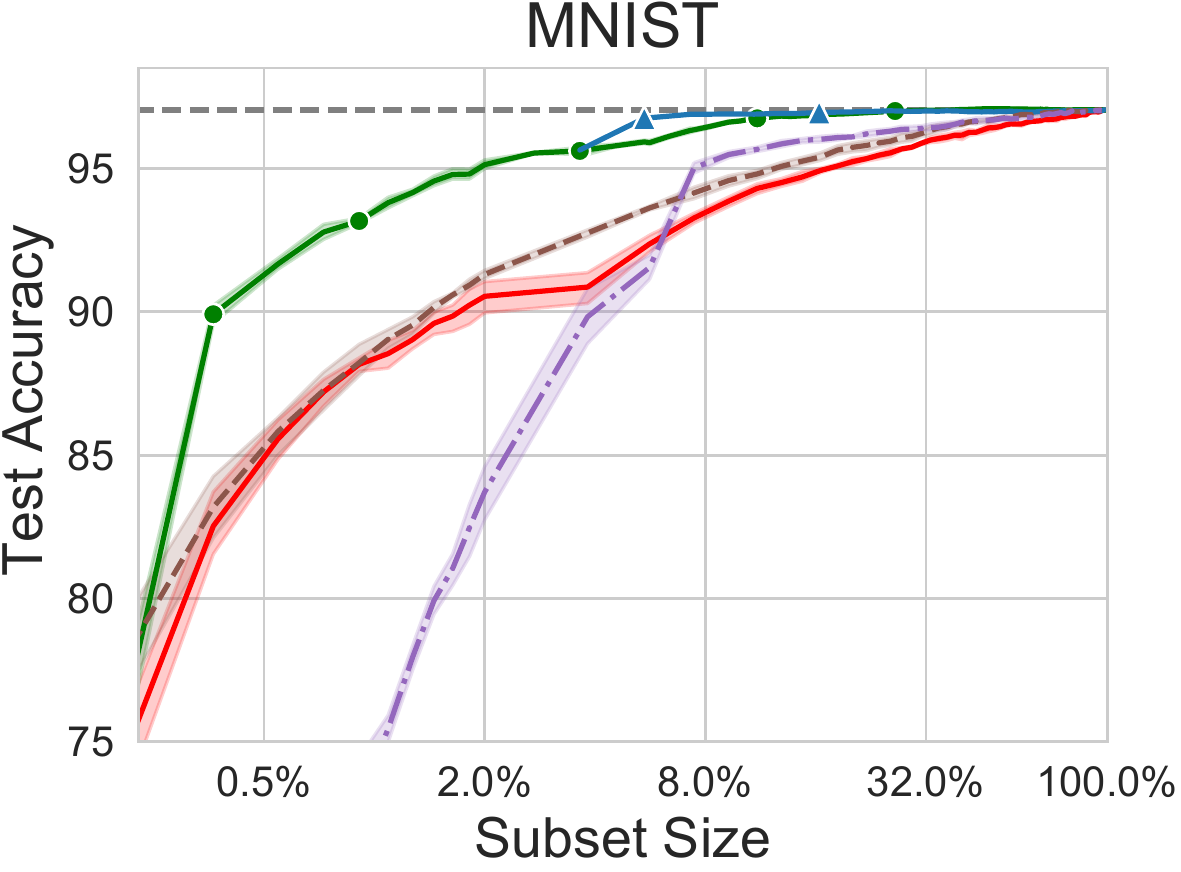}
\end{subfigure}
\begin{subfigure}[c]{0.49\textwidth}
\centering
  \includegraphics[width=\linewidth]{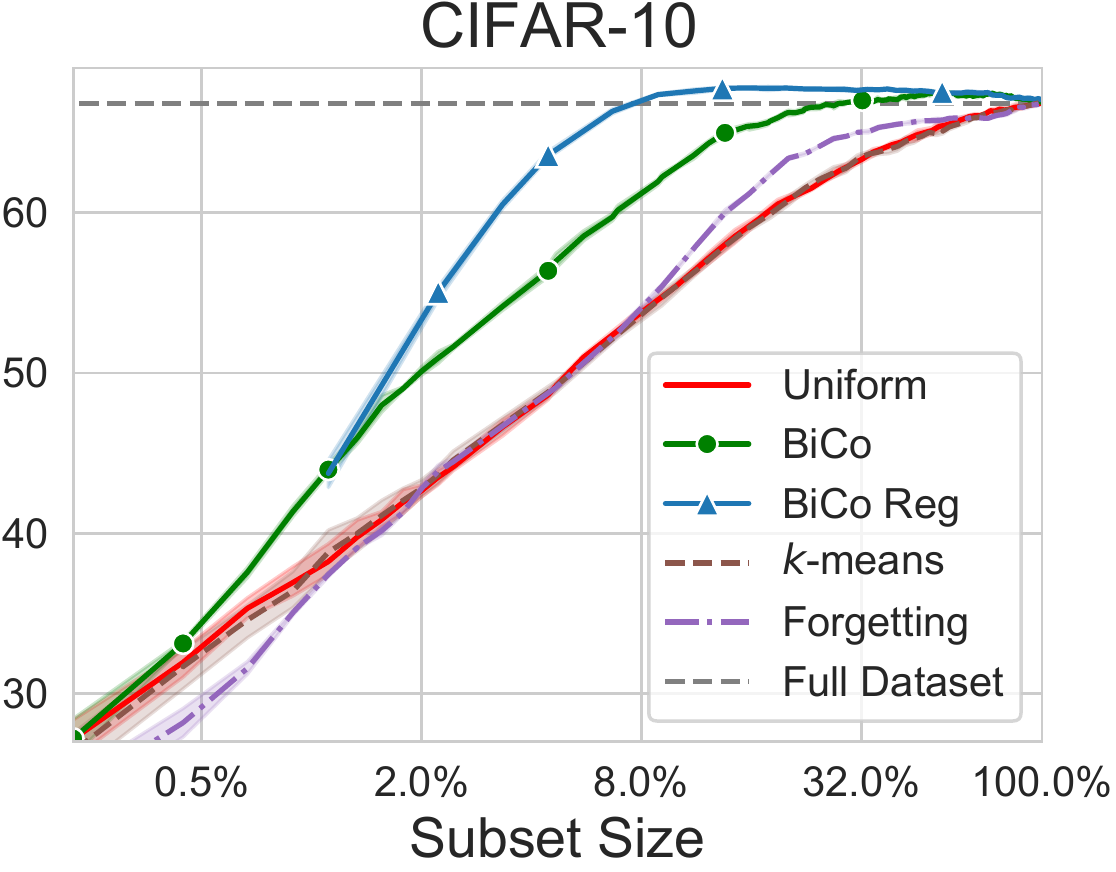}
  \end{subfigure}
\centering
  \caption{Coresets for multiclass logistic regression. Weighted coresets generated via regularization achieve  compression ratios of over $10$ on both data sets. \label{fig:multiclass-logistic}}
\end{figure*}

\subsubsection{Neural Networks} 
For building small coresets ($<1000$) for neural networks, we find that construction via the Neural Tangent Kernel proxy is more effective---the experiments in the following sections concern such settings. For building large coresets, however, working with the original model is more effective. For computationally tractable implicit gradient evaluations for networks with millions of parameters, we evaluate unweighted coreset construction with batch forward selection and inverse Hessian-vector products approximated using the Neumann series 
$\left(\frac{\partial^2 f}{\partial \theta \partial \theta^\top}\right)^{-1} = \lim_{T \to \infty} \sum_{i=0}^T \left(I - \frac{\partial^2 f}{\partial \theta \partial \theta^\top}\right)^i$ \citep{lorraine2019optimizing}.
We truncate the series to $T=100$ terms and we introduce a scaling hyperparameter $\alpha$ for the inner loss $f$, such that the Neumann series approximation is now applied to $\left(\alpha \frac{\partial^2 f}{\partial \theta \partial \theta^\top}\right)^{-1}$. This is to ensure the converge of the Neumann series, for which a necessary and sufficient condition is $\max_j \left| \lambda_j\left(I - \alpha \frac{\partial^2 f}{\partial \theta \partial \theta^\top}\right) \right| < 1$, where $\lambda_j(A)$ denote the $j$-th eigenvalue of $A$ \citep{chen_2005}. In automatic differentiation frameworks, Hessian-vector products can be calculated efficiently without instantiating the Hessian. However, due to memory considerations, we can only afford to evaluate $f$ on a single minibatch of data in the Hessian-vector products, which introduces another layer of approximation through the stochastic Hessian.

\begin{figure*}[t!]
\centering
\begin{subfigure}[c]{0.47\textwidth}
\centering
  \includegraphics[width=\linewidth]{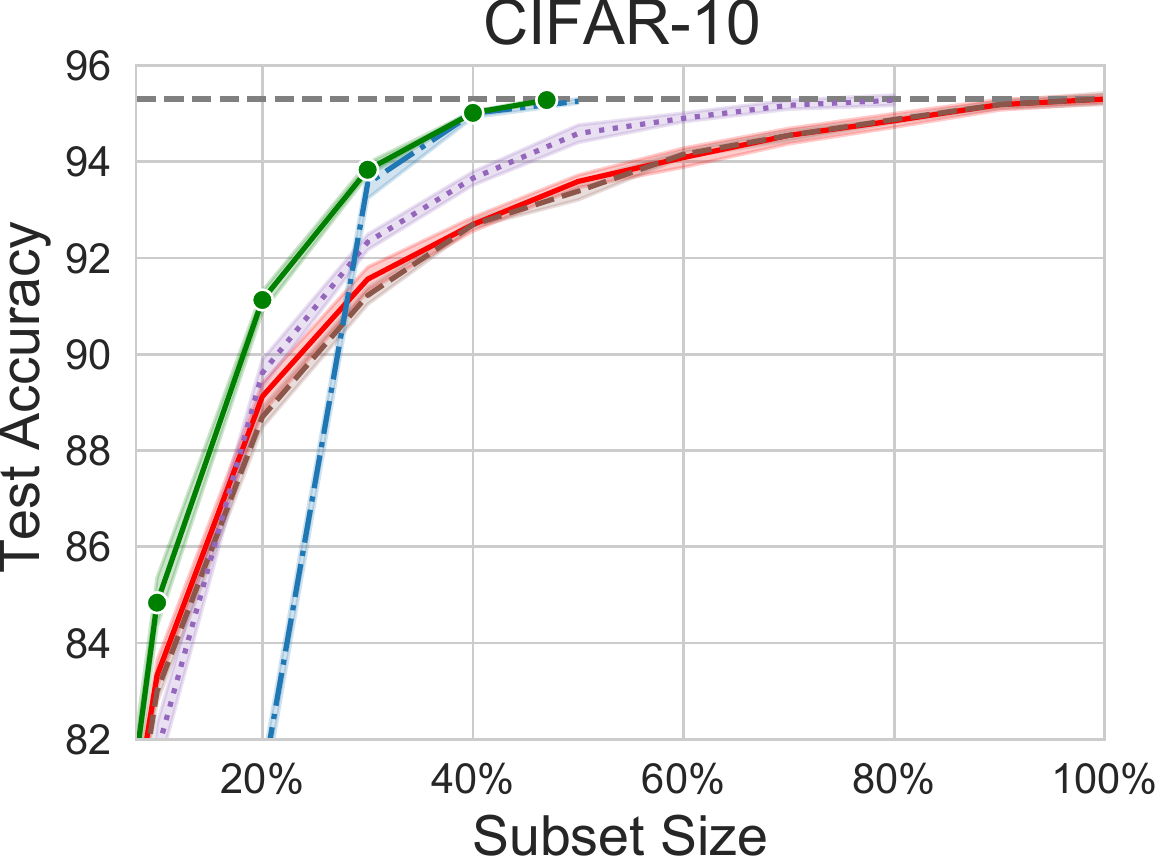}
  \label{fig:no-proxy-cifar}
\end{subfigure}
\begin{subfigure}[c]{0.458\textwidth}
\centering
  \includegraphics[width=\linewidth]{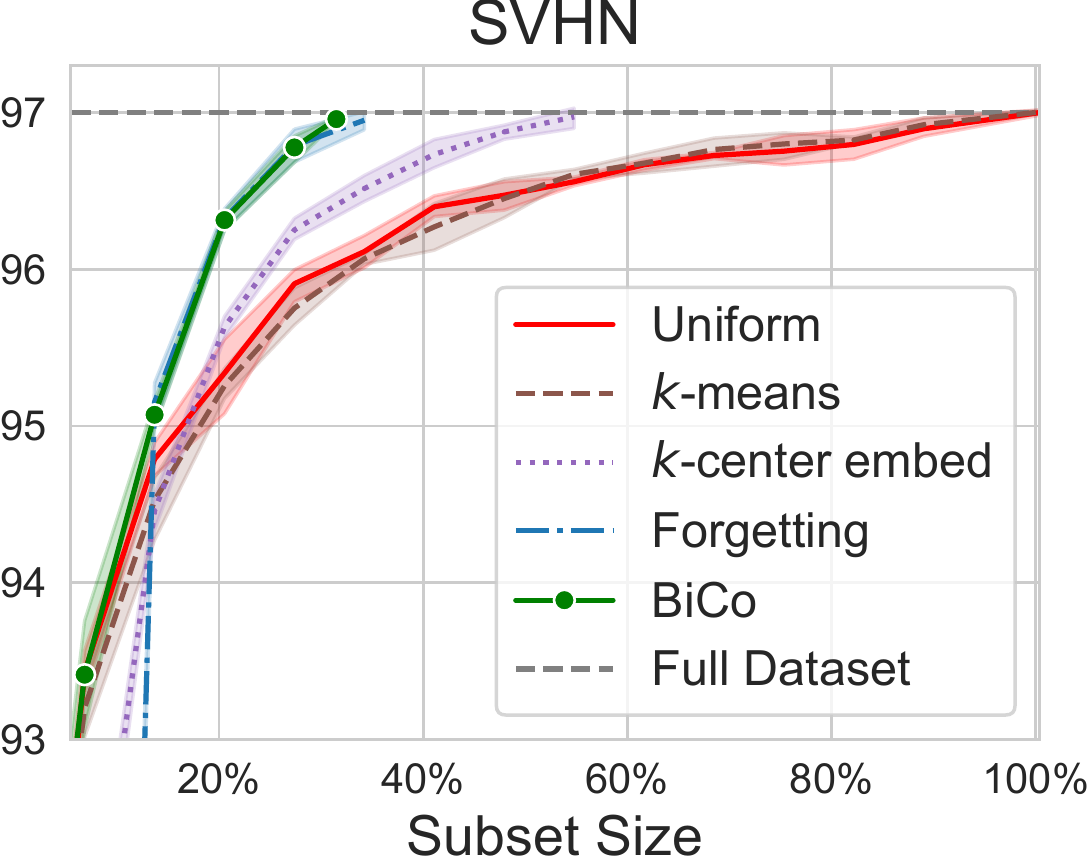}
  \label{fig:no-proxy-svhn}
  \end{subfigure}
\centering
  \caption{Coreset construction with forward batch selection for WideResNet-16-4; bilevel coresets achieve compression ratios of $2$ and $3$.\label{fig:no-proxy}}
\end{figure*}

\looseness -1 We demonstrate the effectiveness of bilevel coresets for  wide residual networks \citep{zagoruyko2016wide} and search for the smallest coreset size such that the test performance matches the test performance of training on the full data set up to a $0.05\%$  tolerance. For computational considerations, we showcase the unweighted coreset construction via forward selection in batches of $250$ points, starting from a random pool of $2500$ points. We evaluate the method by constructing coresets for a WideResNet-16-4 ($2.7$ million parameters) on CIFAR-10 and SVHN \citep{svhn}---for SVHN we only use the train split, containing approximately $73000$ images. We achieved the best results by retraining the network from scratch after every round of selection with SGD with momentum---further details about the training can be found in Appendix \ref{sec:app-exp-setup}.

\looseness -1 We compare in Figure \ref{fig:no-proxy} our unweighted batch forward selection to the following  subset selection methods for neural networks: uniform sampling, $k$-means/$k$-center in the pixel space \citep{v.2018variational}, $k$-means/$k$-center in the last layer embedding of the trained network \citep{sener2018active}, and selecting samples that are most frequently ``forgotten'' during training \citep{toneva2018an}.  We plot each method's test performance until they first reach the test performance of training on the full data set.

We find that, for both CIFAR-10 and SVHN, $k$-means outperforms $k$-center in the feature space, while $k$-center is better for selection based on the last layer embeddings. The performance of $k$-means/$k$-center suggests that simple definitions of redundancy are suboptimal for constructing coresets.  Figure \ref{fig:no-proxy} shows that our method achieves a compression ratio of $2$ on CIFAR-10 and $3$ on SVHN, i.e., it can find a representative subset of the training data of size $23500$ ($47\%$) for CIFAR-10 and $23000$ ($31\%$) for SVHN, such that the WideResNets trained on the chosen subsets achieve the test performance comparable to training on the full data set (within a $0.05$ margin of $95.30$ for CIFAR-10 and $97.01$ for SVHN).
Whereas retaining points that are frequently forgotten \citep{toneva2018an} matches the performance of our method for coreset sizes above $15000$ ($30\%$) for CIFAR-10 and $10000$ ($14\%$) for SVHN, it underperforms uniform sampling in generating small coresets.

 An important application of data summarization is speeding up \emph{hyperparameter tuning}, since the evaluations can be performed on the summary instead of the full data set. In neural architecture search, highly model-specific summaries are undesirable, as they might favor specific architectural choices. To inspect whether the coresets for WideResNet-16-4 are transferable to other architectures, we measure the performance of VGG16 \citep{Simonyan15} and MobileNetV2 \citep{sandler2018mobilenetv2} adapted to CIFAR-10 and SVHN (kernel strides and pooling kernel sizes reduced to accommodate $32\times32$ images) on coresets  of size $23000$; the training procedure is the same as for the WideResNet. Table \ref{table:coreset-transfer} shows that, whereas the transferred coresets do not reach the full data set performance, they perform significantly better than uniform sampling and the transferred $k$-center summary and perform similarly to the ``forgetting'' summary.

\begin{table}[t!]
\centering
\begin{tabular}{@{}c@{\hspace{0.3\tabcolsep}}c@{\hspace{0.3\tabcolsep}}c@{\hspace{0.3\tabcolsep}}c@{\hspace{0.3\tabcolsep}}c@{}}
\toprule
                                                                                   & \multicolumn{2}{c}{\textbf{CIFAR-10}}   & \multicolumn{2}{c}{\textbf{SVHN}}   \\
                                                                                   & \textbf{VGG16}                                  & \textbf{MobileNetV2}                            & \textbf{VGG16}                                & \textbf{MobileNetV2}                          \\ \midrule
\textbf{Uniform}                                                                   & $91.49 \pm 0.16$                                      & $91.71 \pm 0.33$                                    & $94.24 \pm 0.26$                                & $94.22 \pm 0.39$                             \\
\textbf{$k$-center emb.}                                                          & $92.72 \pm 0.20$                                & $92.77 \pm 0.29$                                & $94.59 \pm 0.21$                              &  $94.83 \pm 0.34$                           \\
\textbf{Forgetting} & $\mathbf{93.75 \pm 0.23}$ & $\mathbf{93.80 \pm 0.08}$ & $\mathbf{95.47 \pm 0.17 }$  & $\mathbf{95.30 \pm 0.16}$ \\
\textbf{BiCo} & $\mathbf{93.66 \pm 0.15}$                                   & $\mathbf{93.65 \pm 0.22}$                            & $\mathbf{95.43 \pm 0.15}$                              & $\mathbf{95.53 \pm 0.20}$                              \\ \midrule
\textbf{Full data set}                                                              & \multicolumn{1}{l}{ $94.23 \pm 0.14$}            & \multicolumn{1}{l}{$94.46 \pm 0.15$}            & \multicolumn{1}{l}{$95.93 \pm 0.07$}          & \multicolumn{1}{l}{$96.04 \pm 0.09$}          \\ \bottomrule
\end{tabular}
    \caption {Coresets of size $23000$ for WideResNet-16-4 transferred to VGG16 and MobileNetV2.  Our method provides similar transfer performance to ``forgetting'' \citep{toneva2018an}, while both outperform other methods.\label{table:coreset-transfer}}
\end{table}

We can improve the transferability of the coreset by building joint coresets for mulitple models, as proposed in Section \ref{subsec:joint-coresets}.
In the following experiment, we generate a joint coreset for WideResNet-16-4 and VGG16 and evaluate the resulting coreset for transferability on MobileNetV2. For this, we use a simple heuristic for approximating the solution of Equation \eqref{eq:bilevel-multiple} with $\lambda=1$: similarly to the previous experiment, we generate the coreset by forward greedy selection in batches of $250$ by alternating the model in each step (i.e., we select a new batch of points for the WideResNet, then for VGG16). The results in Table \ref{table:coreset-transfer-v2} show that this simple heuristic improves the effectiveness of the joint coreset on VGG16 and the transferability to MobileNetV2 at the expense of small performance degradation on WideResNet.

\begin{table}[t!]
\centering
\begin{tabular}{cccc}
\toprule
\textbf{data set}                   & \textbf{Architecture}    & \textbf{\begin{tabular}[c]{@{}c@{}}BiCo\\ WRN\end{tabular}} & \textbf{\begin{tabular}[c]{@{}c@{}}BiCo\\ WRN + VGG\end{tabular}} \\ \midrule
\multirow{3}{*}{\textbf{CIFAR-10}} & \textbf{WideResNet-16-4} & $95.24 \pm 0.06$                                               & $95.18 \pm 0.07$                                                     \\
                                   & \textbf{VGG16}           & $93.66 \pm 0.15$                                               & $93.88 \pm 0.16$                                                     \\
                                   & \textbf{MobileNetV2}     & $93.65 \pm 0.22$                                               & $93.79 \pm 0.18$                                                     \\ \midrule
\multirow{3}{*}{\textbf{SVHN}}     & \textbf{WideResNet-16-4} & $96.97 \pm 0.02$                                               & $96.88 \pm 0.09$                                                     \\
                                   & \textbf{VGG16}           & $95.43 \pm 0.15$                                               & $95.75 \pm 0.14$                                                     \\
                                   & \textbf{MobileNetV2}     & $95.53 \pm 0.20$                                               & $95.76 \pm 0.10$                                                     \\ \bottomrule
\end{tabular}
\caption {Coresets of size $23000$ for WideResNet-16-4 transferred to VGG16 and MobileNetV2 (Coreset WRN); coresets constructed jointly for WideResNet-16-4 and VGG16 transferred to MobileNetV2 (Coreset WRN + VGG). \label{table:coreset-transfer-v2}}
\end{table}

\subsection{Continual Learning} \label{subsec:cl-streaming}
\looseness -1 We compare our approach to existing replay memory management strategies by conducting an extensive experimental study. We focus continual learning setting where the learning algorithm is a neural network, and we keep the network structure fixed during learning on different tasks. This is known as the  ``single-head'' setup, which is more challenging  than instantiating new top layers for different tasks (``multi-head'' setup) and does not assume any knowledge of the task descriptor during training and test time. For validating our coreset construction in the continual learning setting, we use the following $10$-class classification data sets:
\begin{itemize}
    \item PMNIST \citep{goodfellow2013empirical}: consist of $10$ tasks, where in each task all images' pixels undergo the same fixed random permutation.
    \item SMNIST \citep{pmlr-v70-zenke17a}: MNIST is split into $5$ tasks, where each task consists of distinguishing between consecutive image classes. 
    \item SCIFAR-10: similar to SMNIST on CIFAR-10.
\end{itemize}
\looseness -1 Following \citet{NIPS2019_9354}, we keep a subsample of $1000$ points  for each task for all data sets while we retain the full test sets.  For PMNIST, we use a fully connected net with two hidden layers with $100$ units, ReLU activations, and dropout with probability $0.2$ on the hidden layers. For SMNIST and SCIFAR-10, we use  a CNN consisting of two blocks of convolution, dropout, max-pooling, and ReLU activation, where the number of filters are $32$ and $64$ and have size $5\times5$, followed by two fully connected layers of size $128$ and $10$ with dropout. The dropout probability is $0.5$. We fix the replay memory size $m=100$ for tasks derived from MNIST. For  SCIFAR-10, we the  set the memory size to  $m=200$. We train our networks for $400$ epochs using Adam with step size $5\cdot10^{-4}$ after each task. 

\looseness -1 We perform an extensive comparison under the protocol described above of our method to other  data selection methods proposed in the continual learning or the coreset literature---the detailed description of the baselines can be found in Appendix \ref{sec:app-cl-streaming}.  For each method, we report the test accuracy averaged over tasks on the best buffer regularization strength $\beta$. For a fair comparison to other methods in terms of summary generation time, we restrict our coreset selection method in all of the continual learning experiments to forward selection of unweighted coresets via the Nyström proxy method with $q=512$ (Section \ref{subsec:variants})---we use the Neural Tangent Kernel \citep{jacot2018neural} corresponding to the chosen architecture, without dropout and max-pooling  obtained with the library of \citet{neuraltangents2020}.

\begin{table*}[t!]
    \centering
\begin{tabular}{@{}cccc@{}}
\toprule
 \textbf{Method}                       & \textbf{PMNIST}                   & \textbf{SMNIST}                  & \textbf{SCIFAR-10} \\ \midrule Training w/o replay                                               & $73.82 \pm 0.49$                     & $19.90 \pm 0.03$                     & $19.95 \pm 0.02$           \\
  Uniform sampling                                    & $78.46 \pm 0.40$  & $92.80 \pm 0.79$  & $\mathbf{43.22 \pm 0.62}$   \\
  $k$-means of features    & $78.34 \pm 0.49$  & $93.40 \pm 0.56$  & $\mathbf{43.96 \pm 0.78}$  \\
  $k$-means of embeddings & $\mathbf{78.84 \pm 0.82}$  & $93.96 \pm 0.48$  & $\mathbf{44.37 \pm 0.76}$     \\
  $k$-means of grads      & $76.71 \pm 0.68$  & $87.26 \pm 4.08$  & $36.99 \pm 1.30$    \\
  $k$-center of features  & $77.32 \pm 0.47$  & $93.16 \pm 0.96$  &  $36.90 \pm 1.09$   \\
  $k$-center of embeddings & $\mathbf{78.57 \pm 0.58}$  & $93.84 \pm 0.78$  & $40.81 \pm 0.53$    \\
  $k$-center of grads      & $77.57 \pm 1.12$  & $88.76 \pm 1.36$  & $35.11 \pm 1.66$  \\
  Gradient matching  & $78.00 \pm 0.57$  & $92.36 \pm 1.17$  & $\mathbf{43.69 \pm 0.73}$   \\
  Max entropy samples       & $77.13 \pm 0.63$  & $91.30 \pm 2.77$  & $35.31 \pm 1.57$   \\
  Hardest samples            & $76.79 \pm 0.55$  & $89.62 \pm 1.23$  & $32.31 \pm 0.88$  \\
  FRCL's selection & $78.01 \pm 0.44$  & $91.96 \pm 1.75$  &  $\mathbf{43.35 \pm 1.15}$   \\
  iCaRL's selection  & $\mathbf{79.68 \pm 0.41}$  & $93.99 \pm 0.39$  & $\mathbf{44.22 \pm 1.31}$   \\
\textbf{BiCo}                                    & $\mathbf{79.33 \pm 0.51}$ & $\mathbf{95.81 \pm 0.28}$  & $\mathbf{44.51 \pm 1.41}$  \\
 \bottomrule
\end{tabular}
    \caption {Continual learning  with replay memory size of $100$ for versions of MNIST and $200$ for CIFAR-10. We report the average test accuracy over the tasks with one standard deviation over $5$ runs with different random seeds. Our coreset construction performs consistently among the best.   \label{table:cl-streaming-res-full}}
    \end{table*}
    
    \looseness -1 We report the results in Table \ref{table:cl-streaming-res-full}. We note that while several methods outperform uniform sampling on some data sets, only our method is consistently outperforming it on all data sets.  For inspecting the gains obtained by our method over uniform sampling, we plot the final per-task test accuracy on SMNIST in Figure \ref{fig:results-per-task-SMNIST}. We notice that our method's advantage does not come from excelling at one task but rather by representing the majority of tasks better than uniform sampling. In Appendix \ref{sec:app-cl-streaming}, we present a study of the effect of the replay memory size.
    
    \looseness -1  Our method can also be combined with different approaches to continual learning, such as  variational continual learning (VCL) \citep{v.2018variational}. Whereas VCL also relies on data summaries, it was proposed with uniform and $k$-center summaries. We replace these with our coreset construction, and, following \citet{v.2018variational}, we conduct an experiment using a single-headed two-layer network with $256$ units per layer and ReLU activation, where the coreset size is set to $20$ points per task. The results in Table \ref{table:vcl-stream-res} corroborate the advantage of our method over simple selection rules and suggest that VCL can benefit from representative coresets.
    
    \begin{table*}[t]
\begin{minipage}[c]{0.47\textwidth}
    \centering
    \begin{tabular}{@{}ccc@{}}
        \toprule
         \textbf{Method}                       & \textbf{PMNIST}                   & \textbf{SMNIST}                  \\ \midrule
         $k$-center & $85.33 \pm 0.67$  & $65.71 \pm 3.17$  \\
           Uniform & $84.96 \pm 0.17$ & $80.06 \pm 2.19$\\
          \textbf{BiCo} & $\mathbf{86.11 \pm 0.25}$ & $\mathbf{84.62 \pm 0.89}$ \\
       \bottomrule
        \end{tabular}
          \caption {VCL with $20$ points/task. VCL can benefit from our coreset construction.  \label{table:vcl-stream-res}}
        \vspace{-3mm}
\end{minipage}
\hfill
 \begin{minipage}[c]{0.5\textwidth}
    \centering
  \includegraphics[width=0.7\linewidth]{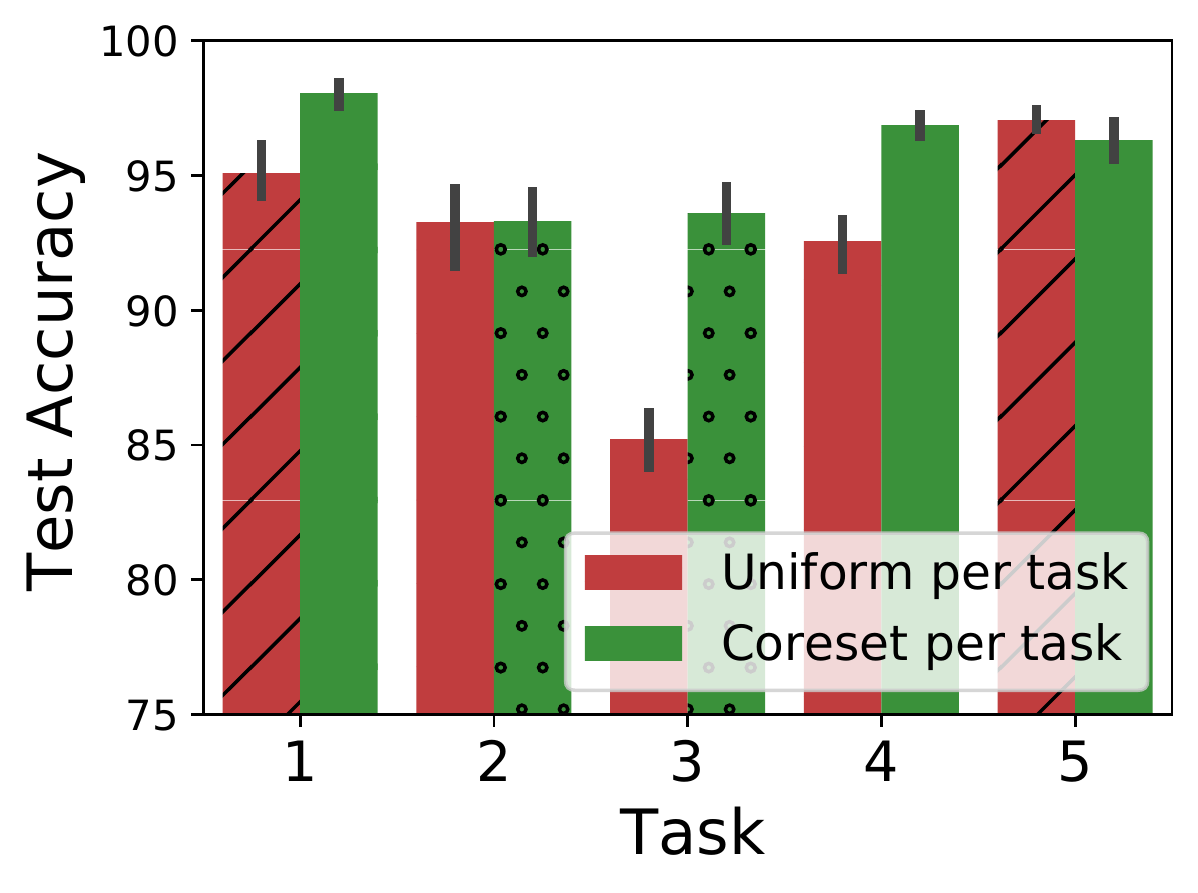}
  \vspace{-3mm}
    \captionof{figure}{Per-task test accuracy on SMNIST.}
    \label{fig:results-per-task-SMNIST}
    \vspace{-3mm}
\end{minipage}

\end{table*}

\subsection{Streaming}
Streaming using neural networks has received little attention. To the best of our knowledge, the replay-based approach to streaming has been tackled by  \citet{NIPS2019_9354}, who propose to select points in the replay memory that maximize the angles between pairs of gradients corresponding to the selected points, \citet{hayes2019memory}, who propose storing cluster centers per class and merging closest clusters for reduction, and \citet{chrysakis2020online}, who propose to class-balance reservoir sampling for imbalanced streams. We compare our method with these methods experimentally.

For evaluating our proposed coreset construction method for training neural networks in the streaming setting, we modify PMNIST and SMNIST by first concatenating all tasks for each data set and then streaming them in batches of size $125$. We fix the replay memory size to $m=100$ and set the number of slots $s=10$---the replay buffer is managed by the merge-reduce framework (Section \ref{subsec:streaming}). We train the models for $40$ gradient descent steps using Adam with step size $5\cdot10^{-4}$ after each batch. We use the same architectures as in the continual learning experiments. 
 
\begin{table*}[t!]
        \begin{minipage}{0.55\textwidth}
            \centering
\begin{tabular}{@{}ccc@{}}
        \toprule
         \textbf{Method}                       & \textbf{PMNIST}                   & \textbf{SMNIST}                  \\ \midrule
         
           Train on coreset  & $45.03 \pm 1.31$ & $89.99 \pm 0.76$\\
             Reservoir                             &  $73.21  \pm 0.59$ & $90.72  \pm 0.97$    \\
             \textbf{BiCo}                                   & $\mathbf{74.49 \pm 0.59}$ & $\mathbf{92.51 \pm 1.30}$  \\
       \bottomrule
        \end{tabular}
            \caption {Streaming with replay memory of size $100$. BiCo uses the merge-reduce buffer. \label{table:stream-res}}
        \end{minipage}
        \begin{minipage}{0.44\textwidth}
    \begin{tabular}{@{}ccc@{}}
        \toprule
          \textbf{Method}                       & \textbf{SMNIST}                   & \textbf{SCIFAR-10}                 \\ \midrule
          Reservoir                                   & $80.60 \pm 4.36$ &  $30.42 \pm 0.93$    \\
        CBRS    & $89.71 \pm 1.31$  & $\mathbf{37.51 \pm 1.15}$   \\
         \textbf{BiCo}  & $\mathbf{92.37 \pm 0.27}$ & $\mathbf{37.09 \pm 0.65}$   \\
       \bottomrule
    \end{tabular}
            \caption {Imbalanced streaming on SMNIST and SCIFAR-10.  \label{table:imbalanced-stream-main}}
        \end{minipage}
    \end{table*}

 We compare our coreset selection to reservoir sampling \citep{vitter1985random} and the sample selection methods of \citet{NIPS2019_9354} and \citet{hayes2019memory}. 
 We were unable to tune the latter two to outperform reservoir sampling, except the gradient-based selection method of \citet{NIPS2019_9354} on PMNIST, achieving a test accuracy of $74.43 \pm 1.02$.
 Table \ref{table:stream-res} shows the dominance of our strategy over reservoir sampling. The table also shows the performance on training only once in the end of the stream on the created coreset, which alone provides strong performance, confirming the merge-reduce framework's validity. We have also experimented with streaming on CIFAR-10 with buffer size $m=200$, where our coreset construction did not outperform reservoir sampling. However, when the task representation in the stream is imbalanced, our method has significant advantages.
 
\looseness -1 The setup of the streaming experiment favors reservoir sampling, as the data in the stream from different classes is balanced. We illustrate the benefit of our method in the more challenging scenario when the class representation is {\em imbalanced}. Similarly to \citet{NIPS2019_9354}, we create imbalanced streams from SMNIST and SCIFAR-10, by retaining $200$ random samples from the first four tasks and $2000$ from the last task. In this setup, reservoir sampling will underrepresent the first tasks. For SMNIST we set the replay buffer size to $m=100$ while for SCIFAR-10 we use $m=200$. We evaluate the test accuracy on the tasks individually, where we do not undersample the test set. We train the same CNN as in the continual learning experiments on the two imbalanced streams and set the number of slots to $s=1$. We compare our method to reservoir sampling and class-balancing reservoir sampling (CBRS) \citep{chrysakis2020online}. The results in Table \ref{table:imbalanced-stream-main} confirm the flexibility of our framework and show that it is competitive with CBRS, which is specifically designed for imbalanced streams.

\subsection{Batch Active Learning} \label{subsec:bal}

We evaluate our proposed method focusing on the audio domain, where semi-supervised batch active learning has not yet been studied to the best of our knowledge. Our first contribution in this section is showing that semi-supervised strategies proposed in the image domain are also highly effective in audio keyword recognition tasks. Then, we show  our batch selection strategy significantly outperforms other acquisition strategies on these tasks. Whereas our strategy is oblivious to the SSL algorithm, we choose MixMatch \citep{berthelot2019mixmatch} as the semi-supervised learning algorithm due to its simplicity, ease of adaptation to the audio domain, and strong empirical performance.

\looseness -1  For demonstrating the effectiveness of SSL and its combination with active learning in the audio domain, we focus on the Spoken Digit data set \citep{spokendigit} ($2700$ utterances, $10$ classes) and Speech Commands V2 \citep{speechcommandsv2} ($85000$ utterances, $35$ classes) data sets, both containing utterances of the length of one second or shorter. With the goal of applying deep neural network architectures from the image domain with minimal adaptations, we map the utterances to $32\times32$ mel spectrograms by first resampling them to $16$kHz and applying the mel feature extraction with of window length of $128$ ms, hop length of $32$ ms and $32$ bins.    

We first investigate the advantages of data augmentation and semi-supervised learning. Our model is a Wide ResNet-28-10 \citep{zagoruyko2016wide} with weight decay of $10^{-4}$ and  without dropout, whereas the SSL algorithm is MixMatch with two augmentations for label guessing and unlabeled cost weight $\lambda_u=10$, with other hyperparameters are set to their defaults \citep{berthelot2019mixmatch}. As for data augmentation, we apply the following transformations in order with $0.5$ probability: i) amplitude change by $a \sim U(0.8, 1.2)$, ii)  audio speed change by  $s \sim U(0.8, 1.2)$, iii) random time shifts by $t$ ms, where $t \sim U(-250, 250)$, iv) mixing in background noise with SNR $r$ dB, where $r \sim U(0, 40)$; we use the noise segments from the Speech Commands data set.  

\begin{table}[t!]
\centering
\begin{tabular}{@{}cccc@{}}
\toprule
            & \multicolumn{3}{c}{\textbf{Spoken Digit Nr. of Labeled Samples}}      \\
\textbf{Method}               & $\mathbf{10}$       & $\mathbf{30}$       & $\mathbf{60}$      \\ \midrule
\textbf{Supervised w/o augm.} & $17.33 \pm 3.03$  & $35.17 \pm 9.74$  & $54.56 \pm 5.67$ \\
\textbf{Supervised w augm.}   & $\mathbf{44.06 \pm 6.98}$  & $\mathbf{63.33 \pm 5.25}$  & $79.17 \pm 3.17$ \\
\textbf{MixMatch}             & $\mathbf{55.78 \pm 11.88}$ & $\mathbf{72.67 \pm 10.46}$ & $\mathbf{87.83 \pm 3.91}$ \\ \bottomrule
         &  \multicolumn{3}{c}{\textbf{Speech Commands Nr. of Labeled Samples}}    \\
               & $\mathbf{50}$      & $\mathbf{100}$     & $\mathbf{200}$     \\ \midrule
\textbf{Supervised w/o augm.} & $6.30 \pm 0.66$  & $12.03 \pm 2.01$ & $34.20 \pm 0.91$ \\
\textbf{Supervised w augm.}   & $23.26 \pm 2.27$ & $36.94 \pm 1.87$ & $54.34 \pm 1.46$ \\
\textbf{MixMatch}             & $\mathbf{56.19 \pm 3.02}$ & $\mathbf{74.52 \pm 5.24}$ & $\mathbf{87.94 \pm 2.70}$ \\
\bottomrule
\end{tabular}
\caption {Supervised and semi-supervised learning with uniformly chosen labeled subsets of Spoken Digit  \citep{spokendigit} and Speech Commands data set \citep{speechcommandsv2}. \label{table:fsd-results}}
\end{table}

\looseness -1 We train the models with Adam with initial learning rate $10^{-3}$ cosine annealed to $0$ over $30$ epochs. The results in Table \ref{table:fsd-results}   demonstrate the superiority of semi-supervised learning via MixMatch on the chosen keyword recognition tasks. These results are also strong indicators for the necessity of evaluating batch active learning in the semi-supervised setting. For this, we compare our proposed method with batch selection strategies compatible with semi-supervised learning. We note that some batch selection strategies not applicable with SSL: prominent examples are Bayesian techniques since the common semi-supervised losses do not have Bayesian interpretations. To this end, we implement uniform subsampling, max-entropy selection (predictions averaged over two augmentations), selection based on the $k$-center algorithm in the last layer of the trained network \citep{sener2018active}, the consistency-based batch selection of \citet{gao2019consistency} (with five augmentations for calculating the variance), and BADGE \citep{Ash2020Deep}, that selects the batch based on the $k$-means centers of the last layer gradient embeddings of the hard pseudo-labeled $\du$. 

For our proposed method, we solve the coreset selection problem in Equation \eqref{eq:active-learning-bilevel} with the CNTK proxy with cross-entropy loss (Section \ref{subsec:variants}) with $2048$-dimensional features and we add $10^{-4}$ $L_2$ penalty to the inner objective, turning it into strongly convex multiclass logistic regression problem. Furthermore, we use data augmentations for the inner problem: for each labeled point, we presample $100$ augmentations and concatenate them for batch gradient descent. We perform one-by-one forward selection, with approximate implicit gradients obtained using $100$ steps of conjugate gradients to generate the unweighted coreset. 

\begin{figure*}[t!]
\centering
 \hspace{5mm} 
\begin{subfigure}[c]{\textwidth}
\begin{subfigure}[c]{0.49\textwidth}
\centering
  \includegraphics[width=\linewidth]{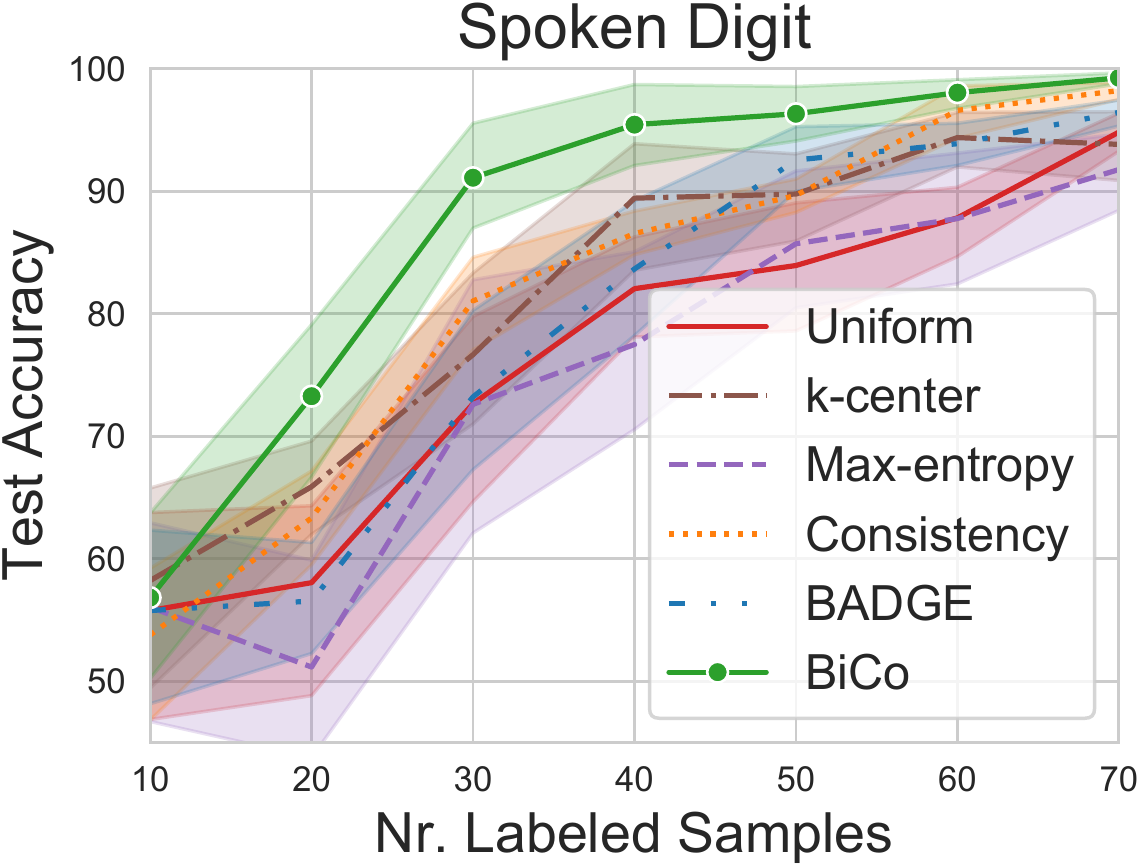}
   \hspace{5mm} 
\end{subfigure}
\begin{subfigure}[c]{0.49\textwidth}
\centering
  \includegraphics[width=\linewidth]{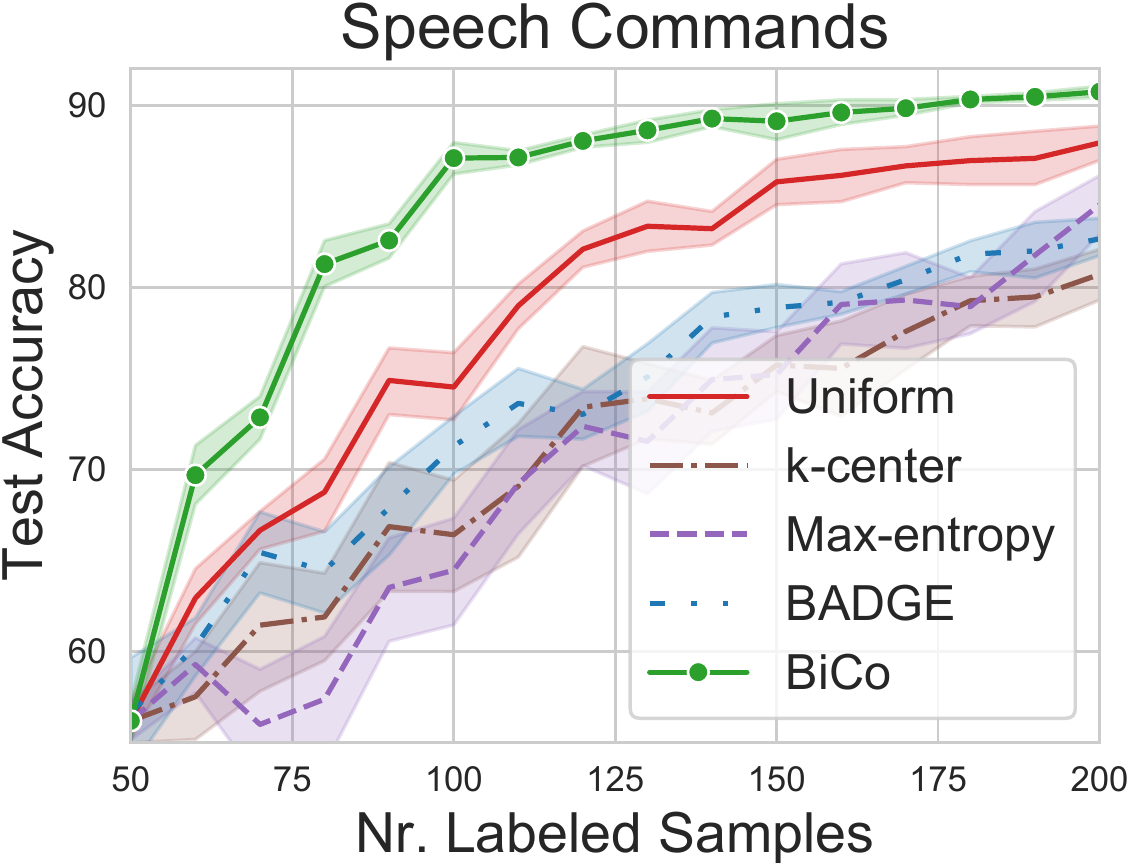}
   \hspace{5mm} 
  \end{subfigure}
\end{subfigure}
\begin{subfigure}[c]{\textwidth}
\centering
  \begin{tabular}{ccccc}
\hline
                  & \multicolumn{4}{c}{\textbf{Nr. of Labeled Samples}}                                  \\
                  & \multicolumn{2}{c}{\textbf{Spoken Digit}} & \multicolumn{2}{c}{\textbf{Speech Commands}} \\
\textbf{Method}   & $\mathbf{40}$         & $\mathbf{70}$        & $\mathbf{100}$        & $\mathbf{200}$        \\ \hline
\textbf{Uniform}  & $82.06 \pm 5.31$    & $94.83 \pm 2.09$   & $74.52 \pm 5.24$    & $87.94 \pm 2.70$    \\
\textbf{Max-ent.} & $77.50 \pm 10.23$   & $91.78 \pm 4.40$   & $64.46 \pm 8.44$    & $84.53 \pm 4.34$    \\
\textbf{k-center} & $\mathbf{89.44 \pm 7.50}$    & $93.83 \pm 3.92$   & $66.41 \pm 8.76$    & $80.71 \pm 3.86$    \\
\textbf{Consist.} & $86.56 \pm 2.33$    & $\mathbf{98.22 \pm 0.96}$   & -                   & -                   \\
\textbf{BADGE} & $83.67 \pm 7.72$    & $96.44 \pm 1.46$   & $71.28 \pm 4.66$    & $82.66 \pm 2.90$    \\
\textbf{BiCo}  & $\mathbf{95.44 \pm 4.67}$    & $\mathbf{99.27 \pm 0.60}$   & $\mathbf{87.10 \pm 2.39}$    & $\mathbf{90.74 \pm 0.85}$    \\ \hline
\end{tabular}

\end{subfigure}
\centering
  \caption{Batch active learning with batch size $m=10$ under semi-supervised training with MixMatch \citep{berthelot2019mixmatch}. Results averaged over six random seeds, shaded areas represent one standard deviation. Our method provides a significant advantage with a small labeled pool.  \label{fig:batch-al}}
\end{figure*}

\begin{figure*}[t!]
\centering
  \includegraphics[width=0.6\linewidth]{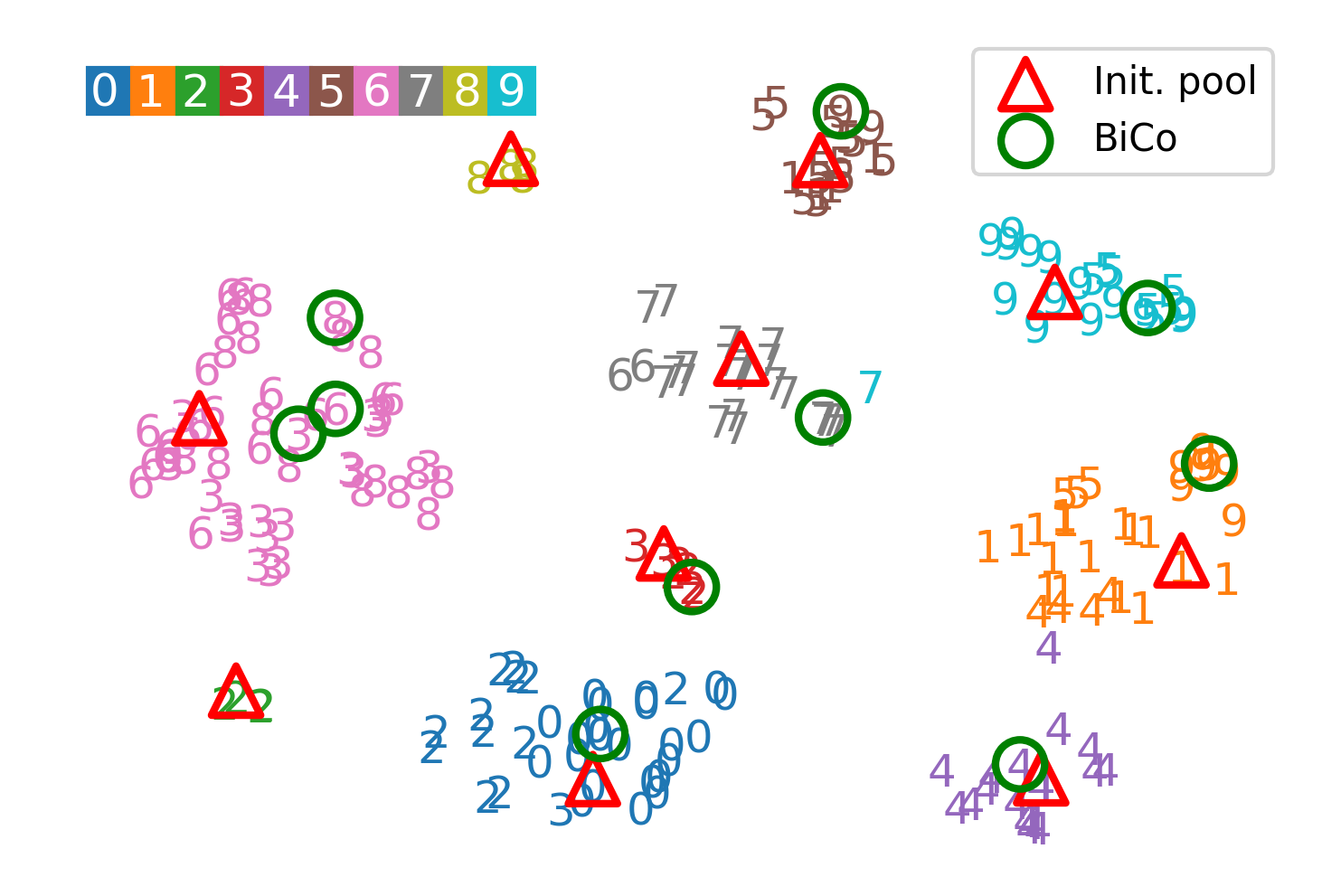}
  \vspace{-2mm}
  \caption{The selected batch of samples for label query by our method (green circles) in the first round of active learning on Spoken Digit data set, when the model is trained on the initial pool (red triangles). The digit values denote the true classes, whereas colors denote predicted classes. The points chosen by our method represent a diverse batch where $6$ out of $10$ points are misclassified. \label{fig:tsne}} 
  \vspace{-3mm}
\end{figure*}

We compare the methods in the challenging setting of small labeled pools ($n_\textup{labeled} \leq 200 $) and perform the acquisition in batches of size $m=10$ starting with $10$ and $50$ labeled samples for  Spoken Digit and Speech Commands, respectively. The starting labeled pools are guaranteed to contain at least one sample from each class. We retrain the models from scratch in every active learning round.

The results in Figure \ref{fig:batch-al} show a significant advantage of our method over other acquisition strategies with only a small number of labeled samples. Especially for Speech Commands, some acquisition strategies suffer from redundancy in the selected batch and, consequently, underperform compared to uniform sampling. We were unable to achieve good performance with the consistency-based acquisition \citep{gao2019consistency} on Speech Commands---this phenomenon was also observed by the authors when starting the method with only a few labeled samples, who refer to it as ``cold start failure''. We also evaluated starting the consistency-based acquisition after a larger number of labeled samples have been acquired by uniform sampling, but the method did not outperform uniform sampling.

\looseness -1 We can gain insight into our proposed method by inspecting the chosen batches of points in the active learning round. For this, we plot the acquisitions in the first round of active learning on Spoken Digit data set in Figure \ref{fig:tsne}, which represents points by their last layer embeddings (after training the network on the initial pool with $60.33\%$ test accuracy) mapped to two dimensions by t-SNE \citep{maaten2008visualizing}. The points chosen by our method represent a diverse batch where $6$ out of $10$ points are misclassified.

\subsection{Dictionary Selection for Compressed Sensing} \label{subsec:dict-sel-exp} In this section, we showcase our framework for selecting dictionary measurement adaptively and incrementally on two examples in Figure \ref{fig:compressed_sensing}: a \emph{synthetic} data set containing a set of random sparse vectors, and the recovery of MNIST digits using a variational autoencoder as the generative model for the images. The baseline algorithms are randomly sampled measurements with normally distributed entries, which satisfy the RIP property with high probability, and approximate-greedy, which is inspired by the heuristics of \citet{icml2010_073} to speed up the greedy algorithm by picking the measurements with the largest average inner product between the signal and the measurements. The classical greedy algorithm is too expensive given the dictionary sizes used here.  The dictionary of linear measurements is chosen as a set of random matrices with entries distributed according to the unit normal distribution, or a wavelets basis for MNIST as done by  \citet{bora2017compressed}, which is a more challenging baseline since not necessarily all elements are equally sparse. In most cases, the compression ratio significantly improves when using our bilevel method.

\subsection{Computational Cost} \label{subsec:comp-cost}

The time complexity of our algorithm depends on the variant and the  model. We now discuss the case of batch forward selection for neural networks. The time complexity depends linearly i) on the number of outer iterations, which equals the coreset size divided by the forward step batch size ($mb^{-1}$)---additionally,  times the number of weight optimization rounds per step if weighting is used ($t_w$);  ii) the number of iterations for solving the inner problem in each step ($t_\textup{inner}$); iii) the complexity of gradient calculations ($g$); iv) the truncation in the Neumann series/the number of conjugate gradient iterations ($t_H$); resulting in the total time complexity $\mathcal{O}(mb^{-1}g(t_\textup{inner} + t_H))$ for unweighted bilevel coresets.

\looseness -1 The proxy reformulation reduces the number of parameters to the order of the dimension of the Nyström features $\mathcal{O}(q)$.  On the other hand, the proxy reformulation introduces the overhead of calculating the proxy kernel, which might be a significant overhead for deep neural networks. We measure the time for generating coresets with the CNTK Nyström proxy with $q=512$ from a data set of $1000$ points for the small CNN (SCNN) described in Section \ref{subsec:cl-streaming} and for the WideResNet-16-4 from Section \ref{subsec:comparison}. We calculate the corresponding NTKs without batch normalization and pooling with the library of \citet{neuraltangents2020} on a single GeForce GTX $1080$ Ti GPU, whereas the coreset selection is performed on a single CPU. The results are shown in Table \ref{table:runtimes}.

\looseness -1  Without the proxy reformulation, a single implicit gradient calculation (Equation \ref{eq:implicit-gradient}) incurs the cost of calculating  $\partial g / \partial \theta$  and $\partial^2 f /\partial \theta \partial w^\top$ and the cost of the Neumann series approximation. Each of these operations has to be performed in minibatches, requiring multiple backpropagation steps. We measure the cost of these operations for WideResNet-16-4 in Table \ref{table:runtimes}, totaling to two minutes per implicit gradient calculation---for reference, we need $84$ implicit gradient calculations for generating the coreset of size $23500$ for CIFAR-10 in Figure \ref{fig:no-proxy}.

Another important consideration both in the proxy and the standard form is how to solve the inner optimization problem after an implicit gradient step. In all our applications except for deep neural networks Section \ref{subsec:comparison}, we resume the inner optimization after the gradient update with the optimal parameters found before the update and perform a small number of inner update steps. For deep neural networks trained with learning rate schedules, we find it beneficial to retrain our models from scratch after each forward batch selection step. A promising future direction is to speed up the selection process without a proxy for neural networks by eliminating the need for retraining from scratch. 

\begin{figure}[t!]
    \centering
    \includegraphics[width=\textwidth]{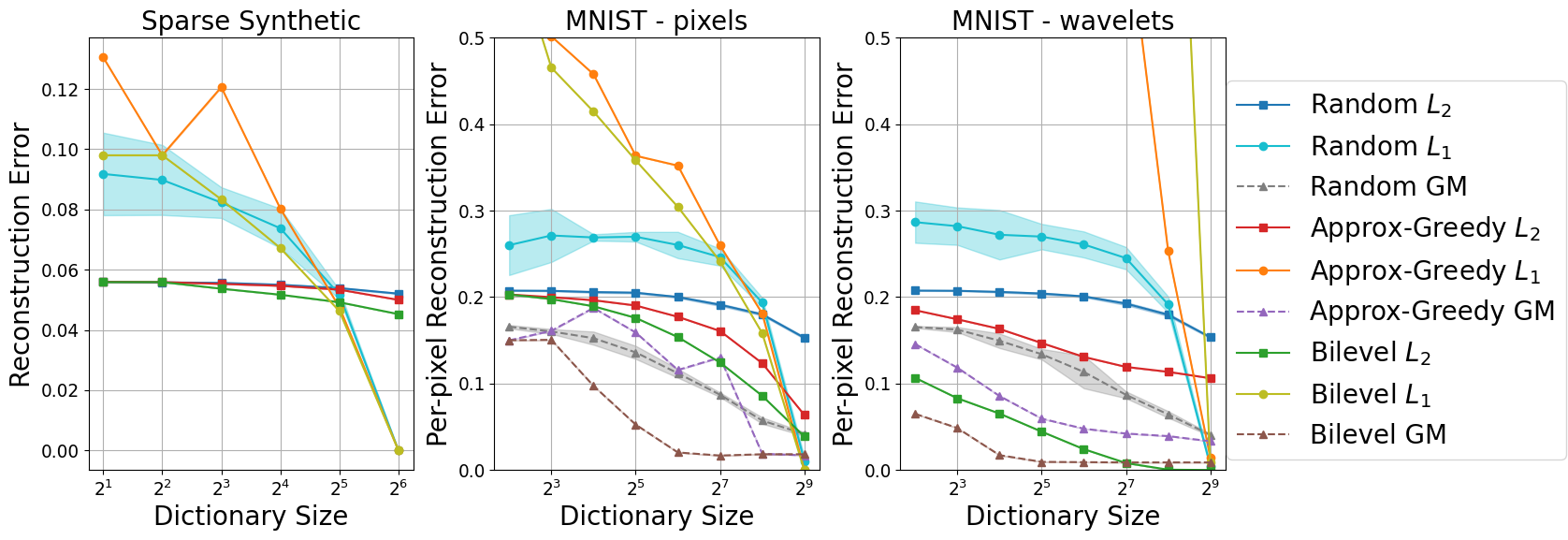}
    \caption{Reconstruction error over the dictionary of signals plotted against the subset size of measurements. The recovery methods $L_2$, $L_1$ and generative model (GM) are compared with different algorithms to generate coreset of measurements for recovery: random, approx-greedy and bilevel. The size the dictionary is 16384, 786 and 786 from left to right. }
    \label{fig:compressed_sensing}
\end{figure}

\begin{table}[t!]
\centering
\begin{minipage}[t]{0.65\textwidth}
\centering
\begin{tabular}{@{}ccc@{}}
\toprule
\textbf{Op}                   & \textbf{SCNN} & \textbf{WideResNet-16-4} \\ \midrule
\textbf{NTK calc.} & $5.1$ s                                                          & $40.2$ s                                                            \\ 
\textbf{Coreset $\mathbf{100}$} & $29.8$ s                                                         & $31.0$ s                                                            \\
\textbf{Coreset $\mathbf{400}$} & $150.6$ s                                                         & $154.4$ s                                                            \\\bottomrule
\end{tabular}
\vspace{-2mm}
\end{minipage}
\hfill
\begin{minipage}[t]{0.33\textwidth}
\centering
\begin{tabular}{@{}cc@{}}
\toprule
\textbf{Op}                   & \textbf{Time} \\ \midrule
$\bm{\partial g / \partial \theta}$ & $29.4$ s                     \\ 
\textbf{Neumann s.} & $18.9$ s  \\
$\bm{\partial^2 f /\partial \theta \partial w^\top}$ & $70.8$ s      \\\bottomrule
\end{tabular}
\end{minipage}
\caption {Left: runtimes for generating coresets out of $1000$ points with CNTKs\label{table:runtimes}. Right: a single implicit gradient calculation step for a WideResNet-16-4 with Neumann series approximation with $100$ terms on CIFAR-10.}
\vspace{-2mm}
\end{table}

\newpage

\section{Discussion and Conclusion}
 We presented a generic coreset construction framework applicable to any twice differentiable model without requiring model-specific adaptations.  We proposed several variants for scaling up the basic algorithm to large coresets and large models. We showed that our method is effective for a wide range of models in various settings, outperforming specialized coreset constructions and other data summarization methods.

\subsection{Limitations} We provide guarantees only for the  case where the overall optimization objective $G(w)$ is convex. Due to the hardness of the cardinality-constrained bilevel optimization problem, our method is only a heuristic for the non-convex settings. 
Except for the binary logistic regression experiments or kernelized linear regression, the cost of our proposed coreset construction is higher than the cost of training the model on the full data set. This contrasts with some of the previous coreset constructions' goal to speed up the training process. Our method is thus suited for settings with memory or human resource constraints, as well as when the summary is reused (e.g., in hyperparameter tuning)---settings for which we demonstrated  the effectiveness of our approach empirically in Section \ref{sec:experiments}.

 
\subsection{Future Work} 

\looseness -1 The flexibility of our framework in accommodating different upper and lower level objectives allows for various extensions and applications. While we discussed some in this work, there are several promising directions, e.g., the framework could be extended to Bayesian inference by using objectives from variational inference. Furthermore, the idea of formulating subset selection as a cardinality-constrained bilevel optimization problem is very general and can be applied to problems besides coreset construction. Some notable examples include basis selection for the Nyström approximation, feature selection and neural network pruning.


\acks{This research was supported by the SNSF grant 407540\_167212 through the NRP 75 Big Data program, by the European Research Council (ERC) under the European Union’s Horizon 2020 research, innovation programme grant agreement No 815943, and by the Swiss National Science Foundation through the NCCR Catalysis.}


\newpage

\appendix
\section{Connection to Experimental Design} \label{sec:app-exp-design}
In this section, the weights are assumed to be binary, i.e., $w \in \{0,1\}^n$. We will use a shorthand $X_S$ for matrix where only rows of X whose indices are in $S \subset [n]$ are selected. This will be equivalent to selection done via the diagonal matrix $D(w)$, where $i \in S$ corresponds to $w_i=1$ and zero otherwise.
Additionally, let $\hat{\theta}$ be a minimizer of the following loss,
\begin{equation}\label{eq:inner}
\hat{\theta} = \argmin_\theta \sumin w_i (x_i^\top \theta -y_i)^2+\lambda\sigma^2||\theta||_2^2
\end{equation}
which has the following closed form,
\begin{equation}\label{eq:closed-form}
    \hat{\theta}_S = (X_S^\top X_S + \lambda \sigma^2 I)^{-1}X_S^\top y_S.
\end{equation}

\paragraph{Frequentist Experimental Design.}
Under the assumption that the data follows the linear model $y=X \theta + \epsilon$, where $\epsilon \sim \mathcal{N}(0,\sigma^2)$, we can show that the bilevel coreset framework instantiated with the inner objective \eqref{eq:inner} and $\lambda = 0$, with various choices of outer objectives is related to frequentist optimal experimental design problems. The following propositions show how different outer objectives give rise to different experimental design objectives.

\begin{proposition}[A-experimental design]\label{prop:A} Under the linear regression assumptions and when $g(\hat{\theta})=\frac{1}{2}\Expect_\epsilon\left[\norm{\theta - \hat{\theta}}^2_2\right]$, with the inner objective is equal to $\eqref{eq:inner}$ with $\lambda = 0$, the objective simplifies, 
\[ G(w) = \frac{\sigma^2}{2} \trace((X^\top D(w)X)^{-1}). \]
\end{proposition}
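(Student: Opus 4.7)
The plan is a direct computation exploiting the closed-form solution of the inner problem together with the linear model assumption. First, I would substitute the model $y = X\theta + \epsilon$ into the closed-form expression \eqref{eq:closed-form} with $\lambda = 0$, which (using the $D(w)$ notation) gives
\begin{equation*}
\hat{\theta}(w) = (X^\top D(w) X)^{-1} X^\top D(w)(X\theta + \epsilon) = \theta + (X^\top D(w) X)^{-1} X^\top D(w)\epsilon.
\end{equation*}
The key observation is that the estimator is unbiased, so the deviation $\hat{\theta} - \theta$ is a linear function of the noise only.

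Next, I would form $\|\theta - \hat{\theta}\|_2^2 = \epsilon^\top D(w) X (X^\top D(w) X)^{-2} X^\top D(w)\epsilon$, take the expectation with respect to $\epsilon \sim \mathcal{N}(0, \sigma^2 I)$, and use the standard identity $\mathbb{E}[\epsilon^\top A \epsilon] = \sigma^2 \operatorname{tr}(A)$ for symmetric $A$. This gives
\begin{equation*}
\mathbb{E}_\epsilon\!\left[\|\theta-\hat{\theta}\|_2^2\right] = \sigma^2 \operatorname{tr}\!\Bigl(D(w) X (X^\top D(w) X)^{-2} X^\top D(w)\Bigr).
\end{equation*}
Then I would apply the cyclic property of the trace and use $D(w)^2 = D(w)$ (since $w$ is binary) to fold the outer $D(w)$ factors back into $X^\top D(w) X$, collapsing the expression to $\sigma^2 \operatorname{tr}((X^\top D(w) X)^{-1})$. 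Multiplying by $\tfrac{1}{2}$ yields the claim.

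There is no real obstacle here beyond bookkeeping; the only subtlety is invoking $D(w)^2 = D(w)$, which requires binary weights as stated at the start of Appendix~\ref{sec:app-exp-design}, and implicitly assuming $X^\top D(w) X$ is invertible (equivalently, that $X_S$ has full column rank), so that $\hat{\theta}$ is well defined. Under these assumptions the derivation is just a few lines of linear algebra.
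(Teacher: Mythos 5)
Your proof is correct and follows essentially the same route as the paper's: substitute the linear model into the closed-form estimator, reduce the outer objective to a quadratic form in $\epsilon$, and apply the trace identity with the cyclic property. The only difference is notational---the paper works with the submatrix $X_S$ and converts to $D(w)$ at the end, whereas you carry $D(w)$ throughout and invoke $D(w)^2 = D(w)$, which is an equivalent piece of bookkeeping given the binary-weight convention stated at the start of the appendix.
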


\begin{proof}
    Using the closed form in \eqref{eq:closed-form}, and model assumptions, we see that $\hat{\theta}_S = \theta + (X_S^\top X_S)^{-1}X_S^\top \epsilon_S$. Plugging this into the  outer objective,
    \begin{eqnarray*}
        g(\hat{\theta}) & = & \frac{1}{2}\Expect_\epsilon\left[\norm{\theta - \hat{\theta}_S}^2_2\right]\\
        & = &\frac{1}{2} \Expect_\epsilon\left[ \norm{(X_S^\top X_S)^{-1}X_S^\top\epsilon_S}_2^2\right] \\
        & = & \frac{1}{2} \Expect_\epsilon\left[ \trace\left( \epsilon_S^\top X_S (X_S^\top X_S)^{-2}X_S^\top\epsilon_S    \right) \right] \\
        & = & \frac{\sigma^2}{2} \trace\left( (X_S^\top X_S)^{-1} \right) \\
        & = & \frac{\sigma^2}{2} \trace\left( (X^\top D(w) X)^{-1} \right)
    \end{eqnarray*}
    where in the third line we used the cyclic property of trace and subsequently the normality of $\epsilon$.
\end{proof}

\begin{proposition}[V-experimental design]\label{prop:V} Under the linear regression assumptions and when $g(\hat{\theta})=\frac{1}{2n}\Expect_\epsilon\left[\norm{X\theta - X\hat{\theta}}^2_2\right]$ and the inner objective is equal to $\eqref{eq:inner}$ with $\lambda = 0$, the objective simplifies, 
\[ G(w) = \frac{\sigma^2}{2n} \trace(X(X^\top D(w)X)^{-1}X^\top). \]
\end{proposition}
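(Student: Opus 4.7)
The plan is to mirror the proof of Proposition \ref{prop:A} (A-experimental design), simply replacing the parameter-space error $\norm{\theta-\hat\theta}_2^2$ with the prediction-space error $\norm{X\theta-X\hat\theta}_2^2$, and then carefully pushing the $X$ through the trace manipulations.

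First, I would use the closed-form solution from Equation \eqref{eq:closed-form} with $\lambda=0$, namely $\hat\theta_S=(X_S^\top X_S)^{-1}X_S^\top y_S$, together with the generative model $y=X\theta+\epsilon$ restricted to the selected rows, to write
\[
\hat\theta_S = \theta + (X_S^\top X_S)^{-1} X_S^\top \epsilon_S,
\]
exactly as in the A-design argument. Substituting into the outer objective yields
\[
X\theta - X\hat\theta_S = -X(X_S^\top X_S)^{-1} X_S^\top \epsilon_S,
\]
so that
\[
\tfrac{1}{2n}\norm{X\theta-X\hat\theta_S}_2^2
= \tfrac{1}{2n}\,\epsilon_S^\top X_S (X_S^\top X_S)^{-1} X^\top X (X_S^\top X_S)^{-1} X_S^\top \epsilon_S.
\]

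Next I would take the expectation over $\epsilon$. Writing the scalar quadratic form as a trace and using $\Expect_\epsilon[\epsilon_S\epsilon_S^\top]=\sigma^2 I$ gives
\[
\Expect_\epsilon\!\left[\epsilon_S^\top M \epsilon_S\right] = \sigma^2\,\trace(M), \qquad M := X_S (X_S^\top X_S)^{-1} X^\top X (X_S^\top X_S)^{-1} X_S^\top.
\]
The final simplification is a double application of the cyclic property: cycling $X_S^\top$ to the front collapses $(X_S^\top X_S)^{-1} X_S^\top X_S$ into the identity, leaving $\trace((X_S^\top X_S)^{-1}X^\top X) = \trace(X(X_S^\top X_S)^{-1}X^\top)$. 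Re-expressing the selection in terms of the weight vector via $X_S^\top X_S = X^\top D(w) X$ yields the claimed
\[
G(w) = \frac{\sigma^2}{2n}\,\trace\!\bigl(X (X^\top D(w)X)^{-1} X^\top\bigr).
\]

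There is no real obstacle here: the argument is a routine linear-algebra computation, structurally identical to Proposition \ref{prop:A}. The only point worth flagging is the implicit assumption that $X_S^\top X_S$ is invertible on the support $S$, which is needed for the closed form and the trace manipulation; this is the same non-degeneracy assumption used in Proposition \ref{prop:A} and can be stated as a mild precondition on $w$ (equivalently, that the selected rows span $\reals^d$).
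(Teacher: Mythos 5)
Your proposal is correct and follows essentially the same route as the paper's proof: plug in the closed form $\hat\theta_S=\theta+(X_S^\top X_S)^{-1}X_S^\top\epsilon_S$, express the expected squared norm as a trace, use $\Expect_\epsilon[\epsilon_S\epsilon_S^\top]=\sigma^2 I$ and the cyclic property to collapse $(X_S^\top X_S)^{-1}X_S^\top X_S$, and rewrite $X_S^\top X_S=X^\top D(w)X$. Your explicit flagging of the invertibility of $X_S^\top X_S$ is a reasonable precondition that the paper leaves implicit.
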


\begin{proof}
    Using the closed form in \eqref{eq:closed-form}, and model assumptions, we see that $\hat{\theta}_S = \theta + (X_S^\top X_S)^{-1}X_S^\top\epsilon_S$. Plugging this in to the  outer objective $g(\hat{\theta})$,
    \begin{eqnarray*}
       G(w) & = & \frac{1}{2n}\Expect_\epsilon\left[\norm{X\theta - X\hat{\theta}_S}^2_2\right]\\
        & = &\frac{1}{2n} \Expect_\epsilon\left[ \norm{X(X_S^\top X_S)^{-1}X_S^\top\epsilon_S}_2^2\right] \\
        & = & \frac{1}{2n} \Expect_\epsilon\left[ \trace\left( \epsilon_S^\top X_S (X_S^\top X_S)^{-1}X^\top X(X_S^\top X_S)^{-1}X_S^\top\epsilon_S    \right) \right] \\
        & = & \frac{\sigma^2}{2n} \trace\left( X(X_S^\top X_S)^{-1}X^\top \right) \\
        & = & \frac{\sigma^2}{2n} \trace\left( X(X^\top D(w) X)^{-1}X^\top \right)
    \end{eqnarray*}
    where in the third line we used the cyclic property of trace and subsequently the normality of $\epsilon$.
\end{proof}

\paragraph{Infinite data limit.}
The following proposition links the data summarization objective and V-experimental design in the infinite data limit $n\rightarrow \infty$.

\begin{proposition}[Infinite data limit]\label{prop:cvg} Under the linear regression assumptions $y = X \theta + \epsilon$, $\epsilon \sim \mathcal{N}(0,\sigma^2I)$, let $g_V$ be 
    \[g_V(\hat{\theta}) = \frac{1}{2n} \Expect_\epsilon\left[\norm{X\theta - X\hat{\theta}}^2_2\right] \]
    the V-experimental design outer objective, and let the summarization objective be,
    \[g(\hat{\theta}) = \frac{1}{2n}\Expect_\epsilon\left[\sum_{i=1}^{n} (x_i^\top \hat{\theta} - y_i)^2\right]. \]
For all $\hat{\theta}_S$ in Eq. \eqref{eq:closed-form}, we have   \[ \lim_{n \rightarrow \infty}  g(\hat{\theta}_S)- g_V(\hat{\theta}_S)=\frac{\sigma^2}{2}.\]
\end{proposition}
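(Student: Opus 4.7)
The plan is to expand $g(\hat\theta_S)$ using the linear model $y_i = x_i^\top \theta + \epsilon_i$, isolate $g_V(\hat\theta_S)$ as one of the resulting terms, and show the remaining discrepancy converges to $\sigma^2/2$. Substituting and expanding the square gives
\begin{align*}
g(\hat\theta_S) &= \frac{1}{2n}\mathbb{E}_\epsilon\Big[\sum_{i=1}^n (x_i^\top(\hat\theta_S - \theta) - \epsilon_i)^2\Big]\\
&= \underbrace{\frac{1}{2n}\mathbb{E}_\epsilon\big[\|X(\hat\theta_S - \theta)\|_2^2\big]}_{=\, g_V(\hat\theta_S)} \;-\; \frac{1}{n}\mathbb{E}_\epsilon\Big[\sum_{i=1}^n x_i^\top(\hat\theta_S - \theta)\,\epsilon_i\Big] \;+\; \frac{\sigma^2}{2}.
\end{align*}
The problem thus reduces to showing that the cross term $T_n := \tfrac{1}{n}\mathbb{E}_\epsilon[\sum_i x_i^\top(\hat\theta_S - \theta)\epsilon_i]$ vanishes as $n \to \infty$.

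To bound $T_n$, I would use the closed-form decomposition $\hat\theta_S - \theta = -\lambda\sigma^2 A_S^{-1}\theta + A_S^{-1}X_S^\top \epsilon_S$ with $A_S := X_S^\top X_S + \lambda\sigma^2 I$, obtained by substituting $y_S = X_S\theta + \epsilon_S$ into Eq.~\eqref{eq:closed-form} and using $A_S^{-1}X_S^\top X_S = I - \lambda\sigma^2 A_S^{-1}$. The deterministic bias $-\lambda\sigma^2 A_S^{-1}\theta$ contributes $0$ after taking expectation since $\mathbb{E}[\epsilon_i]=0$. For the stochastic term, $\mathbb{E}[\epsilon_S\epsilon_i] = 0$ for every $i \notin S$ by independence, so only indices $i \in S$ contribute. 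Using $\mathbb{E}[\epsilon_S\epsilon_i] = \sigma^2 e_i^{(S)}$ for $i \in S$, a direct calculation yields
\[
\mathbb{E}_\epsilon\Big[\sum_{i=1}^n x_i^\top(\hat\theta_S - \theta)\,\epsilon_i\Big] \;=\; \sigma^2 \sum_{i \in S} x_i^\top A_S^{-1} x_i \;=\; \sigma^2 \,\text{tr}(A_S^{-1} X_S^\top X_S).
\]
Writing $A_S^{-1}X_S^\top X_S = I - \lambda\sigma^2 A_S^{-1}$ shows this trace lies in $[0, d]$, so the right-hand side is bounded above by $\sigma^2 d$ independently of $n$. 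Hence $T_n = O(1/n) \to 0$, and the claim $\lim_{n\to\infty}\big(g(\hat\theta_S) - g_V(\hat\theta_S)\big) = \sigma^2/2$ follows immediately.

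The main obstacle, to the extent there is one, is bookkeeping: correctly separating the contributions from indices in $S$ versus outside $S$ via independence of the noise components, and collapsing the cross term to a trace that is uniformly bounded in $n$. The regularized case $\lambda > 0$ covers the statement as written; the unregularized specialization $\lambda = 0$ works identically provided $X_S^\top X_S$ is invertible so that $\hat\theta_S$ is well-defined.
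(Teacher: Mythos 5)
Your proof is correct and follows essentially the same route as the paper's: expand the square around the linear model, identify $g_V$ and the $\sigma^2/2$ term, kill the cross-term contributions from $i\notin S$ by independence, and let the finitely many $i\in S$ contributions vanish under the $1/n$ factor. Your explicit evaluation of the in-$S$ part as $\sigma^2\,\mathrm{tr}(A_S^{-1}X_S^\top X_S)\le\sigma^2 d$ is a slightly sharper justification than the paper's appeal to $S$ being finite, but it is the same argument.
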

\begin{proof}
Since $y_i = x_i^\top \theta +\epsilon_i$, we have,
   \begin{eqnarray*}
 g(\hat{\theta}_S) &=&  \frac{1}{2n}\Expect_\epsilon\left[\sum_{i=1}^{n} (x_i^\top \hat{\theta}_S -  x_i^\top \theta -\epsilon_i)^2 \right]\\
 &=& \frac{1}{2n} \Expect_\epsilon\left[\norm{X\theta - X\hat{\theta}_S}^2_2\right]  -  \frac{1}{n} \Expect_\epsilon\left[\epsilon^\top(X\hat{\theta}_S - X\theta )\right]  +  \frac{1}{2n} \Expect_\epsilon\left[\norm{\epsilon}_2^2\right] \\
 &=& g_V(\hat{\theta}_S) -  \frac{1}{n} \Expect_\epsilon\left[\epsilon^\top(X\hat{\theta}_S - X\theta )\right] + \frac{\sigma^2}{2} \\  
  &=& g_V(\hat{\theta}_S) -  \frac{1}{n} \Expect_\epsilon\left[\sum_{i \in S}\epsilon_i(x_i^\top\hat{\theta}_S - x_i^\top\theta )\right] -  \frac{1}{n} \Expect_\epsilon\left[\sum_{i \in [n] \setminus S}\epsilon_i(x_i^\top\hat{\theta}_S - x_i^\top\theta )\right]+ \frac{\sigma^2}{2} \\ 
 \end{eqnarray*}
We have $\lim_{n\rightarrow\infty}\frac{1}{n} \Expect_\epsilon\left[\sum_{i \in S}\epsilon_i(x_i^\top\hat{\theta}_S - x_i^\top\theta )\right] = 0$ since $S$ is a finite set. Since $\hat{\theta}_S$ is independent of $\epsilon_i$, $i\in [n]\setminus S$,
\begin{equation*}
    \Expect_\epsilon\left[\sum_{i \in [n] \setminus S}\epsilon_i(x_i^\top\hat{\theta}_S - x_i^\top\theta )\right] = \sum_{i \in [n] \setminus S}\Expect_\epsilon\left[\epsilon_i\right]\Expect_\epsilon\left[x_i^\top\hat{\theta}_S - x_i^\top\theta \right]=0.
\end{equation*}
    As a consequence, as $\lim_{n \rightarrow \infty}  g(\hat{\theta}_S)-g_V(\hat{\theta}_S) =\frac{\sigma^2}{2}$.

\end{proof}

Note that Proposition \ref{prop:cvg} does not imply that our algorithm performs the same steps when used with $g_V$ instead of $g$. It only means that the optimal solutions of the problems are converging to  selections with the same quality in the infinite data limit. 

\paragraph{Bayesian V-Experimental Design.}
Bayesian experimental design \citep{Chaloner1995} can be  incorporated as well into our framework. In Bayesian modeling, the ``true'' parameter $\theta$ is not a fixed value, but instead a sample from a prior distribution $p(\theta)$ and hence a random variable.
Consequently, upon taking into account the random nature of the coefficient vector, we can find an appropriate inner and outer objectives.

\begin{proposition}\label{prop:Bayes-V} 
Under Bayesian linear regression assumptions and where $\theta \sim \mathcal{N}(0,\lambda^{-1} I)$, let the outer objective  \[g_V(\hat{\theta})=\frac{1}{2n}\Expect_{\epsilon,\theta}\left[\norm{X \theta - X\hat{\theta}}^2_2\right],\] where expectation is over the prior as well. Furthermore, let the inner objective be Eq. $\eqref{eq:inner}$ with the same value of $\lambda$, then the overall objective simplifies to

\begin{equation}\label{eq:bayes-V-design}
G(w) = \frac{1}{2n} \trace\left(  X\left(\frac{1}{\sigma^2}X^\top D(w) X + \lambda I\right)^{-1}X^\top \right).
\end{equation}
\end{proposition}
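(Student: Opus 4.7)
}

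The plan is to mirror the structure of the proofs of Propositions \ref{prop:A} and \ref{prop:V}, but carefully track the additional contribution of the prior on $\theta$. First, I would start from the closed-form inner solution in Eq. \eqref{eq:closed-form}, writing $A := X_S^\top X_S + \lambda \sigma^2 I$ so that $\hat{\theta}_S = A^{-1} X_S^\top y_S$. Substituting the Bayesian data model $y_S = X_S\theta + \epsilon_S$ and using the identity $A^{-1}X_S^\top X_S = I - \lambda\sigma^2 A^{-1}$, I obtain the bias--variance decomposition
\begin{equation*}
\hat{\theta}_S - \theta \;=\; -\lambda\sigma^2 A^{-1}\theta \;+\; A^{-1} X_S^\top \epsilon_S.
\end{equation*}

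Next, I would apply $X$ to both sides, square, and take expectations over $(\epsilon,\theta)$. Since $\theta$ and $\epsilon$ are independent and both zero-mean, the cross term vanishes, leaving two trace terms. Using $\mathbb{E}[\theta\theta^\top]=\lambda^{-1}I$ and $\mathbb{E}[\epsilon_S\epsilon_S^\top]=\sigma^2 I$ together with the cyclic property of trace, these become
\begin{equation*}
\mathbb{E}\!\left[\|X(\hat{\theta}_S-\theta)\|_2^2\right] \;=\; \lambda\sigma^4\,\trace\!\left(A^{-1}X^\top X A^{-1}\right) + \sigma^2 \trace\!\left(A^{-1} X_S^\top X_S A^{-1} X^\top X\right).
\end{equation*}
The key algebraic step is recognizing that $\lambda\sigma^2 I + X_S^\top X_S = A$, so the two terms combine as $\sigma^2 \trace(X^\top X A^{-1} A A^{-1}) = \sigma^2 \trace(X A^{-1} X^\top)$. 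Dividing by $2n$ yields
\begin{equation*}
G(w) \;=\; \frac{\sigma^2}{2n}\,\trace\!\left(X(X_S^\top X_S + \lambda\sigma^2 I)^{-1}X^\top\right).
\end{equation*}

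Finally, to match the form stated in the proposition, I would pull the factor $\sigma^{-2}$ inside the inverse: since $X_S^\top X_S = X^\top D(w) X$ for binary $w$, we have $(X_S^\top X_S + \lambda\sigma^2 I)^{-1} = \sigma^{-2}\bigl(\tfrac{1}{\sigma^2}X^\top D(w)X + \lambda I\bigr)^{-1}$, cancelling the leading $\sigma^2$. I do not expect any serious obstacle: the cross term vanishing via independence and the cancellation $\lambda\sigma^2 I + X_S^\top X_S = A$ are the only nontrivial observations, and both are essentially bookkeeping once the bias--variance decomposition is written out.
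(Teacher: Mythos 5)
Your proposal is correct and follows essentially the same route as the paper's proof: the same bias--variance decomposition $\hat{\theta}_S-\theta=-\lambda\sigma^2 A^{-1}\theta+A^{-1}X_S^\top\epsilon_S$, the cross term vanishing by independence, the key cancellation $\lambda\sigma^2 I+X_S^\top X_S=A$ collapsing the two traces, and the final rescaling by $\sigma^{-2}$. No gaps.
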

\begin{proof}
    Using the closed form in \eqref{eq:closed-form}, and model assumptions, we see that  $\hat{\theta}_S = (X_S^\top X_S + \lambda \sigma^2 I)^{-1}X_S^\top(X_S\theta + \epsilon_S)$. Plugging this in to the outer objective $g_V(\hat{\theta})$,
    \allowdisplaybreaks
    \begin{eqnarray*}
     G(w) & = & \frac{1}{2n} \Expect_{\epsilon,\theta}\left[ \norm{ X\theta - X\hat{\theta}_S }_2^2 \right] \\
    & = & \frac{1}{2n} \Expect_{\epsilon,\theta}\left[ \norm{X((X_S^\top X_S + \lambda 
    \sigma^2 I)^{-1}X_S^\top(X_S\theta + \epsilon_S)-\theta)}_2^2\right]\\
    & = & \frac{1}{2n} \Expect_{\epsilon,\theta}\left[\norm{X(X_S^\top X_S + \lambda \sigma^2 I)^{-1}X_S^\top\epsilon_S -\sigma^2\lambda X(X_S^\top X_S + \lambda\sigma^2 I)^{-1}\theta }_2^2\right] \\
    & = & \frac{1}{2n} \Expect_{\theta}\left[\norm{\lambda\sigma^2 X(X_S^\top X_S  + \lambda\sigma^2 I)^{-1}\theta}_2^2\right]\\
    & & + \frac{1}{2n} \Expect_\epsilon\left[
    \norm{X(X_S^\top X_S + \lambda \sigma^2 I)^{-1}X_S^\top\epsilon_S}_2^2\right] \\
    & = & \frac{\sigma^2}{2n}
    \trace\left(\lambda\sigma^2(X_S^\top X_S + \lambda\sigma^2 I )^{-1} X^\top X(X_S^\top X_S + \lambda \sigma^2 I )^{-1} \right)\\ & &  + \frac{\sigma^2}{2n}\trace(X_S (X_S^\top X_S + \lambda \sigma^2 I)^{-1} X^\top X(X_S^\top X_S + \lambda\sigma^2 I)^{-1}X_S^\top )\\
    & = & \frac{\sigma^2}{2n} \trace \left( (X_S^\top X_S + \lambda  \sigma^2I )^{-1}X^\top X(X_S^\top X_S + \lambda\sigma^2 I)^{-1} \left( \lambda \sigma^2 I + X_S^\top X_S  \right)
    \right)\\
    & = &  \frac{\sigma^2}{2n} \trace\left(  X(X_S^\top X_S + \lambda \sigma^2 I)^{-1}X^\top \right) \\
    & = & \frac{\sigma^2}{2n} \trace\left(  X(X^\top D(w) X + \lambda \sigma^2 I)^{-1}X^\top \right)
    \end{eqnarray*}
    
    where we used that $\Expect_\epsilon[\epsilon] = 0$, and cyclic property of the trace, and the final results follows by rearranging.

\end{proof}

Similarly to the case of unregularized frequentist experimental design, in the infinite data limit, even the Bayesian objectives share the same optima. The difference here is that the true parameter is no longer a fixed value and we need to integrate over it using the prior. 

\begin{proposition}[identical to Proposition \ref{prop:cvg-main}] \label{prop:cvg-bayes}
 Under the Bayesian linear regression assumptions $y = X \theta + \epsilon$, $\epsilon \sim \mathcal{N}(0,\sigma^2I)$ and $\theta \sim \mathcal{N}(0,\lambda^{-1})$, let $g_V$ be 
    \[g_V(\hat{\theta}) = \frac{1}{2n}\Expect_{\epsilon,\theta}\left[\norm{X\theta - X\hat{\theta}}^2_2\right] \]
    the Bayesian V-experimental design outer objective, and let the summarization objective be,
    \[g(\hat{\theta}) = \frac{1}{2n}\Expect_{\epsilon,\theta}\left[\sum_{i=1}^{n} (x_i^\top \hat{\theta} - y_i)^2\right]. \]
 For all $\hat{\theta}_S$ in Eq. \eqref{eq:closed-form}, we have  \[ \lim_{n \rightarrow \infty}  g(\hat{\theta}_S)- g_V(\hat{\theta}_S)=\frac{\sigma^2}{2}.\]
\end{proposition}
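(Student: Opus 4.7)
The plan is to mirror the proof of Proposition~\ref{prop:cvg} (the frequentist analogue already established in this appendix), with adjustments to account for the extra expectation over the prior $\theta \sim \mathcal{N}(0,\lambda^{-1}I)$. First I would substitute $y_i = x_i^\top\theta + \epsilon_i$ into the definition of $g(\hat{\theta}_S)$ and expand the square, yielding
\begin{equation*}
g(\hat{\theta}_S) \;=\; g_V(\hat{\theta}_S) \;-\; \frac{1}{n}\Expect_{\epsilon,\theta}\!\left[\epsilon^\top(X\hat{\theta}_S - X\theta)\right] \;+\; \frac{\sigma^2}{2},
\end{equation*}
since the remaining square contributes $\Expect_\epsilon[\|\epsilon\|_2^2]/(2n) = \sigma^2/2$. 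It then suffices to show that the cross term vanishes as $n\to\infty$.

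For the cross term, I would split the sum into $i\in S$ and $i\notin S$. Since $|S|\leq m$ is a fixed coreset size, each summand in the first piece is bounded uniformly in $n$, so the $1/n$ prefactor makes its total contribution go to $0$. For the second piece, observe from the closed form in Eq.~\eqref{eq:closed-form} that $\hat{\theta}_S$ is a measurable function of $(\epsilon_S,\theta)$ only. Under the Bayesian linear regression model, the noise coordinates $\{\epsilon_i\}$ are mutually independent and independent of $\theta$, so for $i\notin S$, $\epsilon_i$ is independent of $(\epsilon_S,\theta)$ and hence of $x_i^\top\hat{\theta}_S - x_i^\top\theta$. Combined with $\Expect[\epsilon_i]=0$, this gives
\begin{equation*}
\Expect_{\epsilon,\theta}\!\left[\epsilon_i(x_i^\top\hat{\theta}_S - x_i^\top\theta)\right] \;=\; \Expect[\epsilon_i]\,\Expect\!\left[x_i^\top\hat{\theta}_S - x_i^\top\theta\right] \;=\; 0.
\end{equation*}
Summing the two pieces then yields $\lim_{n\to\infty}\bigl(g(\hat{\theta}_S) - g_V(\hat{\theta}_S)\bigr) = \sigma^2/2$.

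The main subtlety relative to the frequentist version (Proposition~\ref{prop:cvg}) is ensuring that integrating over the prior does not break the independence argument that kills the out-of-$S$ terms. This rests entirely on the standard Bayesian modeling assumption that $\theta$ and $\epsilon$ are independent; with that in place, the argument transfers verbatim from the proof of Proposition~\ref{prop:cvg}. I do not expect any new technical obstacle beyond carefully tracking which random variables $\hat{\theta}_S$ depends on, noting that the ridge-regularized closed form \eqref{eq:closed-form} leaves the dependence structure on $(\epsilon_S,\theta)$ unchanged.
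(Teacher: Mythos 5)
Your proposal is correct and matches the paper's approach: the paper's own proof of this proposition is literally ``follows similarly as in Proposition~\ref{prop:cvg},'' and your argument is exactly that adaptation --- the same decomposition into the cross term plus $\sigma^2/2$, the same split of the cross term over $i\in S$ versus $i\notin S$, with the only new ingredient being the (correctly identified) independence of $\theta$ and $\epsilon$ so that $\hat{\theta}_S$ remains a function of $(\epsilon_S,\theta)$ only and the out-of-$S$ terms still factor and vanish.
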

\begin{proof}
The proof follows similarly as in Proposition \ref{prop:cvg}.
\end{proof}

\begin{lemma}\label{lemma:convex}
Assume $\normp{x_i}{2}<L<\infty$ for all $i\in [n]$ and $w\in \reals^n_+$ s.t. $\normp{w}{2}<\infty$. The function \[G(w) = \frac{1}{2n} \trace\left(  X\left(\frac{1}{\sigma^2}X^\top D(w) X + \lambda I\right)^{-1}X^\top \right) \] is convex and smooth in $w$.
\end{lemma}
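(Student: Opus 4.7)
The plan is to exhibit $G$ as a convex function of an affine function of $w$ and then obtain smoothness from uniform spectral bounds on the matrix being inverted.

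First I would apply cyclicity of the trace to rewrite
\[G(w) = \frac{1}{2n}\,\trace\!\left( X^\top X\, A(w)^{-1} \right),\qquad A(w) := \lambda I + \frac{1}{\sigma^2}\sum_{i=1}^n w_i\, x_i x_i^\top.\]
The map $w\mapsto A(w)$ is \emph{affine}, and for $w\in \reals_+^n$ every rank-one summand is positive semidefinite, so $A(w) \succeq \lambda I \succ 0$ uniformly on the domain and all the quantities below are well defined.

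For convexity, set $C := X^\top X \succeq 0$ and consider $\phi(M) := \trace(C M^{-1})$ on the cone of symmetric positive definite matrices. Differentiating $(M+t\Delta)^{-1}$ twice in $t$ gives the standard identity
\[\tfrac{d^2}{dt^2}\Big|_{t=0}\phi(M+t\Delta) \;=\; 2\,\trace\!\bigl(C\,M^{-1}\Delta M^{-1}\Delta M^{-1}\bigr).\]
A whitening substitution $\widetilde C := M^{-1/2} C M^{-1/2}$, $\widetilde \Delta := M^{-1/2}\Delta M^{-1/2}$, combined with cyclicity of the trace, rewrites this as $2\,\trace(\widetilde C\,\widetilde \Delta^2) = 2\,\|\widetilde \Delta\,\widetilde C^{1/2}\|_F^2 \geq 0$. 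Hence $\phi$ is convex on the PD cone, and $G = (2n)^{-1}\,\phi\circ A$ is convex in $w$ as a convex function composed with an affine map.

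For smoothness, the uniform lower bound $A(w)\succeq \lambda I$ gives $\|A(w)^{-1}\|_{op}\leq 1/\lambda$, so $w\mapsto A(w)^{-1}$ (hence $G$) is real-analytic on $\reals_+^n$. To obtain an explicit Lipschitz constant for $\nabla G$ I would compute the Hessian action along $v\in\reals^n$: setting $\Delta_v := \sigma^{-2}\sum_i v_i x_i x_i^\top$, the identity above yields
\[v^\top \nabla^2 G(w)\, v \;=\; \frac{1}{n}\,\trace\!\bigl(C\, A(w)^{-1} \Delta_v A(w)^{-1} \Delta_v A(w)^{-1}\bigr) \;\leq\; \frac{1}{n}\,\|\widetilde \Delta_v\|_{op}^2\,\trace(\widetilde C).\]
Using $\|x_i\|_2 < L$ and the triangle inequality on the rank-one sum, $\|\widetilde \Delta_v\|_{op} \leq \lambda^{-1}\sigma^{-2} L^2 \sqrt{n}\,\|v\|_2$, while $\trace(\widetilde C) \leq \lambda^{-1}\,\trace(X^\top X) \leq n L^2/\lambda$. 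These combine to give $\|\nabla^2 G(w)\|_{op} \leq n L^6 / (\sigma^4 \lambda^3)$ uniformly in $w\in\reals_+^n$, which is the desired smoothness (Lipschitz gradient).

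The main delicate point I expect is the whitening manipulation in the convexity step: it is essential that $C = X^\top X \succeq 0$, since only then can one take $\widetilde C^{1/2}$ and recognize the second directional derivative as a squared Frobenius norm; without the PSD structure the identity would only produce the sign-indeterminate expression $\trace(\widetilde C\,\widetilde \Delta^2)$. Once that step is pinned down, the smoothness bound is a routine operator-norm calculation driven entirely by the uniform bound $\|A(w)^{-1}\|_{op}\leq 1/\lambda$ and the data bound $\|x_i\|_2 < L$.
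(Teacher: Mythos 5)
Your proof is correct, and it reaches the same smoothness constant $nL^6/(\lambda^3\sigma^4)$ as the paper, but by a genuinely different route. The paper computes the Hessian of $G$ entrywise via directional derivatives, recognizes it as the Hadamard product $2\left(XF^+(w)X^\top\right)\circ\left(XF^+(w)X^\top XF^+(w)X^\top\right)$ of two PSD matrices, and invokes the Schur product theorem to get positive semidefiniteness; smoothness is then obtained by bounding $\lambda_{\max}$ of the Hessian by its trace and estimating the diagonal entries. You instead factor $G$ as a convex function of an affine map: you establish convexity of $M\mapsto\trace(CM^{-1})$ on the positive definite cone via the second directional derivative and a whitening argument (correctly flagging that $C=X^\top X\succeq 0$ is what lets you write $\trace(\widetilde C\widetilde\Delta^2)$ as a squared Frobenius norm), and then compose with the affine map $w\mapsto\lambda I+\sigma^{-2}\sum_i w_i x_ix_i^\top$, which lands in the PD cone precisely because $w\in\reals_+^n$. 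Your approach buys a cleaner conceptual picture (it isolates a classical matrix-convexity fact and makes the role of the nonnegativity constraint on $w$ explicit) and avoids the Schur product theorem entirely; the paper's approach produces an explicit closed form for the Hessian in $w$-coordinates, which is arguably more informative if one wants to reason about the outer optimization directly. Your smoothness bound, working with the quadratic form $v^\top\nabla^2G(w)v$ rather than the trace of the Hessian, is also slightly more direct, and all the intermediate operator-norm estimates ($\|v\|_1\le\sqrt n\|v\|_2$, $\|A(w)^{-1}\|_{op}\le 1/\lambda$, $\trace(\widetilde C)\le nL^2/\lambda$) check out.
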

\begin{proof}
We will show that the Hessian of $G(w)$ is positive semi-definite (PSD) and that the maximum eigenvalue of the Hessian is bounded, which imply the convexity and smoothness of $G(w)$ .

For brevity, we work with 
$\hat{G}(w) =  \trace\left(  X\left(X^\top D(w) X + \lambda\sigma^2 I\right)^{-1}X^\top \right)$ where $\frac{\sigma^2}{2n}\hat{G}(w)=G(w)$. In addition, denote $F(w) = X^\top D(w) X + \lambda\sigma^2 I$ and $F^+(w) = \left(X^\top D(w) X + \lambda\sigma^2 I \right)^{-1}$ s.t. $F(w) F^+(w) = I$. First, we would like to calculate $\frac{\partial \hat{G}(w)}{\partial w_i}$, for which we will use directional derivatives:
\begin{eqnarray*}
D_v \hat{G}(w) & = &\lim_{h \to 0} \frac{\hat{G}(w+hv) - \hat{G}(w)}{h} \\
            & = & \trace \left( X \left(\lim_{h \to 0} \frac{F^{+}(w+hv) - F^{+}(w)}{h} \right)X^\top \right) \\ 
            & = & \trace \left( X \left(\lim_{h \to 0} F^{+}(w+hv) \cdot \frac{F(w) - F(w+hv)}{h} \cdot  F^{+}(w) \right)X^\top \right) \\ 
            & \overset{\textup{def. of } F}{=} & -\trace \left( X \left(\lim_{h \to 0} F^{+}(w+hv) \cdot \frac{\not{h} X^\top D(v)X}{\not{h}} \cdot  F^{+}(w) \right)X^\top \right) \\ 
            & = & -\trace \left( X  F^{+}(w)   X^\top D(v)X  F^{+}(w) X^\top \right) \\
\end{eqnarray*}
To get $\frac{\partial \hat{G}(w)}{\partial w_i}$, we should choose as direction $v_i := (0,\dots, 0,1,0,\dots,0)^\top$ where $1$ is on the $i$-th position. Since $X^\top D(v_i) X = x_ix_i^\top$, we have that:
\begin{eqnarray*}
\frac{\partial \hat{G}(w)}{\partial w_i} = D_{v_i} \hat{G}(w) & = &  -\trace \left( X  F^{+}(w)   x_ix_i^\top F^{+}(w) X^\top \right) \\
&  \overset{\textup{cyclic prop } \trace}{=} &  -x_i^\top F^{+}(w) X^\top X  F^{+}(w) x_i \\
\end{eqnarray*}
We will proceed similarly to get $\frac{\partial^2 \hat{G}(w)}{\partial w_j \partial w_i}$.
\begin{eqnarray*}
D_v \frac{\partial \hat{G}(w)}{\partial w_i} &=& -x_i^\top  \lim_{h \to 0} \frac{F^+(w+hv) X^\top X  F^+(w+hv) - F^+(w) X^\top X  F^+(w) }{h} x_i \\
             &=& x_i^\top \lim_{h \to 0} F^+(w+hv) \cdot \frac{  F(w+hv) F^+(w) X^\top X  }{h} \cdot  F^+(w)x_i \\
             & & - x_i^\top \lim_{h \to 0} F^+(w+hv) \cdot \frac{X^\top X  F^+(w+hv) F(w)  }{h} \cdot  F^+(w)x_i 
\end{eqnarray*}
Now, since, 
\begin{eqnarray*}
F(w+hv) F^+(w)&=& (F(w) + hX^\top D(v)X)  F^+(w)\\
            &=& I + hX^\top D(v)X F^+(w) \\ 
F^+(w+hv) F(w)&=& F^{+}(w+hv) (F(w+hv) - hX^\top D(v)X) \\ &=& I - hF^+(w+hv)X^\top D(v)X 
\end{eqnarray*}
we have 
\begin{eqnarray*}
D_v \frac{\partial \hat{G}(w)}{\partial w_i} &=& x_i^\top  F^+(w) \left(X^\top D(v)X F^+(w)X^\top X  + X^\top X F^+(w)X^\top D(v)X  \right)  F^+(w) x_i \\
&=& 2  x_i^\top  F^+(w)X^\top D(v)X F^+(w)X^\top X F^+(w) x_i
\end{eqnarray*}
Choosing $v_j$ as our directional derivative, we have:
\begin{eqnarray*}
\frac{\partial^2 \hat{G}(w)}{\partial w_j \partial w_i} = D_{v_j} \frac{\partial \hat{G}(w)}{\partial w_i} & = &  2  \left(x_i^\top  F^+(w)x_j\right)\left(x_j^\top F^+(w)X^\top X F^+(w) x_i\right)\\
 & = &  2  \left(x_j^\top  F^+(w)x_i\right)\left(x_j^\top F^+(w)X^\top X F^+(w) x_i\right)
\end{eqnarray*}
from which we can see that we can write the Hessian of $\hat{G}(w)$ in matrix form as:
\begin{equation*}
    \nabla^2_w \hat{G}(w) = 2  \left(X  F^+(w)X^\top\right) \circ \left(X F^+(w)X^\top X F^+(w) X^\top\right)
\end{equation*}
where $\circ$ denotes the Hadamard product. Since $F^+(w)$ is PSD it immediately follows that $X  F^+(w)X^\top$ and $X F^+(w)X^\top X F^+(w) X^\top$ are PSD. Since the Hadamard product of two PSD matrices is PSD due to the \emph{Schur product theorem}, it follows that the Hessian $\nabla^2_w \hat{G}(w) $ is PSD and thus $G(w)$ is convex.

As for smoothness, we need the largest eigenvalue of the Hessian to be bounded:
\begin{eqnarray*}
    \lambda_{\max} (\nabla^2_w \hat{G}(w)) &\leq& \trace  (\nabla^2_w \hat{G}(w))\\
    &=& 2\sumin \left(X  F^+(w)X^\top\right)_{ii} \left(X F^+(w)X^\top X F^+(w) X^\top\right)_{ii} \\
    &=& 2\sumin \left(x_i^\top  F^+(w)x_i\right) \left(x_i^\top  F^+(w)X^\top X F^+(w) x_i \right) \\
    &=&2\sumin \left(x_i^\top  F^+(w)x_i\right) \normp{X F^+(w) x_i}{2}^2 \\ 
    &\leq & 2\sumin   \lambda_{\max} (F^+(w)) \normp{x_i}{2}^2 \normp{X F^+(w) x_i}{2}^2 \\
    &\leq& 2\sumin   \lambda_{\max} (F^+(w)) \normp{x_i}{2}^2 \normp{X}{2}^2 \normp{F^+(w)}{2}^2 \normp{x_i}{2}^2 \\ 
    &= & 2 \lambda_{\max}^3 (F^+(w)) \normp{X}{2}^2 \sumin   \normp{x_i}{2}^4 \\
    &\leq& \frac{2}{\lambda^3\sigma^6}  \normp{X}{2}^2 \sumin   \normp{x_i}{2}^4  \\ 
    &\leq & \frac{2}{\lambda^3\sigma^6}  \normp{X}{F}^2 nL^4  \\
    &\leq& \frac{2n^2L^6}{\lambda^3\sigma^6}, 
\end{eqnarray*}
where in the fifth line we have used the property of the Rayleigh quotient that for any nonzero vector $x$ and self-adjoint matrix $M$ we have that $x^\top M x\leq \lambda_{\max} (M) \normp{x}{2}^2$.
Thus $G$ is $\frac{nL^6}{\lambda^3\sigma^4}$-smooth.

\end{proof}
\section{Connection to Influence Functions} \label{sec:app-influence-functions}

\paragraph{Proof of Proposition \ref{prop:infl-fns}}
\begin{proofarg}{}
Following \citet{koh2017understanding} and using the result of \citet{cook1982residuals}, under twice differentiability and strict convexity of the inner loss, the empirical influence function at $k$ is
\begin{equation} \label{eq:cook-app}
    \frac{\partial \theta^* }{ \partial \eps^\top} \bigg|_{\eps =0} = -\left(\frac{\partial^2 \sumin w^*_{S,i}  \ell_i(\theta^*)}{\partial \theta \partial \theta^\top} \right)^{-1} \nabla_\theta \ell_k(\theta^*).
\end{equation}
Using the chain rule for $\mathcal{I}(k)$:
\begin{align*}
    \mathcal{I}(k)   &= - \frac {\partial \sumin  \ell_i(\theta^*) }{\partial \eps} \bigg|_{\eps =0} \\
    &=- \left(\nabla_\theta \sumin  \ell_i(\theta^*) \right)^\top \frac{\partial \theta^*}{\partial \eps^\top} \bigg|_{\eps =0} \\
    &\stackrel{\textup{Eq. }~\eqref{eq:cook-app}}{=}  \nabla_\theta \ell_k(\theta^*)^\top \left(\frac{\partial^2 \sumin w^*_{S,i}  \ell_i(\theta^*)}{\partial \theta \partial \theta^\top} \right)^{-1}\nabla_\theta \sumin  \ell_i(\theta^*).
\end{align*}
Hence, $\argmax_k \mathcal{I}(k) $ and the selection rule in Equation \eqref{eq:explicit-selection-rule} are the same.
\end{proofarg}

\section{Detailed Experimental Setup for Sections \ref{subsec:variants} and \ref{subsec:comparison}} \label{sec:app-exp-setup}

\paragraph{Variants} All variants in Section \ref{subsec:variants} use $\lambda=10^{-7}$ regularizer in the inner problem. The inner optimization is performed with Adam using step size of $0.01$ as follows: all variants start with an optimization phase on the initial point set with $5 \cdot 10^4$ iterations; then, after each step, an additional $10^4$ GD iterations are performed. We note that performing $10^4$ GD iterations on the entire data set takes $2.3$ seconds on a single GeForce GTX 1080 Ti. 

\paragraph{Binary Logistic Regression} The features of the data sets are standardized to zero mean and unit variance. The logistic regression is solved using batch Adam with step size $0.01$ and $L_2$-penalty of $0.01$. For the bilevel coresets, the selection process is started from $10$ randomly chosen points and the implicit gradients are calculated through $100$ steps of conjugate gradients. For the unweighted version, $50$ gradient descent steps are performed after each selection. For the weighted version, we use Adam with step size $0.01$ to optimize the weights over $150$ outer iterations in each step.

We consider the following baselines:
\begin{itemize}
    \item $k$-means in the feature space, where the chosen subset is the set of centers selected by $k$-means++ \citep{arthur2007k}; we also evaluated $k$-center, which performed worse than $k$-means on all data sets,
    \item coresets for binary logistic regression via sensitivity \citep{huggins2016coresets}, where, for each data set, we choose the best hyperparameter setting from a grid search over $k\in\{ 5, 10, 25 \}$ and $R\in \{ 0.1, 1, 10, 100\}$ --- we refer to \citet{huggins2016coresets} for the details about the hyperparameters $k$ and $R$.
    \item Hilbert coresets \citep{campbell2019automated} solved via Frank-Wolfe \citep{campbell2019automated} and GIGA \citep{Campbell18_ICML} with the norm chosen as the weighted Fisher information distance and with random features of $500$ dimensions. However, we were unable to tune either of these methods to outperform uniform sampling on any of the data sets, hence we do not show their performance.
\end{itemize}

\paragraph{Neural Networks} For training the networks, we use weight decay of $5\cdot 10^{-4}$ and an initial learning rate of $0.1$ cosine-annealed to $0$ over $300 \cdot n/m$ epochs, where $n$ is the full data set size and $m$ is the subset size. Additionally, we use dropout with a rate of $0.4$ for SVHN. For CIFAR-10, we use the standard data augmentation pipeline of random cropping and horizontal flipping, whereas we do not use data augmentation for SVHN.

\section{Continual Learning and Streaming} \label{sec:app-cl-streaming}

\begin{figure}[t!]
         \centering
      \begin{algorithm}[H]
       \caption{Streaming BiCo with Merge-reduce Buffer}
       \label{alg:streaming-coreset}
    \begin{algorithmic}[1]
       \State {\bfseries Input:} stream $S$, number of slots $s$, $\beta$
       \State
       \Procedure{select\_index}{$[(C_1, \beta_1), \dots , (C_{s+1}, \beta_{s+1})]$}
       \If{$s==1$ or $\beta_{s-1} >\beta_s$}
            \State {\bfseries return $s$}
        \Else
            \State {$k = \arg \min_{i \in [1,\dots,s]}  \left( \beta_{i} == \beta_{i+1}\right)$ }
            \State {\bfseries return $k$}
        \EndIf
       \EndProcedure
       
       \State
        \State $\textup{buffer} = [ \,\, ]$
       \State
        \For{$\mathcal{D}_t$ in stream $S$} 
            \State $\mathcal{C}_t = \textup{construct\_coreset}(\mathcal{D}_t)$
            \State $\textup{buffer.append}((\mathcal{C}_t, \beta))$
            \If {$\textup{buffer.size} > s$}
                \State $k = \textup{select\_index(buffer)}$
                \State $\mathcal{C}'=\textup{construct\_coreset}((\mathcal{C}_k, \beta_k),(\mathcal{C}_{k+1}, \beta_{k+1}))$
                \State $\beta' = \beta_k + \beta_{k+1} $
                 \State \textup{delete buffer}$[k+1]$
                \State \textup{buffer}$[k] = (\mathcal{C}', \beta')$
            \EndIf
        \EndFor
    \end{algorithmic}
    \end{algorithm}
    \vspace{-4mm}
\end{figure}

\begin{figure}[t!]
  \centering
   \scalebox{.85}{\input{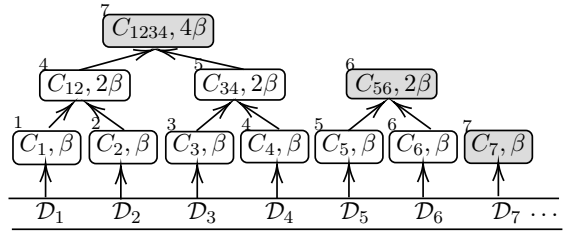}}
    \caption{Merge-reduce on 7 steps with a buffer with 3 slots. The gray nodes are the ones in the buffer after the 7 steps, the numbers in the upper left corners represent the construction time of the corresponding coresets.}
    \label{fig:merge-reduce}
      \vspace{-3mm}
\end{figure}

For the continual learning experiments, we compare the following methods:
\begin{itemize}
    \item Training w/o replay: train after each task without replay memory. Demonstrates how catastrophic forgetting occurs.
    \item Uniform sampling/per task coreset: the network is only trained on the points in the replay memory with different selection methods. 
    \item $k$-means/$k$-center in feature/embedding/gradient space: the per-task selection retains points in the replay memory that are generated by $k$-means++ \citep{arthur2007k}/greedy $k$-center algorithm, where the clustering is done either in the original feature space, in the last layer embedding of the neural network, or in the space of the gradient with respect to the last layer (after training on the respective task). The points that are the cluster centers in different spaces are the ones chosen to be saved in the memory. We note that the $k$-center summarization in the last layer embedding space is the  coreset method proposed for active learning by \citet{sener2018active}.
    \item Hardest/max-entropy samples per task: the saved points have the highest loss after training on each task/have the highest uncertainty (as measured by the entropy of the prediction). Such selection strategies are used, among others, by \citet{coleman2020selection} and \citet{aljundi2019task}.
    \item Training per task with FRCL's/iCaRL's selection: the points per task are selected by FRCL's inducing point selection \citep{titsias2020functional}, where the kernel is chosen as the linear kernel over the last layer embeddings/iCaRL's selection (Algorithm 4 in \citet{rebuffi2017icarl}) performed in the normalized embedding space. 
     \item Gradient matching per task: same as iCaRL's selection, but in the space of gradients with respect to the last layer and jointly over all classes. This is a variant of Hilbert coreset \citep{campbell2019automated} with equal weights, where the Hilbert space norm is chosen to be the squared 2-norm difference of loss gradients with respect to the last layer at the maximum posterior value. 
 \end{itemize}

In the continual learning experiments, we train our networks for $400$ epochs using Adam with step size $5\cdot10^{-4}$ after each task. The loss at each step consists of the loss on a minibatch of size $256$ of the current tasks and loss on the replay memory scaled by $\beta$. For streaming, we train our networks for $40$ gradient descent steps using Adam with step size $5\cdot10^{-4}$ after each batch. For \cite{NIPS2019_9354}, we use a streaming batch size of $10$ for better performance, as indicated in Section 2.4 of the supplementary materials of \cite{NIPS2019_9354}. We tune the replay memory regularization strength $\beta$ separately for each method from $\{0.01, 0.1, 1, 10, 100, 1000\}$ and report the best result on the test set. 

  \begin{table}[h]
    \centering
\begin{tabular}{@{}cccc@{}}
\toprule
\textbf{Method/Memory size} & $\mathbf{50}$      & $\mathbf{100}$     & $\mathbf{200}$     \\ \midrule
CL uniform sampling           & $85.23 \pm 1.84$  & $92.80 \pm 0.79$  & $95.08 \pm 0.30$ \\
CL BiCo          & $91.61 \pm 0.78$  &  $95.81 \pm 0.28$  &  $97.01 \pm 0.41$ \\
Streaming reservoir sampling            & $83.90 \pm 3.18$   & $90.72  \pm 0.97$   & $94.12 \pm 0.61$  \\
Streaming BiCo             &  $85.32 \pm 2.40$  & $92.51 \pm 1.30$  & $95.50 \pm 0.65$   \\ \bottomrule
\end{tabular}
    \caption {Replay memory size study on SMNIST. Our method offers bigger improvements with smaller memory sizes. }
    \label{table:buffer-size-study}
\end{table}

\looseness -1 In our experiments, we used the Neural Tangent Kernel as proxy. It turns out that on the data sets derived from MNIST, simpler kernels such as  RBF are also good proxy choices. To illustrate this, we repeat the continual learning and streaming experiments and report the results in Table \ref{table:rbf}. For the RBF kernel $k(x,y) = \exp(-\gamma\normp{x-y}{2}^2)$ we set $\gamma=5 \cdot 10^{-4}$. While the RBF kernel is a good proxy for these data sets, it fails on harder data sets such as CIFAR-10.

\begin{table*}[h]
    \centering
\begin{tabular}{@{}cccc@{}}
        \toprule
        &  \textbf{Method}                       & \textbf{PMNIST}                   & \textbf{SMNIST}                  \\ \midrule
        \multirow{2}{*}{\STAB{\rotatebox[origin=c]{90}{\large{CL}}}} 
     & BiCo CNTK                                    & $79.33 \pm 0.51$  & $95.81 \pm 0.28$   \\
        &  BiCo RBF                                   & $79.95 \pm 0.81$  & $96.09 \pm 0.32$  \\  
        \midrule
        \multirow{2}{*}{\STAB{\rotatebox[origin=c]{90}{\large{VCL}}}} 
        &  BiCo CNTK                                   & $86.11 \pm 0.25$ & $84.62 \pm 0.89$  \\
         & BiCo RBF & $86.16 \pm 0.25$ & $82.21 \pm 1.35$   \\
         \midrule
        \multirow{2}{*}{\STAB{\rotatebox[origin=c]{90}{\large{Str.}}}} 
        & BiCo CNTK                                    & $74.49 \pm 0.69$  & $92.57 \pm 1.09$  \\
        & BiCo RBF                                   & $75.85  \pm 0.65$ & $92.49  \pm 0.71$  \\
       \bottomrule
        \end{tabular}
    \caption {RBF vs CNTK proxies.}  \label{table:rbf}
        \vspace{-4mm}
\end{table*}
    
\vskip 0.2in
\bibliography{bibliography}

\end{document}